\newtheorem{example}{Example} 
\newtheorem{theorem}{Theorem}
\newtheorem{lemma}{Lemma}
\newtheorem{remark}{Remark}
\newtheorem{definition}{Definition}
\newtheorem{proposition}{Proposition}
\DeclareMathOperator*{\esssup}{\ensuremath{\text{\rm ess\,sup}}}
\DeclareMathOperator*{\dom}{\ensuremath{\text{\rm dom}}}
\DeclareMathOperator*{\range}{\ensuremath{\text{\rm Im}}}
\DeclareMathOperator*{\cl}{\ensuremath{\text{\rm cl}}}
\DeclareMathOperator{\Ker}{\ensuremath{\text{\rm Ker}}}
\DeclareMathOperator{\tr}{\ensuremath{\text{\rm tr}}}
\DeclareMathOperator{\Diag}{\ensuremath{\text{\rm diag}}}
\DeclareMathOperator*{\rank}{\ensuremath{\text{\rm rank}}}
\DeclareMathOperator*{\Spec}{\ensuremath{\text{\rm Sp}}}
\DeclareMathOperator*{\Res}{\ensuremath{\text{\rm Res}}}
\newcommand{\Id}{I}
\providecommand{\norm}[1]{\lVert#1\rVert}
\providecommand{\SVDr}[1]{[\![#1]\!]_r}
\providecommand{\abs}[1]{\lvert#1\rvert}
\newcommand{\scalarp}[1]{{\langle #1\rangle}}
\newcommand{\R}{\mathbb R}
\newcommand{\C}{\mathbb C}
\newcommand{\N}{\mathbb N}
\newcommand{\bigO}{\mathcal O}
\newcommand{\EE}{\ensuremath{\mathbb E}}
\newcommand{\PP}{\ensuremath{\mathbb P}}
\newcommand{\HS}[1]{{\rm{HS}}\left(#1\right)} 
\newcommand{\dderiv}[1]{D^{#1}}
\newcommand{\deriv}{D}
\newcommand{\drift}{a} 
\newcommand{\dir}{s} 
\newcommand{\dirOp}{B} 
\newcommand{\transitionkernel}{p} 
\newcommand{\transferop}{A}
\newcommand{\TO}{A}  
\newcommand{\generator}{L} 
\newcommand{\shiftedgenerator}{\shift\Id{-}L} 
\newcommand{\resolvent}{(\shift \Id\,{-}\,\generator)^{-1}} 
\newcommand{\energy}{\mathfrak{E}}
\newcommand{\eenergy}{\widehat{\mathfrak{E}}}
\newcommand{\regenergy}{\mathfrak{E}_\shift}
\newcommand{\spX}{\mathcal{X}} 
\newcommand{\spC}{\mathcal{C}} 
\newcommand{\spH}{\mathcal{H}} 
\newcommand{\spG}{\mathcal{G}} 
\newcommand{\RKHS}{\mathcal{H}} 
\newcommand{\Lii}{\mathcal{L}^2_\im(\spX)} 
\newcommand{\Liishort}{\mathcal{L}^2_\im} 
\newcommand{\Wii}{\mathcal{W}^{1,2}_\im(\spX)} 
\newcommand{\Wiireg}{\mathcal{W}^{\shift}_\im(\spX)} 
\newcommand{\Wiishort}{\mathcal{W}}
\newcommand{\scalarpH}[1]{{\langle #1\rangle}_{\RKHS}}
\newcommand{\scalarpL}[1]{{\langle #1\rangle}_{\Liishort}}
\newcommand{\scalarpW}[1]{{\langle #1\rangle}_{\Wiishort}}
\newcommand{\shift}{\mu}
\newcommand{\reg}{\gamma}
\newcommand{\fH}{\phi}
\newcommand{\fDk}[1]{D^{#1}\phi}
\newcommand{\fD}{D\phi}
\newcommand{\fDir}{d\phi}
\newcommand{\fDirk}[1]{d_{#1}\phi}
\newcommand{\fL}{\ell\phi}
\newcommand{\fW}{w\phi}
\newcommand{\fWk}[1]{w_{#1}\phi}
\newcommand{\HSr}{{\rm{B}}_r({\RKHS})}
\newcommand{\GRE}{\chi_\shift}
\newcommand{\Cx}{C}
\newcommand{\Wx}{W_\shift}
\newcommand{\Creg}{\Cx_\reg}
\newcommand{\Wreg}{W_{\shift,\reg}}
\newcommand{\Cxy}{T}
\newcommand{\Data}{\mathcal{D}_n}
\newcommand{\ECx}{\widehat{C} } 
\newcommand{\EWx}{\widehat{W}_\shift} 
\newcommand{\ECxy}{\widehat{T} }
\newcommand{\ECreg}{\ECx_\reg}
\newcommand{\EWreg}{\widehat{W}_{\shift,\reg}}
\newcommand{\Kx}{\textsc{K}}
\newcommand{\Kreg}{\Kx_{\reg}}
\newcommand{\Ky}{\textsc{M}}
\newcommand{\Kxy}{\textsc{N}}
\newcommand{\KW}{\textsc{F}_{\shift}}
\newcommand{\KWreg}{\textsc{F}_{\shift,\reg}}
\newcommand{\KSchur}{\textsc{J}_{\shift,\reg}}
\newcommand{\Estim}{G}  
\newcommand{\EEstim}{\widehat{G}}  
\newcommand{\KRR}{\Estim_{\shift,\reg}} 
\newcommand{\EKRR}{\EEstim_{\shift,\reg}} 
\newcommand{\RRR}{\Estim^{r}_{\shift,\reg}}  
\newcommand{\ERRR}{\EEstim^{r}_{\shift,\reg}}  
\newcommand{\TS}{S_\im}  
\newcommand{\ES}{\widehat{S}} 
\newcommand{\TZ}{Z_\shift}  
\newcommand{\EZ}{\widehat{Z}_\shift} 
\newcommand{\TB}{B}  
\newcommand{\EB}{\widehat{B}} 
\newcommand{\TP}{P}  
\newcommand{\EP}{\widehat{P}} 
\newcommand{\Risk}{\mathcal{R}} 
\newcommand{\ExRisk}{\mathcal{E}_{\rm HS}} 
\newcommand{\IrRisk}{\mathcal{R}_0} 
\newcommand{\ERisk}{\widehat{\mathcal{R}}} 
\newcommand{\im}{\pi} 
\newcommand{\hnorm}[1]{\norm{#1}_{\rm{HS}}}
\newcommand{\rate}{\varepsilon}
\newcommand{\error}{\mathcal{E}}
\newcommand{\metdist}{\eta}
\newcommand{\emetdist}{\widehat{\eta}}
\newcommand{\rpar}{\alpha}
\newcommand{\spar}{\beta}
\newcommand{\epar}{\tau}
\newcommand{\rcon}{c_{\rpar}}
\newcommand{\scon}{c_{\spar}}
\newcommand{\econ}{c_\epar}
\newcommand{\bcon}{c_\Wiishort}
\newcommand{\levec}{w^{\ell}}
\newcommand{\revec}{w^{r}}
\newcommand{\erefun}{\widehat{h}}
\newcommand{\elefun}{\widehat{g}}
\newcommand{\gefun}{f}
\newcommand{\egefun}{\widehat{\gefun}}
\newcommand{\eval}{\lambda}
\newcommand{\geval}{\lambda}
\newcommand{\eeval}{\widehat{\eval}}
\newcommand{\gap}{\ensuremath{\text{\rm gap}}}
\newcommand{\sval}{\sigma}
\newcommand{\esval}{\widehat{\sval}}
\newcommand{\one}{\mathbb 1}
\newcommand{\U}{\textsc{U}}
\newcommand{\V}{\textsc{V}}
\newcommand{\betaLang}{(k_bT)}
\newcommand{\normH}[1]{\Vert#1\Vert_{\RKHS}}
\newcommand{\normL}[1]{\Vert#1\Vert_{\Liishort}}
\newcommand{\normW}[1]{\Vert#1\Vert_{\Wiishort}}
\newcommand{\normHL}[1]{\Vert#1\Vert_{\RKHS{\to}\Liishort}}
\newcommand{\normHW}[1]{\Vert#1\Vert_{\RKHS{\to}\Wiishort}}
\newcommand{\Koop}{\resolvent}
\newcommand{\keval}[1]{(\shift{-}\geval_{#1})^{-1}}
\newcommand{\ekeval}[1]{(\shift{-}\eeval_{#1})^{-1}}
\newcommand{\TSa}[1]{Z_{\shift,#1}}
\newcommand{\RTZ}{(\shift\Id{-}\generator)^{{-1}}\TZ}
\title{Learning the Infinitesimal Generator \\ of Stochastic Diffusion Processes}
\author{Vladimir R. Kostic \\ {Istituto Italiano di Tecnologia} \\ {University of Novi Sad} \\ {\tt \small vladimir.kostic@iit.it}\\
Karim Lounici \\
CMAP-Ecole Polytechnique \\
{\tt \small karim.lounici@polytechnique.edu}\\
Helene Halconruy \\ {Telecom Sud-Paris} \\ 
{\tt \small helene.halconruy@telecom-sudparis.eu } \\ 
Timothee Devergne \\ {Istituto Italiano di Tecnologia} \\ {\tt \small timothee.devergne@iit.it}\\
Massimiliano Pontil \\ {Istituto Italiano di Tecnologia} \\ {University College London} \\ {\tt \small massimiliano.pontil@iit.it}
}
\date{May 2024}
\begin{document}


\maketitle

\begin{abstract}
We address data-driven learning of the infinitesimal generator of stochastic diffusion processes, essential for understanding numerical simulations of natural and physical systems. The unbounded nature of the generator poses significant challenges, rendering conventional analysis techniques for Hilbert-Schmidt operators ineffective. To overcome this, we introduce a novel framework based on the energy functional for these stochastic processes. Our approach integrates physical priors through an energy-based risk metric in both full and partial knowledge settings. We evaluate the statistical performance of a reduced-rank estimator in reproducing kernel Hilbert spaces (RKHS) in the partial knowledge setting. Notably, our approach provides learning bounds independent of the state space dimension and ensures non-spurious spectral estimation. 
Additionally, we elucidate how the distortion between the intrinsic energy-induced metric of the stochastic diffusion and the RKHS metric used for generator estimation impacts the spectral learning bounds.
\end{abstract}


\section{Introduction}
%
%
\noindent
Continuous-time processes are often modeled using ordinary differential equations (ODEs), assuming deterministic dynamics. However, real systems in science and engineering that are modeled by ODEs are subject to unfeasible-to-model influences, necessitating the extension of deterministic models through \textit{stochastic differential equations} (SDEs), see \cite{oksendal2013,Pavliotis2014} and references therein. SDEs are advantageous for modeling inherently random phenomena. For instance, in finance, they specify the stochastic process governing asset behavior, a crucial step in constructing pricing models for financial derivatives \cite{Pascucci2011}. 
In atomistic simulations,  
where SDEs are used to  
model the evolution of atomic systems subjected to thermal fluctuations via the Boltzmann distribution \cite{mackey2011time}.

{A diverse range of SDEs can be represented as $
dX_t=a(X_{t})dt+b(X_{t})dW_t$, where $X_0=x$. Here, $W$ denotes a (possibly multi-dimensional) Brownian motion, and the functions $a$ and $b$ are commonly known as the \textit{drift} and \textit{diffusion} coefficients, respectively. Determining these coefficients from one or more trajectories, whether discretized or continuous, has been a key pursuit in "diffusion statistics" since the 1980s, as seen in works like \cite{Kessler_2012_statistical, Kutoyants_1984_parameter, Kutoyants_2013_statistical}. However, uncovering the drift and diffusion coefficients alone does not reveal all the intrinsic properties of a complex system, such as the metastable states of Langevin dynamics in atomistic simulations \cite[see e.g.][and references therein]{tuckerman2023statistical}.  
Consequently, there has been a shift and growing interest in the Infinitesimal Generator (IG) of an SDE, as its spectral decomposition offers a more in-depth understanding of system dynamics and behavior, thus providing a comprehensive picture beyond the mere identification of coefficients.

Recovering the spectral decomposition of the IG can theoretically be achieved by exploiting the well-studied Transfer Operators (TO) \cite[see][and references therein]{Kostic2022}. 
TOs represent the average evolution of state functions (observables) over time and, being linear and amenable to spectral decomposition under certain conditions, they offer a valuable means to interpret and analyze nonlinear systems. However, they require evenly sampled data at a high rate, which may be impractical. Additionally, TO approaches are purely data-driven, complicating the incorporation of partial or full knowledge of an SDE into the learning process. Thus, there is growing interest in learning the IG directly from data, as it can handle uneven sampling and integrate SDE knowledge. The challenge lies in the IG being an unbounded operator, unlike TO which are often well-approximated by Hilbert-Schmidt operators with comprehensive statistical theory \cite{kostic2023sharp}. Unfortunately, the existing statistical theory collapses when applied to unbounded operators, prompting the need to completely rethink the problem 

\textbf{Related work~~} Extensive research has explored learning dynamical systems from data \citep[see the monographs][and references therein]{Brunton2022,Kutz2016}. Analytical models for dynamics are often unavailable, motivating the need for 
data-driven methods. Two prominent approaches have emerged: deep learning-based methods \cite{Bevanda2021,Fan2021,Lusch2018}, effective for learning complex data representations but lacking statistical analysis, and kernel methods \cite{Alexander2020,Bouvrie2017,Das2020,inzerilli2023consistent,Kawahara2016,Klus2019,Kostic2022,kostic2023sharp,OWilliams2015}, offering solid statistical guarantees for the estimation of the TO but requiring careful selection of the kernel function. A related question, tackling the challenging problem of learning invariant subspaces of the TO, has recently led to the development of several methodologies \cite{Kawahara2016, Li_2017_extended,Mardt2018,Tian_2021_kernel, Yeung_2019_learning, Wu2019}, some of which are based on deep canonical correlation analysis \cite{Andrew_2013_deep,Kostic_2023_learning}. In comparison, there have been significantly fewer works on learning the IG and only in very specific settings. In \cite{ZHANG2022diffusionNN} a deep learning approach is developed for Langevin diffusion, while \cite{KLUS2020datadrivenkoopmangenerator} presents an extended dynamical mode decomposition method for learning the generator and clarifies its connection to Galerkin’s approximation. However, neither of these two works provides any learning guarantees. In this respect the only previous work we are aware of are \cite{Cabannes_2023_Galerkin}, revising the Galerkin method for Laplacian diffusion, \cite{hou23c}, presenting a kernel approach for general diffusion SDE with full knowledge of drift and diffusion coefficients, and \cite{Pillaud_Bach_2023_kernelized} addressing Langevin diffusion. As highlighted in Appendix \ref{app:bounds_discussion}, the bounds and analysis in these works are either restricted to specific SDEs or are incomplete. Notably, none of these works proposed an adequate framework to handle the unboundedness of the IG, resulting in an incomplete theoretical analysis and suboptimal rates, sometimes explicitly 
depending on state space dimension. Moreover, the estimators proposed in these works are prone to spurious eigenvalues and do not offer guarantees on the accurate estimation of eigenvalues and eigenfunctions.

\textbf{Contributions~~}
In summary, our main contributions are: {\bf 1)} Underpinning a framework for learning the generator of stochastic diffusion processes in reproducing kernel Hilbert space, using a novel energy risk functional; {\bf 2)} Proposing a reduced-rank estimator with dimension-free learning bounds; {\bf 3)} Establishing first-of-its-kind spectral learning bounds for generator learning; {\bf 4)} Demonstrating the practical advantages of our approach over previous methods. 

\section{Background and the problem}\label{sec:problem}

Our drive to estimate the eigenvalues of the infinitesimal generator for an SDE like \eqref{Eq: SDE} stems from its crucial role in characterizing dynamics in physical systems. This operator's closed form \eqref{Eq: def generator coeff}, relies on the drift $a$ and diffusion coefficient $b$, where we have partial knowledge: $b$ is assumed to be known, but $a$ is not. To compensate for the lack of prior knowledge about $a$, we introduce the system's \textit{energy} as an additional known quantity. Below, we detail the mathematical concepts framing the problem (generator, spectrum, energy) exemplified through the Langevin and Cox-Ingersoll-Ross processes. For detailed list of notation used throughout the paper we refer Table \ref{tab:notation} in the appendix. 

\textbf{Transfer operator and infinitesimal generator~~}
A variety of physical, biological, and financial systems evolve through stochastic processes $X=(X_t)_{t\in\R^+}$, where $X_t\in\spX\subset\mathbb R^d$ denotes the system's state at time $t$. A commonly employed model for such dynamics is captured by \textit{stochastic differential equations} (SDEs) of the form
\begin{equation}\label{Eq: SDE}
dX_t=a(X_{t})dt+b(X_{t})dW_t \quad \text{and} \quad X_0=x,
\end{equation}
where $x\in\spX$, $W=(W_t^1,\dots,W_t^{p})_{t\in\R_+}$ is a $\R^p$-dimensional ($p\in\N$) standard Brownian motion, the \textit{drift} $a:\spX\to\R^d$, and the \textit{diffusion} $b:\spX\to\R^{d\times p}$ 
are assumed to be globally Lipschitz and sub-linear, so that the SDE \eqref{Eq: SDE} admits an unique solution 
$X=(X_t)_{\geqslant 0}$ with values in $(\spX,\mathcal B(\spX))$. Processes akin to equations like \eqref{Eq: SDE} are diverse, spanning models like Langevin and Cox-Ingersoll-Ross processes (see examples below), with broad applications in science and engineering. 
The process $X$ is a continuous-time Markov process with almost surely continuous sample paths whose dynamics is described by a family of probability densities $(p_t)_{t\in\R_+}$ and \textit{transfer operators} $(A_t)_{t\in\R_+}$ such that for all $t\in\R_+$, $E\in\mathcal B(\spX)$, $x\in\spX$ and measurable function $f:\spX\to\R$,
\begin{equation}\label{Eq: semigroup def}
\PP(X_t\in E|X_0=x)=\int_E p_t(x,y)dy \;\,\text{and}\;\, \TO_tf=\int_\spX f(y)p_t(\cdot,y)dy=\EE\big[f(X_t)\,|\,X_0=\cdot\big].
\end{equation}
Evaluating $\TO_tf$ at $x$ yields the expectation of $f$ starting from $x$ and evolving until time $t$, making the transfer operator crucial for understanding $X$ dynamics.
The family $(\TO_t)_{t\in\R_+}$ satisfies the fundamental semigroup equation $\TO_{t+s}=\TO_t\circ \TO_s$, for $s,t\in\R_+$. Here, we focus on the transfer operator's effect on the set $\Lii$, a choice driven by the existence of an invariant measure $\im$ for $\TO_t$ on $(\spX,\mathcal B(\spX))$ which satisfies $\TO_t^*\im=\im$ for all $t\in\R_+$. 
Then, the process $X$ is characterized by the infinitesimal generator $\generator$ defined for every $f\in\Lii$ such that the limit $\generator f=\lim_{t\rightarrow 0^+}(\TO_t f-f)/t$ exists in $\Lii$. 
%
%
%
The operator $\generator$ is closed on its domain $\mathrm{dom}(L)$ which is equal to the Sobolev space
\begin{equation*}
\Wii=\{f\in\Lii\;\vert\; \norm{f}_{\Wiishort}^2=\norm{f}_{\Liishort}+\norm{\nabla f}_{\Liishort}<\infty\}.
\end{equation*}
For SDE dynamics of the form \eqref{Eq: SDE}, we can prove (see  \ref{app:background generator} for details) that $\generator$ is the second-order differential operator given, for any $f\in\Lii$, $x\in\spX$, by
\begin{equation}\label{Eq: def generator coeff}
\generator f(x)
=\nabla f(x)^\top a(x)+\frac{1}{2}\mathrm{Tr}\big[b(x)^\top (\nabla^2 f(x))b(x)\big],
\end{equation}
where $\nabla^2 f=(\partial_{ij}^2 f)_{i\in[d],j\in[p]}$ denotes the Hessian matrix of $f$.
%

\textbf{Spectral decomposition~~} 
Knowing only the drift $a$ and diffusion $b$ is not enough to compute \eqref{Eq: semigroup def} or to understand quantitative aspects of dynamical phase transitions, such as time scales and metastable states. The eigenvalues and eigenvectors of the generator are crucial for capturing these effects. To address the possible unbounded nature of $\generator$, one can turn to an auxiliary operator, the resolvent, which, under certain conditions, shares the same eigenfunctions as $\generator$ and becomes compact. When it exists and is continuous for some $\mu\in\mathbb C$, the operator $\generator_\mu=(\mu\Id-\generator)^{-1}$ is the \textit{resolvent} of $\generator$ and the corresponding \textit{resolvent set} is defined by
%
%
\begin{equation*}
\rho(\generator)=\big\{\mu\in\mathbb C\,|\, (\mu\Id-\generator)\, \text{is bijective and}\,\generator_\mu  \,\text{is continuous}\big\}
\end{equation*}
We assume $\generator$ has a \textit{compact resolvent}, meaning $\rho(\generator)\neq\emptyset$ and there exists $\mu_0\in\rho(\generator)$ such that $(\mu_0\Id-\generator)^{-1}$ is compact. Under this assumption, and given that $(\generator,\mathrm{dom}(\generator))$ is self-adjoint, we can prove the spectral decomposition of the generator (see  \ref{app:operator theory} for details) as follows:
\begin{equation}\label{Eq: spectral dec generator}
\generator = \sum_{i\in \N}\geval_i\,\gefun_i\otimes\gefun_i
\end{equation}
where $(\geval_i){i\in\mathbb{N}}$ are the eigenvalues of $\generator$, and the corresponding eigenfunctions $f_i \in \Lii$, forming an orthonormal basis $(\gefun_i)_{i\in\mathbb{N}}$, are also eigenfunctions of the transfer operator $\TO_t$.

\textbf{Dirichlet forms and energy~~} 
To handle the initial lack of knowledge about the drift $a$, 
we assume to have access to another quantity, called the \textit{energy}, defined as $\mathfrak E(f) = \lim_{t\to 0}\int_\spX (f(f-\TO_tf))/td\im$ for all functions $f\in\Lii$ for which this limit exists, defining in the way the domain $\mathrm{dom}(\mathfrak E)$. The associated \textit{Dirichlet form} is the bilinear form defined by polarization for any $f,g\in\mathrm{dom}(\mathfrak E)$ by
\begin{equation}
\mathfrak E (f,g)=-\int_\spX f (\generator g)d\im=\int_\spX (-\generator f)gd\im.
\end{equation}
For every $f\in\mathrm{dom}(\mathfrak E)$, we have $\mathfrak E(f)=\mathfrak E (f,f)$. As for every $i\in\mathbb N$, $0\leq \mathfrak E(f_i,f_i)=-\int_\spX f_i (\lambda_i f_i)d\im=-\lambda_i$,
we check that the eigenvalues of $\generator$ are negative. To relate $L$  
to Dirichlet form, we assume there exists a \textit{Dirichlet operator} $\dirOp=\dir^\top\nabla$ where $\dir\!=\![\dir_1\,\vert\ldots\vert\,\dir_p]:x\!\in\!\spX\mapsto \dir(x)\!=\![\dir_1(x)\,\vert\ldots\vert\,\dir_p(x)]\!\in\! \mathbb R^{d\times p}$ is a smooth function s.t. $Lf=\dir(\dir^\top\nabla f)=\dir (Bf)$ and so that
\begin{equation*}
\int_\spX (-\generator f)gd\im=\int_\spX (\dir(x) \dir(x)^\top\nabla f(x))^\top\nabla g(x)\im(dx)=\int_\spX (Bf(x))^\top(Bg(x))\im(dx).
\end{equation*}
We get that for any $f\in\mathrm{dom},(\mathfrak E)$ 
\begin{equation}\label{Eq: Dirichlet gradient}
\mathfrak E(f)=\int_\spX \norm{s^\top\nabla f}^2d\im = \EE_{x\sim\im} [\norm{Bf(x)}^2],
\end{equation}
which is reminiscent of the expected value of the \textit{kinetic energy} in quantum mechanics \cite{Haag_2012_quantum}.
%
To illustrate this, we specify the generator and the associated Dirichlet gradient form for two processes: Overdamped Langevin and Cox-Ingersoll-Ross.
%
%
\begin{example}[Langevin]\label{ex:langevin} Let $k_b,T\in\R_+^*$.
The \emph{overdamped Langevin equation} driven by a \emph{potential} $V:\R^d\rightarrow\R$ is given by
%
$dX_t=-\nabla V(X_t)dt+\sqrt{2\betaLang} dW_t \quad \text{and} \quad X_0=x$,  where $k_b$ and $T$ respectively represent the coefficient of friction and the temperature of the system.
%
Its infinitesimal generator $\generator$ is defined by $\generator f=-\nabla V^\top \nabla f+\betaLang\Delta f$, for $f\in \Wii$. Since $\int (-\generator f)g\,d\pi
=-\int \Big[\nabla\Big(\betaLang\nabla f(x)\frac{e^{-\betaLang^{-1} V(x)}}{Z}\Big)\Big]g(x)dx
= \betaLang\int \nabla f^\top \nabla g\,d\pi=\int f(-\generator g)\,d\pi$, generator $\generator$ is self-adjoint and associated to a gradient Dirichlet form with $\dir(x)=\betaLang^{1/2}(\delta_{ij})_{i\in[d],j\in[p]}$.
\end{example}

\begin{example}[Cox-Ingersoll-Ross process]\label{ex:cir} Let $d=1$, $a,b\in\R$, $\sigma\in\R_+^*$. The \emph{Cox-Ingersoll-Ross} process is solution of the SDE $dX_t=(a+bX_t)dt+\sigma\sqrt{X_t}dW_t \quad \text{and} \quad X_0=x$. 
Its infinitesimal generator $\generator$ is defined for $f\in \Lii$ by 
$\generator f=(a+bx)\nabla f+\frac{\sigma^2x}{2}\Delta f$.
By integration by parts, we can check that the generator $\generator$ satisfies
$\int (-\generator f)g\,d\pi
= \int \frac{\sigma^2x}{2}f'(x)g'(x)\,\pi(dx)=\int f(-\generator g)\,d\pi$, and it is associated to a gradient Dirichlet form with $\dir(x)=\sigma\sqrt{x}/\sqrt{2}$.
\end{example}

{\bf Learning in reproducing kernel Hilbert spaces (RKHSs)~~} Throughout the paper we let $\RKHS$ be an RKHS and let $k:\spX\times\spX \to \R$ be the associated kernel function. We let $\fH:\spX \to \RKHS$ be a {\em feature map}~\cite{Steinwart2008} such that $k(x,x^\prime) = \scalarp{\fH(x), \fH(x^\prime)}$ for all $x, x^\prime \in \spX$. We consider RKHSs satisfying $\RKHS \subset \Lii$~\cite[Chapter 4.3]{Steinwart2008}, so that one can approximate $\generator:\Lii \to \Lii$ with an operator $\Estim:\RKHS \to \RKHS$. Notice that despite $\RKHS \subset \Lii$, the two spaces have different metric structures, that is for all $f,g \in \RKHS$, one in general has $\scalarp{f, g}_{\RKHS} \neq \scalarp{f,g}_{\Liishort}$. In order to handle this ambiguity, we introduce the {\em injection operator} $\TS:\RKHS \to \Lii$ such that for all $f \in \RKHS$, the object $\TS f$ is the element of $\Lii$ which is pointwise equal to $f \in \RKHS$, but endowed with the appropriate $\Liishort$ norm. 

Then, the infinitesimal generator restricted to $\RKHS$ is simply $\generator \TS$ which can be estimated by $\TS \Estim$ for some $\Estim\in\HS{\RKHS}$. This approach is based on the embedding $\fL\colon \spX\to\RKHS$ of the generator in the RKHS that can be defined for kernels $k\in\spC^2(\spX \,{\times}\, \spX)$ whenever one knows drift and diffusion coefficients, see  \ref{app:rkhs}, so that the reproducing property $\scalarpH{\fL(x),h} \,{=}\, [\generator \TS h](x)$ holds. Based on this observation \cite{hou23c} developed empirical estimators of $L\TS$ that essentially minimize the risk $\norm{\generator \TS-\TS \Estim}^2_{\HS{\RKHS,\Liishort}} = \EE_{x\sim\im}\norm{\fL(x)-\Estim^*\fH(x)}^2_{\RKHS}$. In scenarios where drift and diffusion coefficients are not known, then $\fL$ becomes non-computable. However if the process has the Dirichlet form \eqref{Eq: Dirichlet gradient}, one can still empirically estimate the Galerkin projection $(\TS^*\TS)^\dagger \TS^*\generator\TS$ onto $\RKHS$, as considered in \cite{Cabannes_2023_Galerkin}, which in fact minimizes the same risk. Yet this approach is problematic due to the unbounded nature of the generator and the associated estimators typically suffer from a large number of spurious eigenvalues around zero, making the estimation of physically most relevant eigenfunctions unreliable even for self-adjoint generators. Conversely, classical numerical methods can compute the leading part of a spectrum without spuriousness issues, suggesting that data-driven approaches should achieve similar reliability. In this paper, we address this problem by designing a novel notion of risk, leading to principled estimators designed to surmount these challenges. 


\section{Novel statistical learning framework}\label{sec:framework}


In this section, we tackle the challenges in developing suitable generator estimators highlighted earlier. To this end, we 
introduce a risk metric for resolvent estimation 
that can be efficiently minimized empirically, leading to good spectral estimation.  Since $\generator$ and $\resolvent$ share the same eigenfunctions, the main idea is to learn the (compact) resolvent, which can be effectively approximated by finite-rank operators, instead of learning the generator directly. 
However, this approach is challenging due to the lack of closed analytical forms for the action of the resolvent.


First, given $\shift>0$, in order to approximate the action of the resolvent on the RKHS by some operator $\Estim\colon\RKHS\to\RKHS$, we introduce its embedding $\GRE\colon\spX\to\RKHS$ via the reproducing property $\scalarpH{\GRE(x),h} = [\resolvent \TS h](x)$, formally given by  
$\GRE(x)= \int_{0}^{\infty}\EE[\fH(X_t)e^{-\shift t}dt\,\vert\, X_0=x]$, see  \ref{app:rkhs} for details. Using this notation, we aim to estimate $[\resolvent \TS h](x) \approx [\Estim h](x)$, $h\in\RKHS$, i.e. the objective is to estimate $\GRE(x)\approx\Estim^*\fH(x)$ $\im$-a.e. 

An obvious metric for the risk would be the mean square error (MSE) w.r.t. distribution $\im$ of the data. However, this becomes intractable since in general $\GRE$ is not computable in closed form when either full or partial knowledge of the process is at hand. 

To mitigate this issue, we introduce a different ambient space in which we study the resolvent

\begin{equation*}
\Wiireg=\{f\in\dom(\generator)\;\vert\; \norm{f}_{\Wiireg}^2\equiv\regenergy[f]=\scalarp{f,(\shiftedgenerator)f}_{\Liishort}   <\infty\},
\end{equation*}
where the norm now balances the energy of an observable $f\colon\spX\to\R$ w.r.t. the invariant distribution $\norm{f}_{\Liishort}^2$ and its energy w.r.t. the transient dynamics $-\scalarpL{f,\generator f}$. 
Indeed 
\begin{equation}\label{eq:regenergy}
\regenergy[f] = \EE_{x\sim\im}[\shift \abs{f(x)}^2 - f(x)[\generator f](x)]=\EE_{x\sim\im} [\shift\,\abs{f(x)}^2 + \norm{\dir(x)^\top\nabla f(x)}^2],    
\end{equation}
where the last equality holds for Dirichlet gradient form \eqref{Eq: Dirichlet gradient}, in which case $\Wiireg$ is simply a weighted Sobolev space. 
Importantly, this energy functional can be empirically estimated from data sampled from $\im$, whenever full knowledge, that is drift and diffusion coefficients of the SDE \eqref{Eq: SDE}, or partial knowledge, i.e. the diffusion coefficient and Dirichlet operator $B$ in \eqref{Eq: Dirichlet gradient}, is at hand. With that in mind, instead of the standard MSE, we introduce the energy-based risk functional as
\begin{equation}\label{eq:true_risk}
\min_{\Estim\colon\RKHS\to\RKHS}\Risk(\Estim)= 
\energy_\shift[\norm{\GRE(\cdot) -\Estim^*\fH(\cdot)}_{\RKHS}].  
\end{equation}
Denoting by $\TZ\colon \RKHS\to\Wiireg$ the canonical injection, \eqref{eq:true_risk} can be equivalently written as
\begin{equation}\label{eq:risk}
\Risk(\Estim) = \norm{\resolvent\TZ-\TZ\Estim}_{\HS{\RKHS,\Wiishort}}^2 = \norm{(\shift\Id\!-\!\generator)^{-1/2}\TS-(\shift\Id\!-\!\generator)^{1/2}\TS\Estim}_{\HS{\RKHS,\Liishort}}^2,  
\end{equation}
where we abbreviated $\Wiishort=\Wiireg$ and used $\TZ^*=\TS^*(\shiftedgenerator)$, recalling that Hilbert-Schmidt and spectral norms for operators $A\colon\RKHS\to\Wiishort$, are 
$\norm{A}_{\HS{\RKHS,\Wiishort}}{=}\sqrt{\tr(A^*(\shiftedgenerator)A)}$
and 
$\norm{A}_{\RKHS\to\Wiishort}{=}\sqrt{\lambda_1(A^*(\shiftedgenerator)A)}\geq\shift^{-1/2}\normHL{A}$. 

Therefore, \eqref{eq:risk} implies that the regularized energy norm, while dominating the classical $\Liishort$ one, exerts a \textit{balancing effect} on the estimation of the resolvent. This leads us to the first general result regarding the well-posedness of this framework.
\begin{restatable}{proposition}{propMainRisk}\label{prop:main_risk}
Given $\shift>0$, let $\RKHS\!\subseteq\!\Wiireg$ be the RKHS associated to kernel $k\in\spC^{2}(\spX\!\times\!\spX)$ such that $\TZ\in\HS{\RKHS,\Wiireg}$, and let $P_\RKHS$ be the orthogonal projector onto the closure of 
$\range(\TZ)\subseteq\Wiireg$.
Then for every $\varepsilon>0$ there exists a finite rank operator $\Estim\colon\RKHS\,{\to}\,\RKHS$ such that 
$\Risk(\Estim) \,{\leq}\, \norm{(I\,{-}\,P_\RKHS)\resolvent\TZ}^2_{\HS{\RKHS,\Wiishort}}\,{+}\,\varepsilon$.
Consequently, when $k$ is universal, 
$\Risk(\Estim) \leq \varepsilon$.
\end{restatable}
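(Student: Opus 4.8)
The plan is to exploit that $\TZ \in \HS{\RKHS,\Wiireg}$ together with compactness of the resolvent $\resolvent$ on $\Wiireg$ to produce a finite-rank approximant by truncating a suitable spectral expansion, and then to identify the irreducible term as the projection error onto $\overline{\range(\TZ)}$. First I would observe that, since $\generator$ is self-adjoint with compact resolvent, the operator $\resolvent=(\shift\Id-\generator)^{-1}$ restricted to the ambient space $\Wiireg$ is bounded (indeed $\norm{\resolvent}_{\Wiishort\to\Wiishort}\le \shift^{-1}$, using $\norm{f}_{\Wiishort}^2=\scalarp{f,(\shiftedgenerator)f}_{\Liishort}$ and the spectral calculus on the spectral decomposition \eqref{Eq: spectral dec generator}). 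Hence $\resolvent\TZ\in\HS{\RKHS,\Wiireg}$ as a composition of a Hilbert–Schmidt operator with a bounded one; write its singular value decomposition $\resolvent\TZ=\sum_{j\ge1}\sigma_j\, u_j\otimes v_j$ with $u_j\in\RKHS$ orthonormal, $v_j\in\Wiireg$ orthonormal, and $\sum_j\sigma_j^2<\infty$.

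Next, the key algebraic point: I want a rank-$r$ operator $\Estim_r:\RKHS\to\RKHS$ with $\TZ\Estim_r$ close (in $\HS{\RKHS,\Wiireg}$) to $P_\RKHS\resolvent\TZ$, where $P_\RKHS$ projects onto $\overline{\range(\TZ)}\subseteq\Wiireg$. Since each $P_\RKHS v_j\in\overline{\range(\TZ)}$, for any $\delta>0$ I can pick $w_j\in\RKHS$ with $\norm{\TZ w_j-P_\RKHS v_j}_{\Wiishort}$ as small as desired; setting $\Estim_r=\sum_{j\le r}\sigma_j\, u_j\otimes w_j$ gives $\TZ\Estim_r=\sum_{j\le r}\sigma_j\,u_j\otimes(\TZ w_j)$. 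Then I decompose
\begin{equation*}
\resolvent\TZ-\TZ\Estim_r = (I-P_\RKHS)\resolvent\TZ + \Big(P_\RKHS\resolvent\TZ-\sum_{j\le r}\sigma_j u_j\otimes P_\RKHS v_j\Big) + \sum_{j\le r}\sigma_j\, u_j\otimes(P_\RKHS v_j-\TZ w_j).
\end{equation*}
The middle term is $\sum_{j>r}\sigma_j\, u_j\otimes P_\RKHS v_j$, whose squared $\HS{\RKHS,\Wiishort}$-norm is at most $\sum_{j>r}\sigma_j^2\to0$ as $r\to\infty$ (here I use $\norm{P_\RKHS v_j}_{\Wiishort}\le1$ and that the $u_j$ are orthonormal in $\RKHS$, so the Hilbert–Schmidt norm is controlled by the singular values). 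The third term can be made $\le\delta$ by choosing the $w_j$ accurately enough given $r$. Combining with the triangle inequality in $\HS{\RKHS,\Wiishort}$ and choosing $r$ large then $\delta$ small yields $\Risk(\Estim_r)=\norm{\resolvent\TZ-\TZ\Estim_r}^2_{\HS{\RKHS,\Wiishort}}\le\norm{(I-P_\RKHS)\resolvent\TZ}^2_{\HS{\RKHS,\Wiishort}}+\varepsilon$, which is the first claim. For the universal-kernel consequence, universality gives $\RKHS$ dense in $\Liishort$, hence (by a density/closure argument in the finer $\Wiireg$ topology, or directly because $\range(\TZ)$ is dense in $\Wiireg$ when $k$ is $\spC^2$ and universal) $P_\RKHS=I$ on $\Wiireg$, so the leading term vanishes and $\Risk(\Estim_r)\le\varepsilon$.

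The main obstacle I anticipate is the density/projection step for the universal case: one must argue that $\overline{\range(\TZ)}=\Wiireg$ in the $\Wiireg$-norm, not merely in $\Liishort$. Density of $\RKHS$ in $\Liishort$ does not automatically upgrade to density in the stronger weighted-Sobolev norm $\regenergy[\cdot]$, so this requires either an additional regularity hypothesis on the kernel (e.g. $\RKHS$ dense in $\Wiireg$, which is reasonable for $\spC^2$ universal kernels on nice $\spX$) or an approximation argument tailored to the Dirichlet-form structure \eqref{Eq: Dirichlet gradient}. A secondary technical care point is ensuring all compositions genuinely lie in the Hilbert–Schmidt class on the correct pair of spaces — in particular that $\norm{A}_{\HS{\RKHS,\Wiishort}}=\sqrt{\tr(A^*(\shiftedgenerator)A)}$ is finite for the operators constructed — which follows from $\TZ\in\HS{\RKHS,\Wiireg}$ and boundedness of $\resolvent$ on $\Wiireg$, but should be stated explicitly.
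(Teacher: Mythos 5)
Your plan is correct and essentially mirrors the paper's argument: use HS-compactness of $\TZ$ together with boundedness of $\resolvent$ on $\Wiireg$ (indeed $\norm{\resolvent}_{\Wiishort\to\Wiishort}\le\shift^{-1}$, as you note) to justify an SVD truncation, approximate the truncated range inside $\cl(\range(\TZ))$, and close with the triangle inequality in $\HS{\RKHS,\Wiireg}$. The only inessential difference is \emph{which} operator you truncate: the paper truncates the SVD of $\TZ$ itself (and then bounds $\hnorm{\resolvent(\TZ-[\![\TZ]\!]_r)}\le\shift^{-1}\hnorm{\TZ-[\![\TZ]\!]_r}$), whereas you truncate the SVD of $\resolvent\TZ$ directly; both yield the same tail-sum control and the same four-term triangle-inequality decomposition. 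The paper additionally perturbs the finite-rank approximant $B_r$ to a nearby non-defective operator, but that step plays no role in the inequality claimed in this proposition, so your omission of it is harmless.

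Your caution about the universal-kernel consequence is well placed and in fact applies equally to the paper's own proof: what is needed is that $\cl(\range(\TZ))=\Wiireg$ in the $\Wiireg$-topology, so that $P_\RKHS=I$. Universality of $k$ by itself only gives density of $\RKHS$ in $\Liishort$, and the paper's justification (cf.\ Rem.~\ref{rem:app_bound}) cites $\range(\resolvent\TS)\subseteq\cl(\range(\TS))$, a $\Liishort$-statement. For $\spC^2$ universal kernels on well-behaved domains this density in the stronger weighted-Sobolev topology $\Wiireg$ does hold, but, as you observe, it is an additional regularity hypothesis that is being taken for granted rather than proved; naming it explicitly, as your plan does, is the more careful route.
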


The previous proposition reveals that whenever the hypothetical domain is dense in the true domain and the injection operator is Hilbert-Schmidt, there is no irreducible risk and one can find arbitrarily good finite rank approximations of the generator's resolvent. Note that $\TZ\in\HS{\RKHS,\Wiireg}$ is equivalent to $\TZ^*\TZ \!=\! \TS^*(\shift\Id\!-\!\generator)\TS$ being a trace class operator, which is assured for our Examples \ref{ex:langevin} and \ref{ex:cir}, see the discussion in  \ref{app:bounds}.   

Now, to address how minimization of the risk impacts the estimation of the spectral decomposition, let us define the \textit{operator norm error} and the \textit{metric distortion} functional, respectively, as 
\begin{equation}\label{eq:error_and_metdist}
\error(\Estim) = \norm{\resolvent\TZ-\TZ\Estim}_{\RKHS\to\Wiishort},\,\Estim\in\HS{\RKHS},\;\text{ and }\; \metdist({h})= \norm{h}_{\RKHS} / \norm{{h}}_{\Wiishort},\,{h}\in\RKHS.   
\end{equation}

\begin{restatable}{proposition}{propMainSpectral}\label{prop:main_spectral}
Let 
$\EEstim = \sum_{i\in[r]}(\shift\!-\!\eeval_i)^{-1}\,\erefun_i\otimes\elefun_i$ be the spectral decomposition of $\EEstim\colon\RKHS\to\RKHS$, where  
$\eeval_i\geq\eeval_{i+1}$ and let $\egefun_i=\TS\erefun_i \, / \, \norm{\TS\erefun_i}_{\Liishort}$, for $i\in[r]$. Then for every $\shift>0$ and $i\in[r]$  
\begin{equation}
\label{eq:bound_eval}
\frac{\abs{\geval_i-\eeval_i}}{|\shift-\geval_i||\shift-\eeval_i|}\leq \mathcal \error(\EEstim)\metdist(\erefun_i)\quad\text{ and }\quad \norm{\egefun_i-\gefun_{i}}^2_{\Liishort}\leq \frac{2\,\error(\EEstim)\metdist(\erefun_i)}{ \shift\,[\gap_i  - \error(\EEstim)\metdist(\erefun_i) ]_{+}},
\end{equation}
where $\gap_i$ is the difference between $i$-th and $(i+1)$-th eigenvalue of $\resolvent$.
\end{restatable}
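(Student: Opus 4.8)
The plan is to deduce everything from the operator-norm bound $\error(\EEstim)=\norm{\resolvent\TZ-\TZ\EEstim}_{\RKHS\to\Wiishort}$ by transporting spectral information back and forth between the finite-rank operator $\EEstim$ on $\RKHS$ and the compact resolvent $\resolvent$ acting on $\Wiishort=\Wiireg$, all while carefully tracking the change of metric between $\RKHS$ and $\Wiishort$ encoded by $\TZ$ and the distortion functional $\metdist$. The key observation is that $\TZ\EEstim$ and $\EEstim$ share the same nonzero eigenvalues with $\egefun_i$-type eigenvectors pushed through $\TZ$; more precisely, if $\EEstim\erefun_i=(\shift-\eeval_i)^{-1}\erefun_i$ then $\TZ\EEstim\erefun_i=(\shift-\eeval_i)^{-1}\TZ\erefun_i$, so $(\shift-\eeval_i)^{-1}$ is an eigenvalue of the operator $\TZ\EEstim(\TZ)^{\dagger}$ restricted to $\range(\TZ)\subseteq\Wiishort$, with eigenvector $\TZ\erefun_i$.

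First I would prove the eigenvalue bound. Write $v_i=\TZ\erefun_i\in\Wiishort$ normalized in $\Wiishort$, i.e. $\norm{v_i}_{\Wiishort}=1$, noting $\norm{v_i}_{\Wiishort}=\norm{\erefun_i}_{\Wiishort}$ and hence the rescaling costs a factor $\metdist(\erefun_i)=\norm{\erefun_i}_{\RKHS}/\norm{\erefun_i}_{\Wiishort}$ when we want to re-express things in the $\RKHS$ norm. Apply $\resolvent\TZ-\TZ\EEstim$ to $\erefun_i$: using $\TZ\EEstim\erefun_i=(\shift-\eeval_i)^{-1}v_i$ we get $\norm{\resolvent v_i-(\shift-\eeval_i)^{-1}v_i}_{\Wiishort}\leq \error(\EEstim)\norm{\erefun_i}_{\RKHS}$. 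Dividing by $\norm{\erefun_i}_{\Wiishort}$ gives a residual bound $\norm{(\resolvent-(\shift-\eeval_i)^{-1}I)\hat v_i}_{\Wiishort}\leq \error(\EEstim)\metdist(\erefun_i)$ for the unit vector $\hat v_i$. Since $\resolvent$ is self-adjoint on $\Wiishort$ (this uses that $\generator$ is self-adjoint on $\Lii$, hence $(\shift\Id-\generator)^{\pm1/2}$ intertwine the $\Lii$ and $\Wiishort$ structures so that $\resolvent$ is self-adjoint as an operator on $\Wiireg$), a unit-residual argument places $(\shift-\eeval_i)^{-1}$ within distance $\error(\EEstim)\metdist(\erefun_i)$ of the spectrum of $\resolvent$, namely of $\{(\shift-\geval_j)^{-1}\}_j$. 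Matching $(\shift-\eeval_i)^{-1}$ to $(\shift-\geval_i)^{-1}$ — the part requiring an ordering/interlacing argument, and the place where I'd invoke a min–max or Weyl-type comparison to make sure indices line up — and rewriting $|(\shift-\geval_i)^{-1}-(\shift-\eeval_i)^{-1}|=|\geval_i-\eeval_i|/(|\shift-\geval_i||\shift-\eeval_i|)$ yields the left inequality in \eqref{eq:bound_eval}.

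For the eigenfunction bound I would use the standard resolvent/Davis–Kahan route adapted to this non-standard metric. From the residual estimate $\norm{(\resolvent-(\shift-\eeval_i)^{-1}I)\hat v_i}_{\Wiishort}\leq \error(\EEstim)\metdist(\erefun_i)=:\delta_i$, expand $\hat v_i$ in the $\Wiishort$-orthonormal eigenbasis of $\resolvent$; the spectral gap $\gap_i$ between the $i$-th and $(i+1)$-th eigenvalue of $\resolvent$, minus the already-controlled perturbation $\delta_i$, bounds the $\Wiishort$-norm of the component of $\hat v_i$ orthogonal to the $i$-th eigendirection by $\delta_i/[\gap_i-\delta_i]_+$, so $\sin\theta(\hat v_i, g_i)\lesssim \delta_i/[\gap_i-\delta_i]_+$ in the $\Wiishort$ metric. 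It then remains to convert this $\Wiishort$-distance between $\hat v_i$ and the $\Wiishort$-eigenvector into the advertised $\Lii$-distance between the normalized functions $\egefun_i=\TS\erefun_i/\norm{\TS\erefun_i}_{\Liishort}$ and $\gefun_i$. This is where the inequality $\norm{A}_{\RKHS\to\Wiishort}\geq \shift^{-1/2}\normHL{A}$ from the text — equivalently $\norm{f}_{\Wiishort}^2\geq \shift\norm{f}_{\Liishort}^2$, i.e. $\norm{\cdot}_{\Liishort}\leq \shift^{-1/2}\norm{\cdot}_{\Wiishort}$ on $\Wiireg$ — is used to dominate the $\Lii$ error by $\shift^{-1/2}$ times the $\Wiishort$ error, producing the factor $2/(\shift\,[\gap_i-\error(\EEstim)\metdist(\erefun_i)]_+)$ after squaring (the $2$ absorbing the normalization discrepancy between $\TS\erefun_i/\norm{\cdot}_{\Liishort}$ and $\hat v_i=\TZ\erefun_i/\norm{\cdot}_{\Wiishort}$ and the usual "normalize the near-eigenvector" step).

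The main obstacle I anticipate is not the perturbation bookkeeping but the index-matching: ensuring that the $i$-th eigenvalue $(\shift-\eeval_i)^{-1}$ of the finite-rank $\EEstim$ is compared against the $i$-th eigenvalue $(\shift-\geval_i)^{-1}$ of $\resolvent$ rather than some other one. Since $\EEstim$ has rank $r$ while $\resolvent$ is compact with infinitely many eigenvalues, a naive residual argument only gives proximity to \emph{some} eigenvalue of $\resolvent$. Closing this gap cleanly requires either a two-sided min–max comparison on the rank-$r$ spectral subspaces — bounding eigenvalues of $\EEstim$ by those of $\TZ^*\resolvent\TZ$-type compressions and vice versa — or an a priori assumption (implicit in the ordering $\eeval_i\geq\eeval_{i+1}$ and the appearance of $\gap_i$) that $\error(\EEstim)\metdist(\erefun_i)$ is smaller than half the relevant gap, under which the matching is forced. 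I would carry out the min–max comparison to keep the statement self-contained, and flag that the eigenfunction bound is only informative in the regime $\error(\EEstim)\metdist(\erefun_i)<\gap_i$, exactly as the $[\,\cdot\,]_+$ in \eqref{eq:bound_eval} signals.
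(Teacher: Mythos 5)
Your proposal follows essentially the same route as the paper's proof: form the $\Wiishort$-normalized residual $\norm{(\resolvent\TZ - \TZ\EEstim)\erefun_i}_{\Wiishort}/\norm{\TZ\erefun_i}_{\Wiishort} \le \error(\EEstim)\metdist(\erefun_i)$, use self-adjointness of $\resolvent$ on $\Wiireg$ to turn this into a distance-to-spectrum bound (giving the eigenvalue inequality after rewriting $(\shift-\geval_i)^{-1}-(\shift-\eeval_i)^{-1}$), lower-bound the gap of $\ekeval{i}$ to the remaining spectrum by $\gap_i - \error(\EEstim)\metdist(\erefun_i)$, and apply the Davis--Kahan $\sin\theta$ bound (the paper's Proposition~\ref{prop:davis_kahan}) before converting to $\Liishort$-norm via $\normW{\cdot}\geq\sqrt{\shift}\normL{\cdot}$. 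The index-matching concern you flag is real but is not resolved any more explicitly in the paper either — Proposition~\ref{prop:davis_kahan} is stated for the \emph{closest} eigenvalue and the paper silently identifies it with the $i$-th, exactly the reading you propose as the implicit regime $\error(\EEstim)\metdist(\erefun_i)<\gap_i$.
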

Note that the estimation of the eigenfunctions is first obtained 
in the norm with respect to the energy space, and then transformed to the $\Liishort$-norm, as $\normW{f}\,{\geq}\,\sqrt{\shift}\normL{f}$, $f\,{\in}\,\Wreg$. Therefore, by controlling the operator norm error and metric distortion (refer to  \ref{app:framework}), we can guarantee accurate spectral estimation. Consequently, this allows us to approximately solve the SDE \eqref{Eq: SDE} starting from an initial condition.
\begin{equation}
\label{eq:prediction_mean}
\EE[h(X_t)\,\vert\,X_0\!=\!x] \!=\! [e^{\generator t}\TS h](x) \approx \textstyle{\sum_{i\in[r]}} \,e^{\eeval_i t}\,\scalarpH{\elefun_i,h}\erefun_i(x).
\end{equation}

\section{Empirical risk minimization}\label{sec:erm}
In this section we address empirical risk minimization (ERM), deriving two main estimators. The first one minimizes empirical risk with Tikhonov regularization, while the second introduces additional regularization in the form of rank constraints. 

To present the estimator, we denote the covariance operator w.r.t. $\Liishort$ by $\Cx=\TS^*\TS={\EE_{x\sim\im}}[\fH(x)\otimes\fH(x)]$, the cross-covariance operator $\Cxy=\TS^*\generator\TS$, capturing correlations between input and the outputs of the generator, and the covariance operator $\Wx = \TZ^*\TZ=\TS^*(\shiftedgenerator)\TS$ w.r.t. energy space $\Wiishort$. All operators can be estimated from data, depending on the available prior knowledge. In this work we focus on the case when 


Dirichlet gradient form is known. Then
we can define the embedding of the Dirichlet operator $\dirOp=\dir^\top \nabla\colon \Lii\to[\Lii]^p$ into RKHS $\fDir\colon\spX\to\RKHS^p$ via the reproducing property as $\scalarpH{\fDir(x),h} =  [\dirOp \TS h](x)=\dir(x)^\top Dh(x)\in\R^p$, $h\in\RKHS$. 
More precisely, we have that $\fDir(x)$ is a $p$-dimensional vector with components $\fDirk{k}(x)$, where $\fDirk{k}\colon\spX\to\RKHS$ is given via reproducing property $\scalarpH{\fDirk{k}(x),h} = \dir_k(x)^\top Dh(x)$, $k\in[p]$. Hence, in this case we have that
\begin{equation}\label{eq:crosscov_dir}
\Cxy= - \EE_{x\sim\im}[\fDir(x)\otimes \fDir(x)] = - \textstyle{\sum_{k\in [p]} }\EE_{x\sim\im}[\fDirk{k}(x)\otimes \fDirk{k}(x)]. 
\end{equation}
Moreover, defining  $\fW\colon\spX\to\RKHS^{p+1}$ by $\fW(x) = [\sqrt{\shift} \fH(x), \fDirk{1}(x), \ldots \fDirk{p}(x)]^\top\in\R^{p+1}$, we get
\begin{equation}\label{eq:Wcov_dir}
\Wx =  \EE_{x\sim\im}[ \shift \fH(x)\otimes \fH(x) + \fDir(x)\otimes \fDir(x)] =  \EE_{x\sim\im}[ \fW(x)\otimes \fW(x)]. 
\end{equation}

{\bf Regularized risk~~} Let us first introduce the regularized risk defined for some $\gamma>0$ by
\begin{equation}\label{eq:risk_HS_reg}
\Risk_\reg(\Estim)=\norm{(\shift\Id\!-\!\generator)^{-1/2}\TS-(\shift\Id\!-\!\generator)^{1/2}\TS\Estim}_{\HS{\RKHS,\Liishort}}^2 +\reg\shift\norm{G}_{\mathrm{HS(\RKHS)}}^2, \quad G\in \HS{\RKHS},
\end{equation}
which, after some algebra, can be written as 
\begin{eqnarray*}
\Risk_\reg(\Estim)
&\,=\,&\tr\big[\Estim^*(\shift \Cx{-}\Cxy{+}\shift\reg \Id)\Estim{-}2\Cx\Estim{+}\TS^*\resolvent\TS\big]\\
&\,=\,&\underbrace{\norm{\Wreg^{-1/2}\Cx-\Wreg^{1/2}\Estim}_{\HS{\RKHS}}^2}_{\text{variance term}}+\underbrace{\energy_\shift[\GRE(\cdot)] - \norm{\Wreg^{-1/2}\Cx}_{\HS{\RKHS}}^2}_{\text{bias term}}
\end{eqnarray*}
where $\Wreg=\Wx +\shift\reg \Id$ is the regularized covariance w.r.t. $\Wiishort$. 

Hence, assuming the access to the dataset $\Data=(x_i)_{i\in[n]}$ made of i.i.d. samples from the invariant distribution $\im$, 
replacing the regularized energy $\energy_\shift$ with its empirical estimate $\eenergy_\shift$ leads to the regularized empirical risk functional expressed as 
\begin{equation}\label{eq:emp_risk}
\ERisk_\reg(\Estim)=\norm{\EWreg^{-1/2}\ECx-\EWreg^{1/2}\Estim}_{\HS{\RKHS}}^2+\eenergy_\shift[\GRE(\cdot)] - \norm{\EWreg^{-1/2}\ECx}_{\HS{\RKHS}}^2,
\end{equation}
where $\Wreg$ and $\Cx$ are estimated by their empirical counterparts $\Wreg$ and $\Cx$, respectively, via 
\eqref{eq:crosscov_dir}.

Therefore, our regularized empirical risk minimization approach reduces to \begin{equation}\label{eq:rerm}
\min_{\Estim}\norm{\EWreg^{-1/2}\ECx-\EWreg^{1/2}\Estim}_{\HS{\RKHS}}^2,
\end{equation}
and we analyze two different estimators, the first one $\EKRR$ is obtained by minimizing regularized empirical risk \eqref{eq:rerm} over all $\Estim\in\HS{\RKHS}$, and, hence, the name \textit{Kernel Ridge Regression} (KRR) of the generators resolvent. The second one $\ERRR$ minimizes \eqref{eq:rerm} subject to the (hard) constraint that $\Estim$ is at most of (a priori fixed) rank $r\in\N$ and, hence, is named \textit{Reduced Rank Regression} (RRR) of the generator's resolvent. Notice that when $r=n$, the two estimators coincide. After some algebra, one sees that both minimization problems have closed form solutions 
\begin{equation}\label{eq:empirical_krr_rrr}
\EKRR=\EWreg^{-1}\ECx\quad\quad\text{ and }\quad\quad \ERRR= \EWreg^{-1/2}\SVDr{\EWreg^{-1/2}\ECx},   
\end{equation}
where $\SVDr{\cdot}$ denotes the $r$-truncated SVD of a compact operator. 

To conclude this section, we show how to compute the eigenvalue decomposition of \eqref{eq:empirical_krr_rrr}. To this end, we define the sampling operators $\ES\colon\RKHS\to\R^{n}$ and $\EZ\colon\RKHS\to\R^{(1+p)n}$ by 
\begin{align*}
    (\ES h)_{i}=\tfrac{1}{\sqrt{n}} h(x_{i}), i\!\in\![n], & \quad \text{ and } &  (\EZ h)_{k n+i} = \begin{cases}
\tfrac{\sqrt{\shift}}{\sqrt{n}}\,h(x_i) , & k=0,i\in[n],\\
\tfrac{1}{\sqrt{n}}\dir_k(x_i)^\top Dh(x_{i}), & k\!\in\![p], i\!\in\![n].
\end{cases}
\end{align*}
Further, let $\Kx\!=n^{-1}[k(x_i,x_j)]_{i,j\in[n]}\!\in\!\R^{n\times n}$ be kernel Gram matrix, and introduce the Gram matrices $\Kxy\!\in\!\R^{n\times pn}$ and $\Ky\!\in\!\R^{p n\times p n}$ whose elements, for $k\in[1\!+\!p], i,j\in[n]$ are
\begin{equation}\label{eq:kernel_matrices}
\Kxy_{i,(k-1)n+j}\!=\!n^{-1}\scalarpH{\fH(x_i),\fDirk{k}(x_j)}\;\text{ and }\; \Ky_{(k\!-\!1)n\!+\!i,(\ell\!-\!1)n\!+\!j}\!=\!n^{-1}\scalarpH{\fDirk{k}(x_i),\fDirk{\ell}(x_j)}.  
\end{equation}
We note that although we have introduced the above matrices via inner products in $\RKHS$, they can be readily computed via the kernel and its gradients knowing the Dirichlet form, see  \ref{app:methods}.

\begin{restatable}{theorem}{ThmMainMethods}\label{thm:main_methods}
Given $\shift>0$ and $\reg>0$, let 
$\KSchur\!=\!\Kx\!-\!\Kxy(\Ky\!+\!\reg\shift\Id)^{-1}\Kxy^\top \!+\!\reg\Id$.  Let $(\esval_i^2,v_i)_{i\in[r]}$ be the leading eigenpairs of the  following generalized eigenvalue problem 
\begin{equation}\label{eq:GEP}
\shift ^ {-1} (\KSchur - \reg\Id)\Kx v_i = \esval_i^2\KSchur v_i,\quad v_i^\top\Kx v_j \!=\! \delta_{ij}, \;i,j\in[r].
\end{equation}
Denoting $\V_r\!=\![v_1\,\vert\!\ldots\!\vert\,v_r]\!\in\!\R^{n\times r}$ and $\Sigma_r\!=\!\Diag(\esval_1,\ldots,\esval_r)$, if $(\nu_i,\levec_i,\revec_i)_{i\in[r]}$ are eigentriplets of matrix $\V_r^\top\V_r\Sigma_r^2\in\R^{r\times r}$, then 
the eigenvalue decomposition the RRR estimator $\ERRR=\EZ\U_r\V_t^\top\ES$ is given by $\ERRR = \sum_{i\in[r]}(\shift\!-\!\eeval_i)^{-1}\,\erefun_i\otimes\elefun_i$, where $\eeval_i=\shift\!-\!1/\nu_i$, $\elefun_i \!=\!\nu_i^{-1/2}\ES^*\V_r \levec_i$ and $\erefun_i \!=\!\EZ^*\U_r \revec_i$ for $\U_r\!=\!(\shift\reg)^{\!-\!1}[{\shift}^{\!-\!1/2}\Id\,\vert\,\!-\!\Kxy(\Ky\!+\!\reg\shift\Id)^{\!-\!1}]^\top (\Kx\V_r\!-\!\shift\V_r\Sigma_r^2) \!\in\!\R^{(1\!+\!p)n\times r}$.
\end{restatable}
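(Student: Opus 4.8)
The starting point is the closed form $\ERRR=\EWreg^{-1/2}\SVDr{\EWreg^{-1/2}\ECx}$ from \eqref{eq:empirical_krr_rrr}, which I would first re-express through the sampling operators $\ES$ and $\EZ$. From their definitions and from \eqref{eq:crosscov_dir}--\eqref{eq:Wcov_dir} one reads off $\ECx=\ES^{*}\ES$ and $\EWreg=\EZ^{*}\EZ+\reg\shift\Id$, together with the Gram identities $\ES\ES^{*}=\Kx$, $\ES\EZ^{*}=[\,\sqrt{\shift}\,\Kx\ \vert\ \Kxy\,]$, and the block form
\[
\EZ\EZ^{*}=\begin{pmatrix}\shift\Kx & \sqrt{\shift}\,\Kxy\\ \sqrt{\shift}\,\Kxy^{\top} & \Ky\end{pmatrix},
\]
the $k$-blocks being the $\fDirk{k}$ contributions. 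Everything afterwards is linear algebra on these finite matrices, glued together by the operator push-through identity $\EZ(\EZ^{*}\EZ+\reg\shift\Id)^{-1}\EZ^{*}=\Id-\reg\shift(\EZ\EZ^{*}+\reg\shift\Id)^{-1}$.

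The key lemma is the closed form $\ES\EWreg^{-1}\ES^{*}=\shift^{-1}\KSchur^{-1}(\KSchur-\reg\Id)$. To obtain it I would use that $\ES$ is $\shift^{-1/2}$ times the first coordinate block of $\EZ$, so that $\ES\EWreg^{-1}\ES^{*}$ equals $\shift^{-1}$ times the top-left $n\times n$ block of $\Id-\reg\shift(\EZ\EZ^{*}+\reg\shift\Id)^{-1}$; the block-inversion formula identifies that block with the inverse of the Schur complement of the lower-right block $\Ky+\reg\shift\Id$ of $\EZ\EZ^{*}+\reg\shift\Id$, and a direct computation shows this Schur complement equals $\shift\KSchur$. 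In particular $\KSchur\succ0$, the generalized eigenproblem \eqref{eq:GEP} is well posed, and its eigenvalues $\esval_i^{2}$ are real and nonnegative, since $\shift^{-1}(\KSchur-\reg\Id)\Kx v=\esval^{2}\KSchur v$ is equivalent to $\Kx^{1/2}(\ES\EWreg^{-1}\ES^{*})\Kx^{1/2}(\Kx^{1/2}v)=\esval^{2}(\Kx^{1/2}v)$ with $\ES\EWreg^{-1}\ES^{*}\succeq0$.

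Next I would reduce the truncated SVD of $A:=\EWreg^{-1/2}\ECx=\EWreg^{-1/2}\ES^{*}\ES$ to \eqref{eq:GEP}. Its nonzero right singular vectors lie in $\range(\ES^{*})$, so writing them as $\ES^{*}v$ and using $A^{*}A\,\ES^{*}v=\ES^{*}(\ES\EWreg^{-1}\ES^{*})(\ES\ES^{*})v=\ES^{*}(\ES\EWreg^{-1}\ES^{*})\Kx v$ together with the lemma, the eigenrelation $A^{*}A\,\ES^{*}v=\esval^{2}\ES^{*}v$ is implied by $\shift^{-1}(\KSchur-\reg\Id)\Kx v=\esval^{2}\KSchur v$, while the $\RKHS$-normalization $\norm{\ES^{*}v}_{\RKHS}=1$ is $v^{\top}\Kx v=1$; the matching left singular vectors are $u_{i}=\esval_{i}^{-1}A\,\ES^{*}v_{i}=\esval_{i}^{-1}\EWreg^{-1/2}\ES^{*}\Kx v_{i}$, which one checks are orthonormal, and $\{\esval_i^2\}$ exhausts the nonzero spectrum of $A^{*}A$. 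Hence $\SVDr{A}=\sum_{i\in[r]}\esval_{i}\,u_{i}\otimes\ES^{*}v_{i}$ and $\ERRR=\EWreg^{-1/2}\SVDr{A}=\sum_{i\in[r]}(\EWreg^{-1}\ES^{*}\Kx v_{i})\otimes\ES^{*}v_{i}$. Applying the push-through identity once more to $\EWreg^{-1}\ES^{*}$, the block-inversion formula for the first block column of $(\EZ\EZ^{*}+\reg\shift\Id)^{-1}$, and the GEP-derived relation $\KSchur^{-1}\Kx v_{i}=\reg^{-1}(\Kx v_{i}-\shift\esval_{i}^{2}v_{i})$, a bookkeeping of the $\sqrt{\shift}$ and $\reg\shift$ factors gives $\EWreg^{-1}\ES^{*}\Kx v_{i}=\EZ^{*}\U_{r}e_{i}$ with $\U_{r}$ as in the statement; therefore $\ERRR=\EZ^{*}\U_{r}\V_{r}^{\top}\ES$.

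Finally, since $\ERRR=\EZ^{*}\U_{r}\V_{r}^{\top}\ES$ has rank at most $r$, its nonzero spectrum coincides with that of the $r\times r$ matrix $\V_{r}^{\top}\ES\EZ^{*}\U_{r}$. Using $\ES\EZ^{*}=[\,\sqrt{\shift}\,\Kx\ \vert\ \Kxy\,]$ and the columnwise GEP consequence $(\KSchur-\reg\Id)(\Kx\V_{r}-\shift\V_{r}\Sigma_{r}^{2})=\shift\reg\,\V_{r}\Sigma_{r}^{2}$, a short computation gives $\V_{r}^{\top}\ES\EZ^{*}\U_{r}=\V_{r}^{\top}\V_{r}\Sigma_{r}^{2}$, which is similar to the symmetric PSD matrix $\Sigma_{r}\V_{r}^{\top}\V_{r}\Sigma_{r}$ and hence diagonalizable with real nonnegative eigenvalues $\nu_{i}$. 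For an eigentriplet $(\nu_{i},\levec_{i},\revec_{i})$ one verifies directly that $\EZ^{*}\U_{r}\revec_{i}$ is a right eigenfunction and $\ES^{*}\V_{r}\levec_{i}$ a left eigenfunction of $\ERRR$, both with eigenvalue $\nu_{i}$; requiring the biorthonormality $\scalarpH{\elefun_{i},\erefun_{j}}=\delta_{ij}$ of the spectral decomposition fixes the relative scaling, producing $\erefun_{i}=\EZ^{*}\U_{r}\revec_{i}$, $\elefun_{i}=\nu_{i}^{-1/2}\ES^{*}\V_{r}\levec_{i}$, and setting $\eeval_{i}=\shift-\nu_{i}^{-1}$ so that $(\shift-\eeval_{i})^{-1}=\nu_{i}$ yields $\ERRR=\sum_{i\in[r]}(\shift-\eeval_{i})^{-1}\erefun_{i}\otimes\elefun_{i}$. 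The main obstacle is the Schur-complement lemma together with the explicit form of $\U_{r}$: both require carefully transporting the inverse of the infinite-dimensional operator $\EWreg$ to a finite block-matrix inverse and then tracking the many $\sqrt{\shift}$, $\reg$ and $\reg\shift$ scalings across the $p+1$ coordinate blocks, with the eigentriplet normalization needing equal care to recover the precise $\nu_{i}^{-1/2}$ factor.
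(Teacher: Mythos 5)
Your proposal follows essentially the same route as the paper: both reduce the truncated SVD of $\EWreg^{-1/2}\ECx$ to the generalized eigenvalue problem \eqref{eq:GEP} via the identity $\ES=[\shift^{-1/2}\Id\,\vert\,0]\EZ$, the push-through trick $\EZ(\EZ^*\EZ+\shift\reg\Id)^{-1}=(\EZ\EZ^*+\shift\reg\Id)^{-1}\EZ$, and the observation that $\shift\KSchur$ is the Schur complement of $\KWreg$, then use $\KSchur^{-1}\Kx\V_r=\reg^{-1}(\Kx\V_r-\shift\V_r\Sigma_r^2)$ to obtain $\ERRR=\EZ^*\U_r\V_r^\top\ES$. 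The only genuine difference is the final step: the paper delegates the eigendecomposition of $\EZ^*\U_r\V_r^\top\ES$ to a cited theorem, whereas you derive it directly from the $r\times r$ similarity $\V_r^\top\ES\EZ^*\U_r=\V_r^\top\V_r\Sigma_r^2$; this is a correct and arguably more self-contained way to close the argument, though you should be explicit that the normalization implicitly required of the eigentriplets is $\levec_i^\top\revec_j=\nu_i^{-1/2}\delta_{ij}$ (not the usual $\delta_{ij}$), which is what makes both the $\RKHS$-biorthonormality $\scalarpH{\elefun_i,\erefun_j}=\delta_{ij}$ and the identity $\ERRR=\sum_i\nu_i\,\erefun_i\otimes\elefun_i$ hold simultaneously.
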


The main computational cost of our method, in view of \eqref{eq:prediction_mean}, to solve SDE \eqref{Eq: SDE} lies in the implicit inversion of $\KSchur$ when solving \eqref{eq:GEP}. When computed with direct solvers this inversion is of the order $\bigO(n^3p^3)$,  however leveraging on the fact that $\shift\KSchur$ is Schur's complement of the $(1\!+\!p)n {\times} (1\!+\!p)n$ symmetric positive definite matrix and using classical iterative solvers, like Lanczos or the generalized Davidson method, when $r\ll n$ this cost can significantly be reduced to $\bigO(r\,n^2p^2)$, c.f.~\cite{HLAbook}.




\section{Spectral learning bounds}\label{sec:bounds}
Recalling Prop.~\ref{prop:main_spectral}, in order to obtain the bounds on eigenvalues and eigenfunctions of the generator, it suffices to analyze the learning rates for the operator norm error $\error$ and metric distortion $\metdist$. For this purpose, we analyze the operator norm error of empirical estimator $\ERRR$ using the decomposition
\begin{equation*}
    \error(\EEstim)\leq \underbrace{\norm{\resolvent\TZ \!-\! \TZ\KRR}_{\RKHS\to\Wiishort}}_{\text{regularization bias}} \!+\! \underbrace{\norm{\TZ (\KRR \!\!-\! \RRR)}_{\RKHS\to\Wiishort}}_{\text{rank reduction bias}}  \!+\! \underbrace{\norm{\TZ(\RRR \!\!-\! \ERRR)}_{\RKHS\to\Wiishort}}_{\text{estimator's variance}},
    \label{eq:dec}
\end{equation*}
where $\KRR\,{=}\,\Wreg^{-1}\Cx$ is the minimizer of the full (i.e. without rank constraint) Tikhonov regularized risk and $\RRR \,{=}\, \Wreg^{-1/2}\SVDr{\Wreg^{-1/2}\Cx}$ is the population version of the empirical estimator $\ERRR$. 

Note that, while the last two terms in the above  decomposition depend on the estimator, the first term depends only on the choice of $\RKHS$ and the regularity of $\generator$ w.r.t. $\RKHS$. {In this work we focus on the classical kernel-based learning where one chooses a universal kernel~\citep[Chapter 4]{Steinwart2008} and controls the regularization bias with a regularity condition. For details see Rem.~\ref{rem:app_bound} of App.~\ref{app:bias}.} 
%
Let $\shift>0$ be a prescribed parameter,
we make following assumptions to quantify the difficulty of the learning problem:

\medskip
\begin{enumerate}[label={\rm \textbf{(BK)}},leftmargin=7ex,nolistsep]
\item\label{eq:BK} \emph{Boundedness.} There exists $\bcon{>}0$ such that $\displaystyle{\esssup_{x\sim\im}}\norm{\fW(x)}^2\!\leq\! \bcon$, i.e.
$\fW\!\in\!\mathcal{L}^\infty_\im(\spX,\RKHS^{1\!+\!p})$;
\end{enumerate}

\begin{enumerate}[label={\rm \textbf{(RC)}},leftmargin=7ex,nolistsep]
\item\label{eq:RC} \emph{Regularity.} For some $\rpar\in(0,2]$ there exists $\rcon>0$ such that
$\Cx^2\preceq \rcon^2 \Wx^{1+\alpha}$;
\end{enumerate}

\medskip
\begin{enumerate}[label={\rm \textbf{(SD)}},leftmargin=7ex,nolistsep]
\item\label{eq:SD} \emph{Spectral Decay.} There exists $\spar\,{\in}\,(0,1]$ and 
$\scon\,{>}\,0$ s.t.
$\eval_j(\Wx)\,{\leq}\,\scon\,j^{-1/\spar}$, for all $j\in J$.
\end{enumerate}

\medskip
The above assumptions, discussed in more details in App \ref{app:assumptions},
are in the spirit of state-of-the-art analysis of statistical learning theory of classical regression in RKHS spaces~\cite{Fischer2020}, recently extended to regression of transfer operators~\cite{Li2022,kostic2023sharp}. In our setting, instead of relying on the injection into $\Liishort$ as in the case of transfer operators, the relevant object is the injection $\TZ$ into the energy space $\Wiishort$. 

The first assumption \ref{eq:BK} is the main limiting factor of our approach. Indeed, since $\norm{\fW(x)}^2=\shift\norm{\fH(x)}^2+\sum_{k\in[p]}\norm{\fDirk{k}(x)}^2$, apart from needing the kernel to be bounded, we also need the  Dirichlet form embedding to be bounded. Essentially, this means that the Dirichlet coefficients are not growing too fast w.r.t. the kernel's gradients decay.
Having this, we assure that $\TZ\in\HS{\RKHS,\Wiireg}$ which implies the operator norm error can be controlled. 

Another key difference between generator and transfer operator regression is that the covariance w.r.t. the domain of the operator becomes $\Wx=\TZ^*\TZ$ instead of $\Cx=\TS^*\TS$. On the other hand, the "cross-covariance" that captures now RKHS cross-correlations of the resolvent w.r.t domain is simply $\TZ^*\resolvent\TZ=\TS^*(\shiftedgenerator)\resolvent\TS = \Cx$.  With this in mind, \ref{eq:RC} corresponds to the the regularity condition in \cite{kostic2023sharp} and it quantifies the relationship between the hypothesis class (bounded operators in $RKHS$) and the object of interest $\resolvent$. Indeed, if $\generator$ has eigenfunctions that belong to $\alpha$-interpolation space between $\RKHS$ and $\Wiireg$, \ref{eq:RC} holds true. In particular, if $\gefun_i\in\RKHS$ for all $i\geq2$ (constant eigenfunction excluded), one has that $\rpar\geq1$ (c.f. Proposition \ref{prop:regularity} in App. \ref{app:assumptions}. 

Finally, \ref{eq:SD} quantifies the "size" of the hypothetical domain $\RKHS$ within the true domain $\Wiireg$ via the \textit{effective dimension} $\tr(\Wreg^{-1}\Wx)\leq\scon (\shift\reg)^{-\spar}$, which, due to \eqref{eq:BK}, leads to another notion, known as the embedding property, quantifying the relationship between $\RKHS$ and essentially bounded functions in the domain of the operator
\begin{enumerate}[label={\rm \textbf{(KE)}},leftmargin=7ex,nolistsep]
\item\label{eq:KE} \emph{Kernel Embedding.} There exists $\epar\in[\spar,1]$ and such that 
\begin{equation}\label{eq:c_beta}
\econ =
\esssup_{x\sim\im}\textstyle{\sum_{j\in \N}}\sigma^{2\epar}_j[\shift|z_j(x)|^2 - z_j(x)[\generator z_j](x)]  <+\infty,
\end{equation}
where $\TZ = \sum_{j\in J} \sigma_j z_j\otimes h_j$ is the SVD of the injection operator $\TZ\colon\RKHS\to\Wiireg$.
\end{enumerate}

Using the above assumptions, we prove a generalization bound for the RRR estimator of the resolvent.

\begin{restatable}{theorem}{thmError}\label{thm:error_bound} 
Let \ref{eq:RC},  \ref{eq:SD},  and \ref{eq:KE} hold for some $\rpar\in(0,2]$, $\spar\in(0,1]$ and $\epar\in[\spar,1]$, respectively, and let $\cl(\range(\TS))=\Lii$. Denoting $\lambda_k^\star=\lambda_k(\TS^*\resolvent\TS)$, $k\in\N$, and given $\delta\in(0,1)$ and $r\in[n]$, if RRR estimator is build from the Dirichlet form and
\begin{equation}
  \text{if }\,\alpha\,{\geq}\,\epar,\;\reg\asymp n^{-\frac{1}{\rpar+ \spar}}\text{ and }\rate^\star_n= n^{-\frac{\rpar}{2(\rpar + \spar)}},\,\text{ or if } \,\alpha\,{<}\,\epar,\;\reg\asymp n^{-\frac{1}{\spar+ \epar}}\text{ and }\rate^\star_n= n^{-\frac{\rpar}{2(\spar + \epar)}}
\end{equation}
then there exists a constant $c\,{>}\,0$, depending only on $\RKHS$ and gap $\lambda_r^\star{-}\lambda_{r+1}^\star\,{>}\,0$, such that for large enough $n\geq r$ with probability at least $1\,{-}\,\delta$ in the i.i.d. draw of $\Data$ from $\im$ it holds that 
\begin{equation}\label{eq:error_bound_rrr}
    \error(\ERRR)\leq
   (\esval_{r+1}\,\wedge\,\textstyle{\sqrt{\lambda_{r+1}^\star})}+c\,\rate^\star_n
   \,\ln \delta^{-1}.
\end{equation}
\end{restatable}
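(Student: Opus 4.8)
The plan is to bound each of the three terms in the error decomposition
\[
\error(\ERRR)\leq \underbrace{\norm{\resolvent\TZ - \TZ\KRR}_{\RKHS\to\Wiishort}}_{\text{(I) bias}} + \underbrace{\norm{\TZ(\KRR - \RRR)}_{\RKHS\to\Wiishort}}_{\text{(II) rank-reduction bias}} + \underbrace{\norm{\TZ(\RRR - \ERRR)}_{\RKHS\to\Wiishort}}_{\text{(III) variance}},
\]
and then optimize the regularization parameter $\reg$. For (I), the key identity is that, writing everything through the injection $\TZ$ into the energy space $\Wiishort$, one has $\TZ^*\TZ = \Wx$, $\TZ^*\resolvent\TZ = \Cx$, so the regularization bias equals $\norm{(\Wx+\shift\reg\Id)^{-1}(\shift\reg)\,\TZ\resolvent}$ up to the change-of-metric factors. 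Here I would invoke the regularity condition \ref{eq:RC} ($\Cx^2\preceq\rcon^2\Wx^{1+\rpar}$), which controls how well $\resolvent$ is approximated by bounded RKHS operators, exactly as in the transfer-operator analysis of \cite{kostic2023sharp}; this yields (I)$\lesssim \reg^{\rpar/2}$ (capped at $\reg$ when $\rpar>1$), cf. Rem.~\ref{rem:app_bound} / App.~\ref{app:bias}.

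For the variance term (III), the strategy is the standard concentration-of-empirical-operators argument adapted to the energy geometry. I would introduce the "whitened" operators $\Wreg^{-1/2}\Cx$ and its empirical analogue $\EWreg^{-1/2}\ECx$, control $\norm{\Wreg^{-1/2}(\Wx - \EWx)\Wreg^{-1/2}}$ and $\norm{\Wreg^{-1/2}(\Cx-\ECx)}$ via a Bernstein inequality for self-adjoint operators (using boundedness \ref{eq:BK}, $\esssup\norm{\fW(x)}^2\leq\bcon$, to get the a.s. bound, and the effective dimension $\tr(\Wreg^{-1}\Wx)\leq\scon(\shift\reg)^{-\spar}$ from \ref{eq:SD} to get the variance proxy). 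The kernel-embedding assumption \ref{eq:KE} enters to upgrade the $\Liishort$-type bounds to the operator-norm ($\RKHS\to\Wiishort$) bounds by trading $\sval_j^{2\epar}$ factors — this is the mechanism that produces the two regimes $\rpar\geq\epar$ versus $\rpar<\epar$ in the rate. The upshot is (III)$\lesssim (\rate_n^\star)\ln\delta^{-1}$ with the stated $\rate_n^\star$, after a perturbation argument (Neumann series / second resolvent identity) to pass from the concentration of $\EWreg^{-1/2}\ECx$ to that of $\ERRR = \EWreg^{-1/2}\SVDr{\EWreg^{-1/2}\ECx}$, which is where the spectral gap $\lambda_r^\star - \lambda_{r+1}^\star > 0$ is needed so that the truncated SVD is stable.

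For the rank-reduction bias (II), I would use that $\RRR$ is the population truncated-SVD solution, so $\norm{\TZ(\KRR-\RRR)}_{\RKHS\to\Wiishort}$ is governed by the $(r{+}1)$-th singular value of $\Wreg^{-1/2}\Cx$, hence by $\sqrt{\lambda_{r+1}(\TS^*\resolvent\TS)} = \sqrt{\lambda_{r+1}^\star}$ up to lower-order terms; combined with a Weyl-type inequality relating $\esval_{r+1}$ (the empirical singular value appearing in \eqref{eq:error_bound_rrr}) to the population one, this gives the $\esval_{r+1}\wedge\sqrt{\lambda_{r+1}^\star}$ leading term. Finally I would collect the three bounds, balance (I) $\asymp$ (III) by choosing $\reg\asymp n^{-1/(\rpar+\spar)}$ when $\rpar\geq\epar$ (resp. $\reg\asymp n^{-1/(\spar+\epar)}$ when $\rpar<\epar$), and absorb constants depending only on $\RKHS$ and the gap into $c$.

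The main obstacle I anticipate is term (III): unlike the transfer-operator case, the relevant covariance is $\Wx = \TZ^*\TZ = \shift\Cx - \Cxy$ with $\Cxy = -\EE[\fDir(x)\otimes\fDir(x)]$ built from the Dirichlet-form embedding, so the empirical object $\EWreg$ mixes the kernel and its gradients; establishing the Bernstein-type concentration in the $\RKHS\to\Wiishort$ operator norm (rather than Hilbert–Schmidt), and correctly propagating it through the $\Wreg^{-1/2}$-whitening and the truncated-SVD map while keeping the dependence on $\shift$, $\reg$ explicit and dimension-free, is the delicate part — this is exactly where \ref{eq:KE} and the interplay between $\rpar$, $\spar$, $\epar$ must be used carefully to avoid losing a factor of the effective dimension.
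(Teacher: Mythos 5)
Your decomposition into regularization bias, rank-reduction bias, and variance is exactly the one used in the paper's proof sketch and Appendix~\ref{app:bounds}, and your account of how each assumption enters — \ref{eq:RC} for the bias via the whitened operator $\Wreg^{-1/2}\Cx$, \ref{eq:BK} and \ref{eq:SD} for the Bernstein concentration and effective dimension, \ref{eq:KE} for the two-regime $\rpar\gtrless\epar$ split, and the spectral gap for stability of the truncated SVD — matches the paper's argument. The one small technical difference is that where you invoke a ``Neumann series / second resolvent identity'' perturbation to handle the truncated SVD, the paper instead uses the Zwald–Blanchard spectral-projector bound (Prop.~\ref{prop:spec_proj_bound}) combined with Weyl's inequality on the squared singular values (Prop.~\ref{prop:svals_bound}); these are closely related tools and either would serve. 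Overall the proposal is correct and takes essentially the same route as the paper.
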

\emph{\textit{Proof sketch.} } The \textit{regularization bias} is bounded by $\rcon\, \reg^{{\rpar/2}}$ by Prop.~\ref{prop:app_bound} of App.~\ref{app:bias}, the \textit{rank reduction bias} is upper bounded by $\lambda_{r+1}(\TS^*\resolvent\TS)$, while the bounds on the variance terms critically rely on the well-known perturbation result for spectral projectors reported in Prop.~\ref{prop:spec_proj_bound}, App.~\ref{app:background}. The latter is then chained to Pinelis-Sakhanenko's inequality and Minsker's inequality for self-adjoint HS-operators, Props.~\ref{prop:con_ineq_ps} and ~\ref{prop:con_ineq} in App.~\ref{app:concentration_ineq}, respectively. 
Combining the bias and variance terms, we obtain the balancing equations for the regularization parameter and then the next result follows. $\Box$

Note that the learning rate \eqref{eq:error_bound_rrr} implies the $\Liishort$-norm learning rate. Moreover, for $\rpar\geq\epar$, it matches information theoretic lower bounds for transfer operator learning upon replacing parameters $\rpar$, $\spar$ and $\epar$ related to the space $\Wiishort$ with their $\Liishort$ analogues~\cite{kostic2023sharp}, see App. \ref{app:bounds_discussion}. This motivates the development of the first mini-max optimality for the IG learning, for which our results an important first step.

To conclude this section, we address the spectral learning bounds steaming from the Prop.~\ref{prop:main_spectral}. The main task to do so is to control the metric distortions, which we show how in App.~\ref{app:spec_learning}. In this context, an additional assumption $\rpar\geq1$ is needed, since otherwise the metric distortions can blow-up due to eigenfunctions being out of the RKHS space. Importantly, our analysis reveals that 
\begin{equation}\label{eq:spectral_bound_rrr}
\tfrac{\abs{\geval_i-\eeval_i}}{|\shift-\geval_i||\shift-\eeval_i|}\leq (\widehat{s}_i\, \wedge\, \textstyle{2\sqrt{\lambda_{r+1}^\star /\lambda_{r}^\star})} + c\,\rate^\star_n\,\ln \delta^{-1},\; i\in[r],
\end{equation}
where $\widehat{s}_i=\esval_i\,\emetdist_i$ is the empirical spectral bias that informs how good is the estimation of the particular eigenpair is (see Fig.~\ref{fig:exp}~a)), $\esval_i$ being given in \eqref{eq:GEP} and $\emetdist_i\!=\!\norm{\erefun_i}_{\RKHS} / \|\EWx^{1/2} \erefun_i\|_\RKHS$.


\section{Experiments}\label{sec:exp}
In this section, we showcase the key features of our method outlined in Table~\ref{tab:summary}. We demonstrate that our approach: (1) avoids the spurious effects noted in other IG methods~\cite{hou23c,Cabannes_2023_Galerkin}, (2) is more effective than transfer operator methods~\cite{Kostic2022}, and (3) validates our bounds in a prediction task for a model with a non-constant diffusion term. Further details are available in Appendix \ref{app:exp}.

\begin{figure}
    \centering
    \includegraphics[scale=0.55]{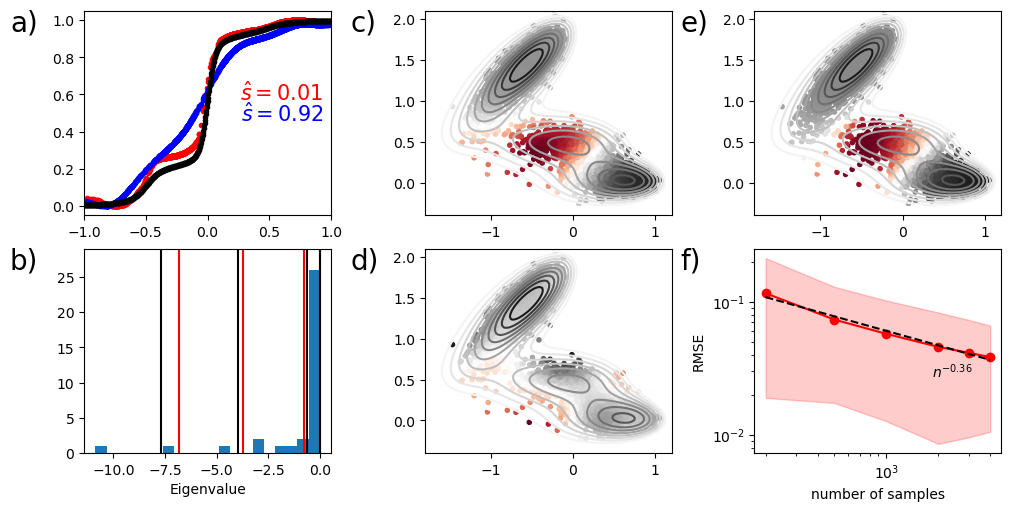}
    \vspace{-.3truecm}
\caption{\textbf{a)} Empirical biases $\hat{s}_1 = \esval_1\,\emetdist_1$ and estimation of the first (nontrivial) eigenfunction of the IG of a Langevin process  under a four well potential. Ground truth is black, our method RRR is red and blue for two different kernel lengthscales. \textbf{b)} Eigenvalue estimation for the same process compared to  the methods in \cite{hou23c,Cabannes_2023_Galerkin}, for which eigenvalue histogram in blue shows spuriousness. \textbf{c)} Estimation of the second eigenfunction of a Langevin process under Muller brown potential (white level lines) by RRR, Transfer Operator (TO) in \textbf{d)} and ground truth in \textbf{e)}. Observe  that TO fails to recover the metastable state. \textbf{f)} Prediction RMSE for the CIR model.}
    \label{fig:exp}
\end{figure}

{\bf One dimensional four well potential }
We first investigate the overdamped Langevin dynamics in a potential that presents four different wells, two principal wells and then in each of them two smaller wells, given by
$V(x) \,{=}\, 4(x^8{+} 0.8\exp(-80(x^2)) {+}  0.2\exp(-80(x {-} 0.5)^2) {+} 0.5\exp(-40(x {+} 0.5)^2))$. This leads to three relevant eigenpairs: the slowest mode corresponds to the transition between the two principal wells, while the others two capture transitions between the smaller wells. In Figure \ref{fig:exp}, we show that the empirical bias $\widehat{s}_1=\esval_1\,\emetdist_1$ allows us to choose the hyperparameters of the model: higher empirical bias leads to wrong operator. Compared to KRR \cite{hou23c, Cabannes_2023_Galerkin}, there is no spuriousness in the estimation of our eigenvalues, as shown in panel \textbf{b)}. 

{\bf Muller Brown potential }
We next study Langevin dynamics under more challenging conditions: the Muller brown potential. Figure \ref{fig:exp}(c-e) depicts the second eigenfunction obtained by our method compared to the ground truth one and that found by the transfer operator approach, with the same number of samples. Notably, our physics informed approach outperforms transfer operator learning for this task. Note that with different lag times, we were able to recover this second eigenfunction. 

{\bf CIR model }
Finally, we show with the CIR model that our method is not limited to Langevin process with constant diffusion. For this process, the conditional expectation of the state $X_t$ is analytically known. We can thus compare the prediction of our model with respect to this expectation using root mean squared error (RMSE) and compute it for different number of samples to validate our bounds. Conditional expectation were computed on 100 different simulations at $t=\ln(2)/a$ which corresponds to the half life of the mean reversion. Results are shown in panel \textbf{d)} Figure \ref{fig:exp}.

\section{Conclusion}


We developed a novel energy-based framework for learning the Infinitesimal Generator of stochastic diffusion SDEs using kernel methods. Our approach integrates physical priors, achieves fast error rates, and provides the first spectral learning guarantees for generator learning. A limitation is its computational complexity, scaling as $n^2d^2$. Future work will explore alternative methods to enhance computational efficiency and investigate a broader suite of SDEs beyond stochastic diffusion.



\newpage
\appendix

\begin{center}
{\bf \large Supplementary Material}
    \end{center}

This appendix includes additional background on SDE and RKHS, proofs of the results omitted in the main body and information about the numerical experiments.
%
\renewcommand{\arraystretch}{1.0}
\begin{table}[h!]\label{tab:notation}
\centering
\makebox[\textwidth]{\begin{tabular}{c|c||c|c}
\toprule
notation & meaning & notation & meaning  \\ 
\midrule 
$\wedge$ & minimum & $\vee$ & maximum \\ \hline
$\SVDr{\,\cdot\,}$ & $r$-truncated SVD of an operator & $\Id$ & identity operator \\\hline
$\HS{\spH,\spG}$ & space of Hilbert-Schmidt operators $\spH\to\spG$ & $\HSr$  & set of rank-$r$ Hilbert-Schmidt operators on $\RKHS$\\\hline
$\norm{A}$ & operator norm of an operator $A$ & $\hnorm{A}$ & Hilbert-Schmidt norm of operator $A$ \\\hline
$\sigma_i(\cdot)$ & $i$-th singular value of an operator & $\eval_i(\cdot)$ & $i$-th eigenvalue of an operator \\\hline
$\spX$ & state space of the Markov process & $(X_t)_{t\geq0}$ & time-homogeneous Markov process \\\hline
$\transitionkernel$ & transition kernel of the Markov process & $\im$ & invariant measure of the Markov process  \\\hline
$\drift$ & drift of the Itô process & $b$ & diffusion of the Itô process  \\\hline
$\Lii$ & L2 space of functions on $\spX$ w.r.t. measure $\im$ & $\TO$ & Transfer operator on $\Lii$ \\\hline
$\Wii$ & Sobolev space w.r.t. measure $\im$ on $\spX$ & $\generator$ & Generator of the semigroup on $\Wii$ \\\hline
$\dir$ & Dirichlet form diffusion & $\dirOp$ & Dirichlet form operator on $\Wii$ \\\hline
$\shift$ & shift parameter & $\Wiireg$ & regularized energy space \\\hline
$k(x,y)$ & kernel & $\phi$  & canonical feature map \\\hline
$\RKHS$ &reproducing kernel Hilbert space & $\TS$ & canonical injection $\RKHS \hookrightarrow\Lii$ \\\hline
$\fL$ & generator embedding & $\TZ$ & canonical injection $\RKHS \hookrightarrow\Wiireg$  \\\hline
$\dderiv{v}$ & $v$-directional derivative & $\deriv$ & derivative \\\hline
$\fDk{v}$ & embedding of the $v$-directional derivative & $\fD{}$ & derivative embedding  \\\hline
$\fWk{k}$ & $k$-th component of Dirichlet operator embedding & $\fW$ & Dirichlet operator embedding  \\\hline
$\sigma_j$ & $j$-th singular value of $\TZ$ & $J$ & countable index set of singular values of $\TZ$ \\\hline
$z_j$ & $j$-th left singular function of $\TZ$ &  $h_j$ & $j$-th right singular function of $\TZ$ \\ \hline
$\one$ &  function in $\Lii$ with the constant output 1 & $\reg$ & regularization parameter \\\hline
$\Risk$ & true risk & $\error$ & operator norm error \\\hline
$\ExRisk$ & excess risk, i.e. HS norm error & $\IrRisk$ & irreducible risk \\\hline
$\Data$ & dataset $(x_i)_{i\in[n]}$ & $\ERisk$ & empirical risk\\\hline
$\ES$ & sampling operator w.r.t. $\Lii$ & $\EZ$ & sampling operator w.r.t. $\Wiireg$ \\\hline
$\Cx$ & covariance operator w.r.t. $\Lii$ &$\ECx$ & empirical covariance operator w.r.t. $\Lii$ \\\hline
$\Creg$ & regularized covariance operator w.r.t. $\Lii$ &$\ECreg$ & regularized empirical covariance operator w.r.t. $\Lii$ \\\hline
$\Wx$ & covariance operator w.r.t. $\Wiireg$ &$\EWx$ & empirical covariance operator w.r.t. $\Wiireg$ \\\hline
$\Wreg$ & regularized covariance operator w.r.t. $\Wiireg$ &$\EWreg$ & regularized empirical covariance operator w.r.t. $\Wiireg$ \\\hline
$\Cxy$ & derivative-covariance operator & $\ECxy$ & empirical derivative-covariance operator \\\hline
$\Kx$ & kernel Gram matrix w.r.t. $\Lii$ & $\Kreg$ & regularized kernel Gram matrix w.r.t. $\Lii$ \\\hline
$\KW$ & kernel Gram matrix w.r.t. $\Wiireg$ & $\KWreg$ & regularized kernel Gram matrix w.r.t. $\Wiireg$ \\\hline
$\Ky$ &  derivative-derivative kernel Gram matrix & 
$\Kxy$ & feature-derivative kernel Gram matrix \\\hline
$\Estim$ & population estimator of $\resolvent$ on $\RKHS$ & $\EEstim$ & empirical  estimator of $\resolvent$ on $\RKHS$  \\\hline
$\KRR$ & population KRR estimator & $\EKRR$ & empirical KRR estimator \\\hline
$\RRR$ & population RRR estimator & $\ERRR$ & empirical RRR estimator \\\hline
$\TP$ & spectral projector & $\EP$ & empirical spectral projector \\\hline
$\metdist$ & metric distortion & $\emetdist$ & empirical  metric distortion \\\hline
$\geval$ & generator eigenvalue  & $\eeval$ & eigenvalue of the empirical estimator \\\hline
$\gefun$ & generator eigenfunction in $\Lii$ & $\egefun$ & empirical eigenfunction in $\Lii$ \\\hline
$\erefun$ & right empirical eigenfunction  &  $\elefun$ & left empirical eigenfunction \\\hline
$\bcon$ & boundness constant & $P_\RKHS$ & orthogonal projector in $\Wiireg$  onto $\range(\TZ)$\\ \hline
$\rpar$ &regularity parameter & $\rcon$  & regularity constant \\\hline
$\spar$ &spectral decay parameter& $\scon$ & spectral decay  constant\\\hline
$\epar$ &embedding parameter & $\econ$ & embedding constant\\ \hline
\bottomrule
\end{tabular}}
\caption{Summary of used notations.}
\end{table}
\renewcommand{\arraystretch}{1}

\section{Background}\label{app:background}

\subsection{Basics on operator theory for Markov processes.}\label{app:operator theory}
We provide here some basics on operator theory for Markov processes. Let $\spX\subset\R^d$ ($d\in\N$) and $(X_t)_{t\in\R_+}$ be a $\spX$-valued time-homogeneous Markov process defined on a filtered probability space $(\Omega,\mathcal F,(\mathcal F_t)_{t\in\R_+},\mathbb P)$ where $\mathcal F_t=\sigma(X_s,\,s\leq t)$ is the natural filtration of $(X_t)_{t\in\R_+}$. The dynamics of $X$ can be described through of a family of probability densities $(p_t)_{t\in\R_+}$ such that for all $t\in\R_+$, $E\in\mathcal B(\spX)$, 
\begin{equation*}
\mathbb P(X_t\in E|X_0=x)=\int_E p_t(x,y)dy.
\end{equation*}
Let $\mathcal G$ be a set of real valued and measurable functions on $\spX$. For any $t\in\R_+$ the \emph{transfer operator} (TO) $\TO_t:\mathcal G\rightarrow\mathcal G$ maps any measurable function $f\in\mathcal G$ to
\begin{equation}\label{Eq: semigroup def2}
(\TO_tf)(x)=\int_\spX f(y)p_t(x,y)dy.
\end{equation} 
In theory of Markov processes, the family $(\TO_t)_{t\in\R_+}$ is referred to as the \emph{Markov semigroup} associated to the process $X$. 
\begin{remark}
A possible choice is $\mathcal G=\mathcal L^\infty(\spX)$. Here, we are interested in another choice of $\mathcal G$ related to the existence of an invariant measure for $\transferop_t$ i.e., a $\sigma$-finite measure $\im$ on $(\spX,\mathcal B(\spX))$ such that $P_t^*\im=\im$ for any $t\in\R_+$. In that case, we can choose $\mathcal G=\Lii$ so that, for all $f\in\Lii$, $P_tf$ converges to $f$ in $\Lii$ as $t$ goes to $0$. Note that $P_0f=f$ and $P_\infty f=\int fd\spX$ for a suitable integrable function $f:\spX\rightarrow\R$.  
\end{remark}
Within the existence of this invariant measure $\im$, the process $X$ is then characterized by the \emph{infinitesimal generator} (IG) $\generator : \Lii\rightarrow \Lii$ of the family $(\TO_t)_{t\in\R_+}$ defined by
\begin{equation}\label{Eq: def generator}
\generator=\underset{t\rightarrow 0^+}\lim\frac{\TO_t-I}{t}.
\end{equation}
In other words, $\generator$ characterizes the linear differential equation $\partial/\partial_t\TO_t f=\generator \TO_tf$ satisfied by the transfer operator. The domain of $\generator$ denoted $\mathrm{dom}(\generator)$ coincides with the the Sobolev space $\Wii$ 
\begin{equation*}
\Wii=\{f\in\Lii\;\vert\; \norm{f}_{\Wiishort}^2=\norm{f}_{\Liishort}+\norm{\nabla f}_{\Liishort}<\infty\}.
\end{equation*}
The spectrum of the IG can be difficult to capture due to the potential unboundedness of $\generator$. To circumvent this problem, one can focus on an auxiliary operator, the resolvent, which, under certain conditions, shares the same eigenfunctions as $\generator$ and becomes compact. The following result can be found in Yosida's book (\cite{Yosida_2012_functional}, Chap.~IX) : For every $\mu>0$, the operator $(\mu I-\generator)$ admits an inverse $L_\mu=(\mu I-\generator)^{-1}$ that is a continuous operator on $\spX$ and
\begin{equation*}
(\mu I-\generator)^{-1}=\int_0^\infty e^{-\mu t}\TO_t dt.
\end{equation*}
The operator $\generator_\mu$ is the \textit{resolvent} of $\generator$ and the corresponding \textit{resolvent set} of $\generator$ is defined by
\begin{equation*}
\rho(\generator)=\big\{\mu\in\mathbb C\,|\, (\mu\Id-\generator)\, \text{is bijective and}\,\generator_\mu  \,\text{is continuous}\big\}.
\end{equation*}
In fact, $\rho(\generator)$ contains all real positive numbers and $(\mu I-\generator)^{-1}$ is bounded.
Here, we assume that $\generator$ has \textit{compact resolvent}, i.e. $\rho(\generator)\neq\emptyset$ and there exists $\mu_0\in\rho(\generator)$ such that $\generator_{\mu_0}$ is compact. Note that, through the resolvent identity, this implies the compactness of all resolvents $\generator_\mu$ for $\mu\in\rho(\generator)$. Let then $\mu\in\rho(\generator)$.
As $(\generator,\mathrm{dom}(\generator))$ is assumed to be self-adjoint, so does $\generator_\mu$, so that $\generator_\mu$ is both compact and self-adjoint.  Then, its \textit{spectrum} $\Spec(\generator_\mu)=\mathbb C\setminus\rho(\generator_\mu)$ is purely discrete and consists of isolated eigenvalues $(\geval_i^\mu)_{i\in\mathbb N}$ such that $|\geval_i^\mu|\to 0$ associated with an orthonormal basis $(\gefun_i)_{i\in\mathbb N}$ (see \cite{Yosida_2012_functional}, chapter XI). In other words, the spectral decomposition of the resolvent writes
\begin{equation*}
\generator_\mu = \sum_{i\in \N}\geval_i^\mu\,\gefun_i\otimes\gefun_i
\end{equation*}
where the functions $f_i\in\Lii$ are also eigenfunctions of the operator $\generator$ we get
\begin{equation*}
\generator = \sum_{i\in \N}\geval_i\,\gefun_i\otimes\gefun_i.
\end{equation*}
\begin{example}[Langevin] Let $\betaLang\in\R$.
The \emph{overdamped Langevin equation} driven by a \emph{potential} $V:\R^d\rightarrow\R$ is given by
%
$dX_t=-\nabla V(X_t)dt+\sqrt{2\betaLang} dW_t \quad \text{and} \quad X_0=x$.
%
Its invariant measure of the solution process $X$ is the \emph{Boltzman distribution} $\pi(dx)=z^{-1} e^{-\betaLang^{-1} V(x)}dx$ where $z$ denotes a normalizing constant. Its infinitesimal generator $\generator$ is defined by $\generator f=-\nabla V^\top \nabla f+\betaLang\Delta f$, for $f\in \Wii$, and if $\nabla V\in\Lii$ is positive and coercive, it has compact resolvent. Finally, since $\int (-\generator f)g\,d\pi
=-\int \Big[\nabla\Big(\betaLang\nabla f(x)\frac{e^{-\betaLang^{-1} V(x)}}{Z}\Big)\Big]g(x)dx
= \betaLang\int \nabla f^\top \nabla g\,d\pi=\int f(-\generator g)\,d\pi$, generator $\generator$ is self-adjoint and associated to a gradient Dirichlet form with $\dir(x)=\betaLang^{1/2}(\delta_{ij})_{i\in[d],j\in[p]}$.
\end{example}

\begin{example}[Cox-Ingersoll-Ross process] Let $d=1$, $a,b\in\R$, $\sigma\in\R_+^*$. The \emph{Cox-Ingersoll-Ross} process is solution of the SDE $dX_t=(a+bX_t)dt+\sigma\sqrt{X_t}dW_t \quad \text{and} \quad X_0=x$. 
Its invariant measure $\im$ is a Gamma distribution with shape parameter $a/\sigma^2$ and rate parameter $b/\sigma^2$. Its infinitesimal generator $\generator$ is defined for $f\in \Lii$ by 
$\generator f=(a+bx)\nabla f+\frac{\sigma^2x}{2}\Delta f$.
Note that by integration by parts, we can check that the generator $\generator$ satisfies
$\int (-\generator f)g\,d\pi
= \int \frac{\sigma^2x}{2}f'(x)g'(x)\,\pi(dx)=\int f(-\generator g)\,d\pi$, and it is associated to a gradient Dirichlet form with $\dir(x)=\sigma\sqrt{x}/\sqrt{2}$.
\end{example}

\subsection{Infinitesimal generator for diffusion processes}\label{app:background generator}
After defining the infinitesimal generator for Markov processes (see  \ref{app:operator theory}), we provide its explicit form for solution processes of equations like\eqref{Eq: SDE}. Given a smooth function $f\in\mathcal C^2(\spX,\R)$, Itô's formula (see for instance \cite{Bakry2014},  B, p.495) provides for $t\in\R_+$,
\begin{align}
f(X_t)-f(X_0)
\nonumber
&=\int_0^t\sum_{i=1}^d\partial_i f(X_{s})dX_s^i+\tfrac{1}{2}\int_0^t\sum_{i,j=1}^d\partial_{ij}^2 f(X_{s})d\langle X^{i},X^j\rangle_s\\
\nonumber
&=\int_0^t \nabla f(X_{s})^{\top}dX_t+\tfrac{1}{2}\int_0^t \mathrm{Tr}\big[X_s^\top (\nabla^2 f)(X_{s})X_s\big]ds.
\end{align}
Recalling \eqref{Eq: SDE}, we get
\begin{align}\label{Eq: Ito formula}
f(X_t)
\nonumber
&=f(X_0)+\int_0^t\bigg[\nabla f(X_s)^\top a(X_s)+\tfrac{1}{2}\mathrm{Tr}\big[b(X_s)^\top (\nabla^2 f(X_s))b(X_s)\big]\bigg]ds\\
&\quad+\int_0^t\nabla f(X_s)^\top b(X_s)dW_s.
\end{align}
Provided $f$ and $b$ are smooth enough, the expectation of the last stochastic integral vanishes so that we get
\begin{equation*}
\EE[f(X_t)|X_0=x]=f(x)+\int_0^t\EE\Big[\nabla f(X_s)^\top a(X_s)+\tfrac{1}{2}\mathrm{Tr}\big[b(X_s)^\top (\nabla^2 f(X_s))b(X_s)\big]\Big|X_0=x\Big]ds
\end{equation*}
Recalling that $\generator=\underset{t\rightarrow 0^+}\lim(\TO_tf-f)/t$, we get for every $x\in\spX$,
\begin{align}\label{Eq: computation L}
\generator f(x)
\nonumber
&=\lim_{t\rightarrow 0}\frac{\EE[f(X_t)|X_0=x]-f(x)}{t}\\
\nonumber
&=\lim_{t\rightarrow 0}\frac{1}{t}\bigg[\int_0^t\EE\Big[\nabla f(X_s)^\top a(X_s)+\tfrac{1}{2}\mathrm{Tr}\big[(X_s)^\top (\nabla^2f(X_s))b(X_s)\big]\Big]ds\Big|X_0=x\bigg]\\
&=\nabla f(x)^\top a(x)+\tfrac{1}{2}\mathrm{Tr}\big[b(x)^\top (\nabla^2 f(x))b(x)\big],
\end{align}
which provides the closed formula for the IG associated with the solution process of \eqref{Eq: SDE}.

\subsection{Spectral perturbation theory}\label{app:perturbation}

Recalling that for a bounded linear operator $A$ on some Hilbert space $\mathcal{H}$ the {\em resolvent set} of the operator $A$ is defined as $\rho(A) = \left\{\lambda \in \C \,|\, A - \lambda \Id \text{ is bijective and }  (A - \lambda \Id)^{-1} \text{ is continuous} \right\}$, and its {\it spectrum} $\Spec(A)=\C\setminus\{\rho(A)\}$, let $\lambda\subseteq \Spec(A)$ be isolated part of spectra, i.e. both $\lambda$ and $\mu=\Spec(A)\setminus\lambda$ are closed in $\Spec(A)$. Than, the \textit{Riesz spectral projector} $P_\lambda\colon\spH\to\spH$ is defined by 
\begin{equation}\label{eq:Riesz_proj}
P_\lambda= \frac{1}{2\pi}\int_{\Gamma} (z \Id-A)^{-1}dz,
\end{equation}
where $\Gamma$ is any contour in the resolvent set $\Res(A)$ with $\lambda$ in its interior and separating  $\lambda$ from $\mu$. Indeed, we have that $P_\lambda^2 = P_\lambda$ and $\spH = \range(P_\lambda) \oplus \Ker(P_\lambda)$ where $ \range(P_\lambda)$ and $\Ker(P_\lambda)$ are both invariant under $A$ and $\Spec(A_{\vert_{\range(P_\lambda)}})=\lambda$,  
$\Spec(A_{\vert_{\Ker(P_\lambda)}})=\mu$. Moreover, $P_\lambda + P_\mu = \Id$ and $P_\lambda P_\mu = P_\mu P_\lambda = 0$.

Finally if $A$ is {\em compact} operator, then the Riesz-Schauder theorem, see e.g. \cite{Reed1980}, assures that $\Spec(T)$ is a discrete set having no limit points except possibly $\lambda = 0$. Moreover, for any nonzero $\lambda \in \Spec(T)$, then $\lambda$ is an {\em eigenvalue} (i.e. it belongs to the point spectrum) of finite multiplicity, and, hence, we can deduce the spectral decomposition in the form
\begin{equation}\label{eq:Riesz_decomp}
A = \sum_{\lambda\in\Spec(A)} \lambda \, P_\lambda,
\end{equation}
where geometric multiplicity of $\lambda$, $r_\lambda=\rank(P_\lambda)$, is bounded by the algebraic multiplicity of $\lambda$. If additionally $A$ is normal operator, i.e. $AA^*= A^* A$, then $P_\lambda = P_\lambda^*$ is orthogonal projector for each $\lambda\in\Spec(A)$ and $P_\lambda = \sum_{i=1}^{r_\lambda} \psi_i \otimes \psi_i$, where $\psi_i$ are normalized eigenfunctions of $A$ corresponding to $\lambda$ and $r_\lambda$ is both algebraic and geometric multiplicity of $\lambda$.

We conclude this section with well-known perturbation bounds for eigenfunctions and spectral projectors of self-adjoint compact operators.

\begin{proposition}[\cite{DK1970}]\label{prop:davis_kahan}
Let $A$ be compact self-adjoint operator on a separable Hilbert space $\RKHS$. Given a pair $(\eeval,\egefun)\in \C \times\RKHS$ such that $\norm{\egefun}=1$, let $\geval$ be the eigenvalue of $A$ that is closest to $\eeval$ and let $\gefun$ be its normalized eigenfunction. If $\widehat{g}=\min\{\abs{\eeval-\eval}\,\vert\,\eval\in\Spec(A)\setminus\{\geval\}\}>0$, then $\sin(\sphericalangle(\egefun,\gefun))\leq \norm{A\egefun-\eeval\egefun} / \widehat{g}$.
\end{proposition}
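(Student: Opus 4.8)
\emph{Proof sketch.} The plan is to diagonalize $A$ and reduce the statement to a one-line estimate on expansion coefficients. Since $A$ is compact and self-adjoint on the separable Hilbert space $\RKHS$, the spectral theorem provides an orthonormal basis $(\psi_j)_{j\in\N}$ of $\RKHS$ consisting of eigenfunctions of $A$, say $A\psi_j=\eval_j\psi_j$, where each nonzero eigenvalue appears according to its (finite) multiplicity and the---possibly infinite-dimensional---kernel $\Ker(A)$ contributes the value $0$; in particular the $\eval_j$ range over $\Spec(A)$, whose only possible accumulation point is $0$. Reordering the basis, assume $\psi_1=\gefun$ and $\eval_1=\geval$, the eigenvalue of $A$ closest to $\eeval$. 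Since $\widehat{g}=\min\{|\eeval-\eval|:\eval\in\Spec(A)\setminus\{\geval\}\}>0$ and $\gefun$ is referred to as \emph{the} normalized eigenfunction of $\geval$, we treat $\geval$ as a simple eigenvalue, so that $\eval_j\in\Spec(A)\setminus\{\geval\}$ and hence $|\eeval-\eval_j|\geq\widehat{g}$ for every $j\geq 2$.

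First I would expand the unit vector $\egefun=\sum_{j\in\N}c_j\psi_j$, with $\sum_{j}|c_j|^2=1$. As $\gefun$ and $\egefun$ are normalized, $\cos\big(\sphericalangle(\egefun,\gefun)\big)=|\scalarp{\egefun,\gefun}|=|c_1|$, whence $\sin^2\big(\sphericalangle(\egefun,\gefun)\big)=1-|c_1|^2=\sum_{j\geq 2}|c_j|^2$. It thus suffices to bound $\sum_{j\geq 2}|c_j|^2$ by $\norm{A\egefun-\eeval\egefun}^2/\widehat{g}^2$.

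Next I would compute the residual in the eigenbasis, $A\egefun-\eeval\egefun=\sum_{j\in\N}(\eval_j-\eeval)c_j\psi_j$, so that Parseval's identity gives $\norm{A\egefun-\eeval\egefun}^2=\sum_{j\in\N}|\eval_j-\eeval|^2|c_j|^2\geq\sum_{j\geq 2}|\eval_j-\eeval|^2|c_j|^2\geq\widehat{g}^2\sum_{j\geq 2}|c_j|^2=\widehat{g}^2\,\sin^2\big(\sphericalangle(\egefun,\gefun)\big)$, where the last inequality uses the separation bound $|\eeval-\eval_j|\geq\widehat{g}$ established above. Dividing by $\widehat{g}^2>0$ and taking square roots yields $\sin\big(\sphericalangle(\egefun,\gefun)\big)\leq\norm{A\egefun-\eeval\egefun}/\widehat{g}$, as claimed. $\Box$

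The argument is elementary once the spectral theorem is in hand, so the only real care needed---the ``main obstacle,'' though a mild one---is bookkeeping: applying the spectral theorem in its orthonormal-basis form so that $0$, which need not be an isolated point of $\Spec(A)$, is covered by choosing any orthonormal basis of $\Ker(A)$; and reading the hypothesis $\widehat{g}>0$ so that it simultaneously pins $\geval$ down as the closest eigenvalue, forces it to be (treated as) simple, and delivers the uniform gap $|\eeval-\eval_j|\geq\widehat{g}$ for all $j\geq 2$ that drives the final estimate. If one instead wants the version with $\geval$ of higher multiplicity, one replaces $\gefun$ by the orthogonal projector onto the $\geval$-eigenspace and $\sphericalangle(\egefun,\gefun)$ by the angle between $\egefun$ and that subspace, and the same coefficient computation goes through verbatim.
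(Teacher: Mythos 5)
Your argument is correct. Note, however, that the paper does not actually prove Proposition~\ref{prop:davis_kahan}: it states it and cites \cite{DK1970}, so there is no ``paper proof'' to compare against. What you have written is the standard, self-contained derivation of this residual-type $\sin\theta$ bound: expand $\egefun$ in an orthonormal eigenbasis of the compact self-adjoint $A$, identify $\sin^2(\sphericalangle(\egefun,\gefun))$ with the mass $\sum_{j\geq 2}|c_j|^2$ carried by the eigendirections other than $\gefun$, compute the residual coefficientwise, and use the uniform gap $|\eeval-\eval_j|\geq\widehat{g}$ for $j\geq 2$ via Parseval. The bookkeeping you flag is handled sensibly: the hypothesis $\widehat{g}>0$ forces $\geval$ to be isolated (hence the ``closest eigenvalue'' is well-defined even though $0$ may be an accumulation point of $\Spec(A)$), the kernel of $A$ is absorbed into the basis by assigning it eigenvalue $0$, and treating $\geval$ as simple is consistent with the paper's phrasing ``its normalized eigenfunction'' (with the orthogonal-projector version correctly indicated for the higher-multiplicity case). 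The only cosmetic point worth flagging is that $\eeval\in\C$ while $A$ is self-adjoint (real spectrum); your coefficient identity $\norm{A\egefun-\eeval\egefun}^2=\sum_j|\eval_j-\eeval|^2|c_j|^2$ still holds in the complexification of $\RKHS$, which is the right place to run the argument when $\eeval$ is non-real.
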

\begin{proposition}[\cite{zwald2005}]\label{prop:spec_proj_bound}
Let $A$ and $\widehat{A}$ be two compact operators on a separable Hilbert space. For nonempty index set $J\subset\N$ let
\[
\gap_J(A)=\min\left\{\abs{\eval_i(A)-\eval_j(A)}\,\vert\, i\in\N\setminus J,\,j\in J \right\}
\]
denote the spectral gap w.r.t $J$ and let $P_J$ and $\widehat{P}_J$ be the corresponding spectral projectors of $A$ and $\widehat{A}$, respectively.  If $A$ is self-adjoint and for some $\norm{A-\widehat{A}} < \gap_J(A) $, then
\[
\norm{P_J - \widehat{P}_J}\leq \frac{\norm{A-\widehat{A}}}{\gap_J(A)}.
\] 
\end{proposition}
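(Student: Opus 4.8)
\emph{Proof plan.} The plan is to argue via the Riesz projector contour representation \eqref{eq:Riesz_proj}, applied to $A$ and to $\widehat A$ along \emph{one common contour}. Since the hypothesis $\norm{A-\widehat A}<\gap_J(A)$ forces $\gap_J(A)>0$ and $A$ is compact, the cluster $\{\eval_j(A):j\in J\}$ is isolated in $\Spec(A)$ and bounded away from $0$, so I would first fix a rectifiable closed contour $\Gamma\subset\rho(A)$ that encircles exactly this cluster, separates it from $\{\eval_i(A):i\in\N\setminus J\}$, and runs through the ``gap region'' --- for a block $J$, a smoothing of the perpendicular bisector between the two clusters, on which $\dist(z,\Spec(A))\geq\gap_J(A)/2$. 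Then $P_J=\tfrac{1}{2\pi\mathrm{i}}\int_{\Gamma}(z\Id-A)^{-1}\,dz$.

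The next step is to show the same $\Gamma$ also represents $\widehat P_J$. Writing $E=\widehat A-A$, for any $\mu\in\Spec(\widehat A)$ with unit eigenvector $x$ one has $(A-\mu\Id)x=-Ex$, and if $\mu\notin\Spec(A)$ then self-adjointness of $A$ gives $\dist(\mu,\Spec(A))^{-1}=\norm{(A-\mu\Id)^{-1}}\geq\norm{Ex}^{-1}\geq\norm{E}^{-1}$; hence $\dist(\Spec(\widehat A),\Spec(A))\leq\norm{E}<\gap_J(A)$. Consequently $\Gamma$ can be kept disjoint from $\Spec(\widehat A)$, and a homotopy argument along $t\mapsto A+tE$ (no eigenvalue crosses $\Gamma$) shows that $\widehat P_J=\tfrac{1}{2\pi\mathrm{i}}\int_{\Gamma}(z\Id-\widehat A)^{-1}\,dz$ is exactly the spectral projector of $\widehat A$ onto the perturbed $J$-cluster.

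The algebraic heart is the second resolvent identity,
\[
(z\Id-A)^{-1}-(z\Id-\widehat A)^{-1}=(z\Id-A)^{-1}(A-\widehat A)(z\Id-\widehat A)^{-1},
\]
which yields $P_J-\widehat P_J=\tfrac{1}{2\pi\mathrm{i}}\int_{\Gamma}(z\Id-A)^{-1}E(z\Id-\widehat A)^{-1}\,dz$. I would then take norms, bound $\norm{(z\Id-A)^{-1}}=\dist(z,\Spec(A))^{-1}$ by self-adjointness and $\norm{(z\Id-\widehat A)^{-1}}\leq(\dist(z,\Spec(A))-\norm{E})^{-1}$ by a Neumann series, and let $\Gamma$ degenerate to the vertical line through the mid-gap point (endpoints to $\pm\mathrm{i}\infty$); the residual scalar integral is elementary and tends to $\gap_J(A)^{-1}$, giving $\norm{P_J-\widehat P_J}\leq\norm{A-\widehat A}/\gap_J(A)$.

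The main obstacle is extracting the \emph{sharp} constant $\gap_J(A)^{-1}$: a crude choice of contour (or a term-by-term application of Proposition~\ref{prop:davis_kahan}) only yields $\norm{A-\widehat A}/(\gap_J(A)-\norm{A-\widehat A})$, and can pick up an extra curve-length factor. Getting the clean bound requires either optimising the contour as above --- which is transparent in the self-adjoint case, where all resolvent norms are reciprocals of distances to the real spectrum --- or the equivalent $\sin\Theta$ reformulation, writing $P_J-\widehat P_J=P_J\widehat P_J^{\perp}-P_J^{\perp}\widehat P_J$ and bounding each summand through the Sylvester identity $A(P_J^{\perp}\widehat P_J)-(P_J^{\perp}\widehat P_J)\widehat A=-P_J^{\perp}E\widehat P_J$, whose separation constant $\sep(\cdot)$ reduces (when $\widehat A$ is self-adjoint) to the distance between $\{\eval_i(A):i\notin J\}$ and the perturbed $J$-spectrum of $\widehat A$. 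This bookkeeping to reach exactly $\gap_J(A)$ --- possibly after a mild strengthening of the gap condition --- is the technical content carried out in \cite{zwald2005}; everything else (Weyl-type eigenvalue stability, the resolvent identity, the norm estimates) is routine.
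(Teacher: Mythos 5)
The paper cites this proposition from \cite{zwald2005} and gives no proof of its own, so there is nothing of the paper's to compare against line by line. Your contour-integral plan is the standard route and your high-level picture (Riesz projector over a common contour in the mid-gap strip, second resolvent identity, $\norm{(z\Id-A)^{-1}}=\dist(z,\Spec(A))^{-1}$ for self-adjoint $A$, Neumann series for the perturbed resolvent) is correct, as is your observation that $\widehat A$ need not be self-adjoint and hence $\widehat P_J$ must be defined as a Riesz projector rather than by a $\sin\Theta$ argument. However, the details do not close the claim as stated, for three reasons. First, the Neumann bound $\norm{(z\Id-\widehat A)^{-1}}\leq(\dist(z,\Spec(A))-\norm{E})^{-1}$ is valid only where $\dist(z,\Spec(A))>\norm{E}$; with the mid-gap contour $\min_{z\in\Gamma}\dist(z,\Spec(A))=\gap_J(A)/2$, so this step requires $\norm{E}<\gap_J(A)/2$, strictly stronger than the stated hypothesis $\norm{E}<\gap_J(A)$, and the homotopy/Bauer--Fike argument for the clusters staying separated likewise needs $\norm{E}<\gap_J(A)/2$. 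Second, your claim that the degenerating-contour scalar integral ``tends to $\gap_J(A)^{-1}$'' is not correct for fixed $\norm{E}>0$: carrying out the integral along the vertical mid-gap line gives $\frac{\norm{E}}{\pi\gap_J(A)}\int_{-\infty}^{\infty}\frac{du}{\sqrt{1+u^2}\left(\sqrt{1+u^2}-2\norm{E}/\gap_J(A)\right)}$, which equals $\norm{E}/\gap_J(A)$ only in the limit $\norm{E}\to 0$ and is otherwise strictly larger (and diverges as $\norm{E}\to\gap_J(A)/2$). You in fact contradict this claim in your own final paragraph, where you correctly note that crude contour estimates only yield $\norm{E}/(\gap_J(A)-\norm{E})$ with possible curve-length factors. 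Third, the result you are reproducing is itself stated more generously than what \cite{zwald2005} prove: their Theorem~3 assumes the perturbation is self-adjoint as well, requires $\norm{E}$ to be at most a \emph{quarter} of the unnormalized gap, and obtains the constant $2/\gap_J(A)$ rather than $1/\gap_J(A)$, so obtaining the proposition exactly as written would require a genuinely sharper argument than either your sketch or the cited source provides. In short, the architecture of your proof is right, but the punchline --- the sharp constant under the weak hypothesis $\norm{E}<\gap_J(A)$ --- is not actually delivered, and your closing hedge (``possibly after a mild strengthening of the gap condition'') is where the real work lies.
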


\section{Representations in the RKHS}\label{app:rkhs}
In section \ref{sec:problem}, we have defined the infinitesimal generator of a diffusion process and specified its form when associated with Dirichlet forms. These operators act on $\Lii$ or specific subsets of it. To develop our learning procedure, we need to understand these operators' actions when embedding into the RKHS, and define their versions for feature maps. \vspace{3pt}\\
\noindent
\textbf{IG and Dirichlet operator in RKHS.}\label{sec:RKHSproperties}
As a reminder, we consider $\RKHS$ be an RKHS and let $k:\spX\times\spX \to \R$ be the associated kernel function. The canonical feature map is denoted by $\fH(x)=k(x,\cdot)$ for $x\in\spX$ $k(x,x^\prime) = \scalarp{\fH(x), \fH(x^\prime)}$ for all $x, x^\prime \in \spX$. Assuming that $k$ is square-integrable with respect to the measure $\im$, we define the injection operator $\TS:\spH \hookrightarrow \Lii$ and its adjoint $\TS^*:\Lii\rightarrow\spH$  by $\TS^*f=\int_{\spX}f(x)\fH(x)\pi(dx).$ As a preliminary step, we need to define the reproducing partial derivatives in RKHS, which we introduce via Mercer kernels.
\begin{definition}[Mercer kernel]
 A kernel function $k\,:\, \spX\times \spX \rightarrow \R$ is called a Mercer kernel if it is a continuous and symmetric function such that for any finite set of points $\{x_1,\ldots,x_n\}\subset \spX$, the matrix $(k(x_i,x_j))_{i,j=1}^n$ is positive semi-definite.
\end{definition}
Several standard kernels satisfy the Mercer property with $s \geq 1$, including the Gaussian kernel which we will consider subsequently.

For $s\in \N$ and $m\in \N$, we define the index set $I_s^m=\{ \alpha \in \N^m\,:\, |\alpha|\leq s\}$ where $|\alpha| = \sum_{j=1}^s \alpha_j$, for $\alpha = (\alpha_1,\ldots,\alpha_m)\in \N^m$. For a function $f:\R^m\to\R$ and $x=(x_1,\ldots,x_m)\in \R^m$, we denote its partial derivative $d^{\alpha}f$ at point $x$ (if it exists) as
$$
D^{\alpha} f(x) = \prod_{j=1}^{m} D_j^{\alpha_j}f(x) = \frac{\partial^{|\alpha|}}{\partial^{\alpha_1}x_1\cdots \partial^{\alpha_m}x_m}f(x).
$$

For a function $k\in \spC^{2s}(\spX\times \spX)$ with $\spX\subset \R^d$ and $\alpha \in I_s^{d}$, we define
\begin{equation}\label{Eq: derivative kernel}
D^\alpha k(x,y) = D^{(\alpha,0)} k(x,y) = \frac{\partial^{|\alpha|}}{\partial^{\alpha_1}x_1\cdots \partial^{\alpha_m}x_m}k(x,y).
\end{equation}
and
\begin{equation*}
D^{(0,\alpha)} k(x,y) = \frac{\partial^{|\alpha|}}{\partial^{\alpha_1}y_1\cdots \partial^{\alpha_m}y_m}k(x,y).
\end{equation*}
\begin{theorem}[Theorem 1 in \cite{zhou2008derivative}]\label{Th: derivative kernel}
\label{th:partialderivativeRKHS}
Let $s\in \N$, $\spX\subseteq \R^m$ and $k$ be a Mercer kernel such that $k\in \spC^{2s}(\spX\times \spX)$ with corresponding RKHS $\RKHS$. Then the following hold:
\begin{itemize}
    \item[i.] For any $x\in \spX$, $\alpha\in I^m_s$, 
    \begin{equation}\label{Eq: Dalpha feature}
    (D^{\alpha}k)_x(y) = D^{\alpha}k(x,y)\in \RKHS.
    \end{equation}
    \item[ii.] A partial derivative reproducing property holds for $\alpha\in I_s^m$
    \begin{equation}\label{Eq: Dalpha feature }
    (D^{\alpha}h)(x) = \langle (D^{\alpha}k)_x, h \rangle_{\RKHS},\quad \forall h\in \RKHS.
    \end{equation}
\end{itemize}
\end{theorem}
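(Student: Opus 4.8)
\textit{Proof sketch.} The plan is to bootstrap from the ordinary reproducing property $h(x)=\scalarpH{h,k(x,\cdot)}$, building the derivative reproducing identity one coordinate at a time, using only completeness of $\RKHS$ and boundedness of point evaluations. \textbf{First-order step.} Fix $x\in\spX$ and a coordinate $j\in[m]$, let $\mathbf{e}_j\in\N^m$ denote the $j$-th standard basis multi-index, and for small $t\neq0$ put $u_t=t^{-1}\big(k(x{+}t\mathbf{e}_j,\cdot)-k(x,\cdot)\big)\in\RKHS$. By the reproducing property,
\[
\scalarpH{u_t,u_s}=\frac{1}{ts}\,[\,k(x{+}t\mathbf{e}_j,x{+}s\mathbf{e}_j)-k(x{+}t\mathbf{e}_j,x)-k(x,x{+}s\mathbf{e}_j)+k(x,x)\,],
\]
and since $k\in\spC^{2s}(\spX\times\spX)$ with $s\ge1$, the mixed second partial $D^{(\mathbf{e}_j,\mathbf{e}_j)}k$ exists and is continuous, so the right-hand side converges to $D^{(\mathbf{e}_j,\mathbf{e}_j)}k(x,x)$ as $t,s\to0$. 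Expanding $\norm{u_t-u_s}_{\RKHS}^2=\scalarpH{u_t,u_t}-2\scalarpH{u_t,u_s}+\scalarpH{u_s,u_s}$ shows $(u_t)_t$ is Cauchy in $\RKHS$, hence converges to some $g\in\RKHS$; evaluating $u_t(y)=\scalarpH{u_t,k(y,\cdot)}\to\scalarpH{g,k(y,\cdot)}=g(y)$ while directly $u_t(y)\to D^{(\mathbf{e}_j,0)}k(x,y)$ identifies $g=(D^{\mathbf{e}_j}k)_x$, proving (i) for $|\alpha|=1$. For (ii), any $h\in\RKHS$ satisfies $\scalarpH{h,u_t}=t^{-1}(h(x{+}t\mathbf{e}_j)-h(x))$, whose left-hand side converges to $\scalarpH{h,g}$; hence $D_jh(x)$ exists and equals $\scalarpH{h,(D^{\mathbf{e}_j}k)_x}$.

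\textbf{Induction on $|\alpha|$.} The case $|\alpha|=0$ is the ordinary reproducing property and $|\alpha|=1$ was just treated; assume (i)--(ii) hold for all multi-indices of order at most $n$ for some $n<s$, fix $\beta$ with $|\beta|=n$ and a coordinate $j$, and set $v_t=t^{-1}\big((D^\beta k)_{x+t\mathbf{e}_j}-(D^\beta k)_x\big)\in\RKHS$. Applying the order-$\beta$ reproducing identity (ii) to the function $(D^\beta k)_z\in\RKHS$ yields $\scalarpH{(D^\beta k)_z,(D^\beta k)_{z'}}=D^{(\beta,\beta)}k(z,z')$, so $\scalarpH{v_t,v_s}$ is a second difference quotient of $(z,z')\mapsto D^{(\beta,\beta)}k(z,z')$, which converges to $D^{(\beta+\mathbf{e}_j,\beta+\mathbf{e}_j)}k(x,x)$, a continuous partial derivative of total order $2n+2\le2s$. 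As in the first-order step, $(v_t)_t$ is Cauchy, its limit equals $(D^{\beta+\mathbf{e}_j}k)_x\in\RKHS$, and pairing with an arbitrary $h\in\RKHS$ gives the existence of $D^{\beta+\mathbf{e}_j}h(x)$ together with $D^{\beta+\mathbf{e}_j}h(x)=\scalarpH{h,(D^{\beta+\mathbf{e}_j}k)_x}$. Since every $\alpha\in I_s^m$ is reached from $0$ by successively adding unit multi-indices while staying in $I_s^m$, both assertions follow.

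\textbf{Main obstacle.} The delicate point is the Cauchy estimate: it amounts to controlling iterated difference quotients of $k$ of total order up to $2s$, which is exactly why the hypothesis must be $\spC^{2s}$ and not merely $\spC^{s}$, and it uses the symmetry of $k$ to collapse $\lim_{t,s\to0}\scalarpH{u_t,u_s}$ onto a single mixed partial evaluated on the diagonal. The remaining ingredients are routine: norm convergence in $\RKHS$ forces pointwise convergence, and the weak limit of the difference quotients forces classical differentiability of each $h\in\RKHS$, both via $\abs{f(y)}\le\sqrt{k(y,y)}\,\norm{f}_{\RKHS}$. An equivalent route proceeds through a Mercer series for $k$, representing $(D^\alpha k)_x$ by term-by-term differentiation and checking that it has finite $\RKHS$-norm so that (i)--(ii) can be read off from the series characterization of $\RKHS$; justifying the term-by-term differentiation, however, again reduces to a difference-quotient estimate of the type above. $\Box$
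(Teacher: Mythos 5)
The paper does not prove this statement itself---it is imported verbatim as Theorem~1 of the cited reference (Zhou, 2008) on derivative reproducing kernels. Your construction of $(D^\alpha k)_x$ as a norm-limit in $\RKHS$ of difference quotients, with Cauchyness obtained by collapsing $\scalarpH{u_t,u_s}$ (and, in the inductive step, $\scalarpH{v_t,v_s}$) onto the continuous mixed partial $D^{(\alpha,\alpha)}k$ on the diagonal---which is exactly where the $\spC^{2s}$ hypothesis is consumed---followed by pairing with an arbitrary $h\in\RKHS$ to extract the reproducing identity, is correct and is essentially the same argument given in the cited source.
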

Theorem \ref{Th: derivative kernel} allows us to introduce the first and second order operators $D$ and $D^{2}$ that act on any feature map $\phi(x)$ as
\begin{equation*}
D\phi(x)=((D^{e_i}k)_x)_{i\in[d]}\quad \text{and}\quad D^2\phi(x)=((D^{e_i+e_j}k)_x)_{i,j\in[d]}
\end{equation*}
where the $(D^{e_i}k)_x$ and $(D^{e_i+e_j}k)_x$ can be defined via \eqref{Eq: Dalpha feature}. Then, we define the operator $d$ that maps any feature map $\fH(x)$ to $d\fH(x)= \dir(x)^\top D\fH(x)$. Denote $s^\top=[\bar s_1|\dots|\bar s_d]:x\in\spX\mapsto s^{\top}(x)=[\bar s_1(x)|\dots|\bar s_d(x)]\in\R^{p\times d}$. We have
\begin{equation*}
s(x)^\top D h(x)
=\sum_{i=1}^d \bar s_i(x)\partial_ih(x)=\sum_{i=1}^d \bar s_i(x)\langle D^{e_i}\phi(x),h\rangle_\RKHS=\langle d\phi(x),h\rangle_\RKHS,
\end{equation*}
so that we can define the embedding of the Dirichlet operator $\dirOp=\dir^\top\nabla \colon \Lii\to[\Lii]^p$ into RKHS $\fDir\colon\spX\to\RKHS^p$ via the reproducing property as $\scalarpH{\fDir(x),h} =  [\dirOp \TS h](x)=\dir(x)^\top D h(x)\in\R^p$, $h\in\RKHS$. In fact, $\fDir$ is a $p$-dimensional vector with components $\fDirk{k}\colon\spX\to\RKHS$ given via $\scalarpH{\fDirk{k}(x),h} = \dir_k(x)^\top D h(x)$, $k\in[p]$. Then, we can define 
\begin{equation*}
\Cxy= - \EE_{x\sim\im}[\fDir(x)\otimes \fDir(x)] = - \textstyle{\sum_{k\in [p]} }\EE_{x\sim\im}[\fDirk{k}(x)\otimes \fDirk{k}(x)]. 
\end{equation*}
which captures correlations between input and the outputs of the generator in the RKHS. Defining  $\fW\colon\spX\to\RKHS^{p+1}$ by $\fW(x) = [\sqrt{\shift} \fH(x), \fDirk{1}(x), \ldots \fDirk{p}(x)]^\top\in\R^{d+1}$, we can consider
\begin{align*}
\Wx 
& = \TZ^*\TZ=\TS^*(\shiftedgenerator)\TS\\
&=  \EE_{x\sim\im}[ \shift \fH(x)\otimes \fH(x) + \fDir(x)\otimes \fDir(x)] =  \EE_{x\sim\im}[ \fW(x)\otimes \fW(x)]
\end{align*}
which corresponds to the RKHS covariance operator w.r.t. energy space $\Wiishort$. \vspace{3pt}\\
\noindent
\textbf{Examples.} One way to ensure that the essential assumption \eqref{eq:KE} holds is to show that $\fW$ fulfils the boundedness condition \eqref{eq:BK}, i.e. $\fW\!\in\!\mathcal{L}^\infty_\im(\spX,\RKHS^{1\!+\!p})$. Let's show that the property holds true for the Damped Langevin (see Example \ref{ex:langevin}) and the CIR process (see Example \ref{ex:cir}) if we consider the Radial Basis Function (RBF) kernel $k(x,y)=k_x(y)=\exp(-\kappa\norm{x-y}^2)$ where $\kappa>0$ is a free parameter that sets the “spread” of the kernel. As a reminder, for every $x\in\spX$, we have
\begin{equation*}
\norm{w\phi(x)}^2
=\mu\norm{\phi(x)}^2+\sum_{k=1}^{p}\norm{d_k\phi(x)}^2 \quad\text{with}\quad d_k=s_k(x)^\top D\phi(x).
\end{equation*}
Recalling that for the overdamped Langevin process $s(x)=\betaLang^{1/2}(\delta_{ij})_{i\in[d],j\in[p]}$, $ss^\top$ is diagonal so that $\langle s(x)^\top  Dk_x(\cdot),s(x)^\top Dk_x(\cdot)\rangle=0$ for every $x\in\spX$. As $\norm{\phi(x)}^2=1$, we get that for every $x\in\spX$, $\norm{w\phi(x)}^2\leq \mu=:c_\RKHS$.\\
%
%
Consider now the CIR process. We have $d=p=1$ and $s(x)=\sigma\sqrt x/\sqrt 2$ for any $x\in\spX$. For the very same reasons, $\langle s(x)^\top  Dk_x(\cdot),s(x)^\top Dk_x(\cdot)\rangle=0$ for every $x\in\spX$, so that $\norm{w\phi(x)}^2\leq \mu=:c_\RKHS$.
%
%
\\
In both Langevin and CIR cases, we have $\fW\!\in\!\mathcal{L}^\infty_\im(\spX,\RKHS^{1\!+\!p})$ when considering an RBF kernel.


\section{Statistical learning framework}\label{app:framework}
\subsection{Spectral perturbation bounds}\label{app:main_eval_bound}

In this section, we prove key perturbation result and discuss the properties of the metric distortion. 
We conclude this section with the approximation bound for arbitrary estimator $\Estim\in\HSr$ that is the basis of the statistical bounds that follow. This result is a direct consequence of \cite{Kostic2022} and Davis-Khan spectral perturbation result for compact self-adjoint operators, \cite{DK1970}.

In the framework of Koopman operator learning \cite{kostic2023sharp}, spectral bounds are expressed in terms of a distortion metric between the RKHS $\RKHS$ and $\Lii$, corresponding to the cost incurred from observing the operator's action on the $\RKHS$ rather than on its domain $\Lii$. Aligned with the risk definition \eqref{eq:risk}, here we measure in a certain way the distortion between the $\RKHS$ and $\Wiireg$ as given in  definitions in \eqref{eq:error_and_metdist}.

\propMainSpectral*
\begin{proof}
We first remark that 
\[
\normHW{(\Koop-\eeval_i\,I_{\Lii})^{-1}}^{-1} \leq \normHW{(\Koop\TZ-\TZ\EEstim)\erefun_i} / \norm{\TZ\erefun_i}\leq\error(\EEstim)\metdist(\erefun_i).
\]
Then, from the first inequality, using that $\Koop$ is self-adjoint as operator $\Wiireg\to\Wiireg$, we obtain the first bound in \eqref{eq:bound_eval}. 

So, observing that for every $\keval{}{\in}\Spec(\Koop){\setminus}\{\keval{i}\}$, 
\[
\abs{\ekeval{i}{-}\keval{}} {\geq} \abs{\keval{i}{-}\keval{}} {-} \abs{\ekeval{i}{-}\keval{i}}   
\]
we conclude that $\abs{\ekeval{i}{-}\keval{}} {\geq} \abs{\keval{i}{-}\keval{}} {-} \error(\EEstim)\,\metdist(\erefun_i)$, and
\[
\min\{\abs{\ekeval{i}-\keval{}}\,\vert\,\keval{}\in\Spec(\Koop)\setminus\{\keval{i}\}\} \geq \gap_i {-} \error(\EEstim)\,\metdist(\erefun_i).
\]
So, applying Proposition~\ref{prop:davis_kahan}, we obtain
\begin{align*}
\sin(\sphericalangle(\egefun_i,\gefun_i))& \leq \frac{\norm{\Koop\egefun_i-\ekeval{i}\,\egefun_i}}{[\gap_i - \error(\EEstim)\,\metdist(\erefun_i)]_+} \leq \frac{\norm{(\Koop\TZ-\TZ\EEstim)\erefun_i} / \norm{\TZ\erefun_i}}{[\gap_i - \error(\EEstim)\,\metdist(\erefun_i)]_+} \\
& \leq \frac{\error(\EEstim)\,\metdist(\erefun_i)}{[\gap_i - \error(\EEstim)\,\metdist(\erefun_i)]_+}.
\end{align*}
Since, clearly  $\norm{\egefun_{i}-\gefun_{i}}^2\leq 2(1-\cos(\sphericalangle(\egefun_i,\gefun_i))\leq 2\sin(\sphericalangle(\egefun_i,\gefun_i))$, the proof of the second bound is completed.
\end{proof}

Next, we adapt the \cite[Proposition 1]{kostic2023sharp} to our  setting as follows.

\begin{proposition}\label{prop:metric_dist}
Let $\EEstim\,{\in}\,\HSr$. For all $i\in[r]$ the metric distortion of $\erefun_i$ w.r.t. energy space $\Wiireg$ can be tightly bounded as
\begin{equation}\label{eq:metric_dist_bound}
1\,\,/\sqrt{\norm{\Wx}} \,\leq\, \metdist(\erefun_i)\, \leq\, \norm{\EEstim} \, / \, \sigma_{\min}^{+}(\TZ\EEstim).
\end{equation}
\end{proposition}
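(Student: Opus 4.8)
The plan is to establish the two inequalities separately, both resting on the elementary identity $\normW{h}^2=\scalarpH{h,\Wx h}=\normH{\Wx^{1/2}h}^2$, valid for every $h\in\RKHS$ because $\Wx=\TZ^{*}\TZ$ and $\TZ\colon\RKHS\to\Wiireg$ is the canonical (hence injective) injection, so that $\normW{h}=\norm{\TZ h}_{\Wiireg}$ for all $h\in\RKHS$. This is precisely the analogue, for the energy space $\Wiireg$, of \cite[Proposition~1]{kostic2023sharp}, which treats the injection $\TS$ into $\Liishort$; no genuinely new difficulty arises.

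For the lower bound I would simply observe that for any nonzero $h\in\RKHS$ one has $\normW{h}^2=\scalarpH{h,\Wx h}\le\norm{\Wx}\,\normH{h}^2$, whence $\metdist(h)=\normH{h}/\normW{h}\ge 1/\sqrt{\norm{\Wx}}$. Specializing to $h=\erefun_i$ gives the left inequality (which in fact holds for every element of $\RKHS$, and uses neither $\EEstim\in\HSr$ nor the eigenstructure).

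For the upper bound the key point is that $\erefun_i$ is an eigenfunction of $\EEstim$ with eigenvalue $(\shift-\eeval_i)^{-1}\ne0$, so $\erefun_i\in\range(\EEstim)$; since $\range(\EEstim)$ is finite-dimensional, there is a unique $v\in\Ker(\EEstim)^{\perp}$ with $\EEstim v=\erefun_i$ (namely $v=\EEstim^{\dagger}\erefun_i$, and $v\ne0$ since $\erefun_i\ne0$). On the one hand $\normH{\erefun_i}=\normH{\EEstim v}\le\norm{\EEstim}\,\normH{v}$. On the other hand, injectivity of $\TZ$ gives $\Ker(\TZ\EEstim)=\Ker(\EEstim)$, hence $v\perp\Ker(\TZ\EEstim)$, and the standard singular-value estimate for the finite-rank operator $\TZ\EEstim$ yields $\normW{\erefun_i}=\norm{\TZ\EEstim v}_{\Wiireg}\ge\sigma_{\min}^{+}(\TZ\EEstim)\,\normH{v}$, where $\sigma_{\min}^{+}(\TZ\EEstim)$ is the smallest nonzero singular value (well-defined as $\TZ\EEstim$ has rank at most $r$). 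Dividing the first estimate by the second and cancelling $\normH{v}$ gives $\metdist(\erefun_i)=\normH{\erefun_i}/\normW{\erefun_i}\le\norm{\EEstim}/\sigma_{\min}^{+}(\TZ\EEstim)$.

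The only steps demanding any care — the closest thing to an obstacle in this otherwise routine argument — are the bookkeeping ones: confirming that the energy norm restricted to $\RKHS$ is exactly the quadratic form of $\Wx$, and invoking injectivity of $\TZ$ to identify $\Ker(\TZ\EEstim)$ with $\Ker(\EEstim)$, so that the smallest-nonzero-singular-value bound may legitimately be applied to the specific preimage $v$ of $\erefun_i$ rather than to an arbitrary vector.
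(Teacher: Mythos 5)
Your proof is correct, and it is the natural argument here; the paper itself does not spell out a proof of this proposition but only labels it an adaptation of \cite[Proposition~1]{kostic2023sharp}, whose proof for the $\Liishort$ injection proceeds exactly as you do (lower bound from $\normW{h}^2=\scalarpH{h,\Wx h}\le\norm{\Wx}\normH{h}^2$; upper bound from writing $\erefun_i=\EEstim v$ with $v$ the minimal-norm preimage and using $\normH{\erefun_i}\le\norm{\EEstim}\normH{v}$ together with $\normW{\erefun_i}=\norm{\TZ\EEstim v}_{\Wiishort}\ge\sigma_{\min}^{+}(\TZ\EEstim)\normH{v}$). You also correctly isolate the one subtlety, namely that $\Ker(\TZ\EEstim)=\Ker(\EEstim)$ uses injectivity of $\TZ$ on $\range(\EEstim)$; this is implicitly assumed throughout the paper (continuous kernel, $\supp(\im)=\spX$), and without it the statement itself would fail since $\metdist(\erefun_i)$ could be infinite while the right-hand side stays finite.
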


\section{Empirical estimation}\label{app:methods} 

The \emph{Reduced Rank Regression} (RRR) estimator is the exact minimizer of \eqref{eq:rerm} under fixed rank constraint. Specifically, RRR is the minimizer $\ERRR$ of $\ERisk_\reg(G)$
within the set of bounded operators $\mathrm{HS}_r({\RKHS})$ on $\RKHS$ that have rank at most $r$. The {\em regularization} term $\reg \hnorm{\Estim}^{2}$ is added to ensure stability. The closed form solution of the empirical RRR estimator is
\begin{equation}\label{Eq: RRR def}
\ERRR=\widehat H_\gamma^{-1/2}[\![\widehat H_\gamma^{1/2}C]\!]_r,
\end{equation}
while its population counterpart is given by $\RRR=\Wreg^{-1/2}\SVDr{\Wreg^{1/2}\Cx]}$.

In order to prove Theorem \ref{thm:main_methods}, recall kernel matrices in \eqref{eq:kernel_matrices} and define 
\begin{equation}\label{eq:Fmx}
\KW=\left[\begin{matrix}
\shift\Kx& \sqrt{\shift}\Kxy\\
\sqrt{\shift}\Kxy^\top& \Ky
\end{matrix}\right]\quad\text{ and } \KWreg=\left[\begin{matrix}
\shift\Kreg& \sqrt{\shift}\Kxy\\
\sqrt{\shift}\Kxy^\top& \Ky+\reg\shift\Id
\end{matrix}\right].
\end{equation}

Now we provide the explicit forms of the matrices $\Kxy$ and $\Ky$ in the case of Langevin (see Example \ref{ex:langevin}) and CIR (see Example \ref{ex:cir}) processes, considering an RBF kernel $k(x,y)=k_x(y)=\exp(-\kappa\norm{x-y}^2)$. As a reminder (see \eqref{eq:Fmx}), $\Kxy\!\in\!\R^{n\times pn}$ and $\Ky\!\in\!\R^{p n\times p n}$ are Gram matrices whose elements, for $k\in[1+p], i,j\in[n]$ are given by
\begin{equation*}
\Kxy_{i,(k-1)n+j}\!=\!n^{-1}\scalarpH{\fH(x_i),\fDirk{k}(x_j)}\text{ and } \Ky_{(k\!-\!1)n\!+\!i,(\ell\!-\!1)n\!+\!j}\!=\!n^{-1}\scalarpH{\fDirk{k}(x_i),\fDirk{\ell}(x_j)},
\end{equation*}
where $\fDirk{k}(x_j)=s_k(x_j)^\top D\fH(x_j)$ and $D\fH(x_j)=Dk(x_j,\cdot)$ is defined by \eqref{Eq: derivative kernel}.
For $k\in[p], i,j\in[n]$, we have
\begin{align*}
\scalarpH{\fH(x_i),\fDirk{k}(x_j)}
&=\scalarpH{k_{x_i},s_k(x_j)^\top D{k_{x_j}}}\\
&=\Big\langle k_{x_i},s_k(x_j)^\top \Big(\underset{h\to 0}\lim\frac{k(\cdot,x_j+he_l)-k(\cdot,x_j)}{h}\Big)_{l\in[d]}\Big\rangle\\
&=\underset{h\to 0}\lim\Big\langle k_{x_i},s_k(x_j)^\top \Big(\frac{k(\cdot,x_j+he_l)-k(\cdot,x_j)}{h}\Big)_{l\in[d]}\Big\rangle\\
&=s_k(x_j)^\top \Big(\underset{h\to 0}\lim\frac{k(x_i,x_j+he_l)-k(x_i,x_j)}{h}\Big)_{l\in[d]}\\
&=s_k(x_j)^\top \big(D^{(0,e_l)}k(x_i,x_j)\big)_{l\in[d]}\\
&=2\gamma s_k(x_j)^\top \big((x_i^{(l)}-x_j^{(l)}) k(x_i,x_j)\big)_{l\in[d]}
\end{align*}
where we have used the continuity of the inner product to get the third line and the reproducing property to obtain the following one. Similarly, for $k,\ell\in[p], j\in[n]$, we get
\begin{align*}
\scalarpH{\fDirk{k}(x_i),\fDirk{\ell}(x_j)}
&{=}\Big\langle s_k(x_i)^\top \big(D^{(e_{l'},0)}k(x_i,\cdot)\big)_{l'\in[d]},s_\ell(x_j)^\top \Big(\underset{h\to 0}\lim\frac{k(\cdot,x_j+he_l)-k(\cdot,x_j)}{h}\Big)_{l\in[d]}\Big\rangle\\
&{=}\underset{h\to 0}\lim\Big\langle s_k(x_i)^\top \big(D^{(e_{l'},0)}k(x_i,\cdot)\big)_{l'\in[d]},s_\ell(x_j)^\top \Big(\frac{k(\cdot,x_j+he_l)-k(\cdot,x_j)}{h}\Big)_{l\in[d]}\Big\rangle\\
&{=}\underset{h\to 0}\lim\Big\langle \big(D^{(e_{l'},0)}k(x_i,\cdot)\big)_{l'\in[d]},s_k(x_j)s_\ell(x_j)^\top \Big(\frac{k(\cdot,x_j+he_l)-k(\cdot,x_j)}{h}\Big)_{l\in[d]}\Big\rangle\\
&{=}s_k(x_i)s_\ell(x_j)^\top \Big(\underset{h\to 0}\lim\frac{D^{(e_{l'},0)}k(x_i,x_j+he_l)-D^{(e_{l'},0)}k(x_i,x_j)}{h}\Big)_{l',l\in[d]}\\
&{=}s_k(x_i)s_\ell(x_j)^\top(\mathfrak D_{l',l}k(x_i,x_j))_{l',l\in[d]},
\end{align*}
where we have used the partial derivative reproducing \eqref{Eq: Dalpha feature} property and where we define for $l'\neq l$,
\begin{align}\label{Eq: matrix Kxy}
\mathfrak D_{l',l}k(x_i,x_j)
\nonumber
&=\underset{h\to 0}\lim\frac{D^{(e_{l'},0)}k(x_i,x_j+he_l)-D^{(e_{l'},0)}k(x_i,x_j)}{h}\\
\nonumber
&=-2\gamma\underset{h\to 0}\lim\frac{(x_i^{(l')}-x_j^{(l')})k(x_i,x_j+he_l)-(x_i^{(l')}-x_j^{(l')})k(x_i,x_j)}{h}\\
&=-4\gamma^2 (x_i^{(l')}-x_j^{(l')})(x_i^{(l)}-x_j^{(l)})k(x_i,x_j),
\end{align}
and for $l\in[d]$,
\begin{align}\label{Eq: matrix Ky}
\mathfrak D_{l,l}k(x_i,x_j)
\nonumber
&=\underset{h\to 0}\lim\frac{D^{(e_{l},0)}k(x_i,x_j+he_l)-D^{(e_{l},0)}k(x_i,x_j)}{h}\\
\nonumber
&=-2\gamma\underset{h\to 0}\lim\frac{(x_i^{(l)}-x_j^{(l)}-h)k(x_i,x_j+he_l)-(x_i^{(l)}-x_j^{(l)})k(x_i,x_j)}{h}\\
&=\big[2\gamma\ -4\gamma^2 (x_i^{(l)}-x_j^{(l)})^2\big]k(x_i,x_j)=2\gamma[1-2\gamma(x_i^{(l)}-x_j^{(l)})^2]k(x_i,x_j).
\end{align}
For the overdamped Langevin (see Example \ref{ex:langevin}), for $k\leq d$, $s_k(x_i)=\betaLang^{1/2}e_k$ and $s_k(x_i)s_\ell(x_j)^\top=\betaLang \Id$ so that
\begin{equation*}
\Kxy_{i,(k-1)n+j}=n^{-1}\scalarpH{\fH(x_i),\fDirk{k}(x_j)}=2\gamma \betaLang^{1/2}n^{-1} (x_i^{(k)}-x_j^{(k)}) k(x_i,x_j)
\end{equation*}
and
\begin{equation*}
\Ky_{(k\!-\!1)n\!+\!i,(\ell\!-\!1)n\!+\!j}\!
=\!n^{-1}\scalarpH{\fDirk{k}(x_j),\fDirk{\ell}(x_j)}=\betaLang n^{-1}(\mathfrak D_{l',l}k(x_j,x_j))_{l',l\in[d]},
\end{equation*}
where the elements of $\mathfrak D$ are given by \eqref{Eq: matrix Kxy} and \eqref{Eq: matrix Ky}.
For the CIR process (see Example \ref{ex:cir}) in dimension $d=1$, we have $s(x)=\sigma\sqrt{x}/\sqrt{2}$. Then,
\begin{equation*}
\Kxy_{i,(k-1)n+j}=n^{-1}\scalarpH{\fH(x_i),\fDirk{k}(x_j)}=\sqrt{2}\sigma\gamma n^{-1} \sqrt{x_j}(x_i-x_j)\exp(-\gamma|x_i-x_j|^2)
\end{equation*}
and
\begin{equation*}
\Ky_{(k\!-\!1)n\!+\!i,(\ell\!-\!1)n\!+\!j}\!=\!n^{-1}\scalarpH{\fDirk{k}(x_i),\fDirk{\ell}(x_j)}=\sigma^2\gamma n^{-1}\sqrt{x_i}\sqrt{x_j}\exp(-\gamma|x_i-x_j|^2).
\end{equation*}

Based on the previous formulas, using Theorem \ref{thm:main_methods}, which we prove next,  one can estimate generator's eigenpairs in practice.

\ThmMainMethods*
\begin{proof}
First, note that $\shift\,\KSchur\succ0$ is exactly Schurs's complement w.r.t. second diagonal block of $\KWreg\succ0$, and that, due to block inversion lemma \cite{HLAbook}, we have that 
\begin{equation}\label{eq:Finv}
\KWreg^{-1}=\left[\begin{matrix}
\shift^{-1}\KSchur& {\shift}^{-1/2}\KSchur^{-1}\Kxy (\Ky+\reg\shift\Id)^{-1}\\
{\shift}^{-1/}(\Ky+\reg\shift\Id)^{-1}\Kxy^\top \KSchur^{-1} & \textsc{A}
\end{matrix}\right].
\end{equation}
where $\textsc{A}$ is some $np \times np$ matrix. The first step in computing the RRR estimator lies in computing a truncating SVD of $\EWreg^{-1/2}\ECx$, that is $\ECx\EWreg^{-1}\ECx q_i = \esval_i^2 q_i$,  $i\in[r]$. Now, using the low-rank eigenvalue problem formulation~\cite{HLAbook}, we have that $q_i=\ES^*v_i$ and $\ES\EWreg^{-1}\ECx\ES^*v_i =\esval_i^2 v_i$. Now, recalling that $\ES=[\shift^{-1/2}\,\vert\, 0]\EZ$ we obtain and that $\EZ(\EZ^*\EZ+\shift\reg\Id)^{-1} {=} (\EZ\EZ^*+\shift\reg\Id)^{-1}\EZ$, we obtain 
\[
[\shift^{-1/2}\Id\;\vert\; 0]\KWreg^{-1}\KW[\shift^{-1/2}\Id\;\vert\; 0]^\top\Kx v_i =\esval_i^2 v_i,
\]
which after some algebra, using \eqref{eq:Finv}
\[
\shift^{-1}(\Id-\reg\KSchur^{-1})\Kx v_i =\esval_i^2 v_i,
\]
i.e. $\shift^{-1}(\Id-\reg\KSchur^{-1})\Kx\V_r = \V_r\Sigma_r^2$.

By normalizing right singular value functions $q_i$ of $\EWreg^{-1/2}\ECx$, that is by asking that $\scalarpH{q_i,q_i}=v_i^\top\Kx v_i=1$, we obtain that $\SVDr{\EWreg^{-1/2}\ECx}=\EWreg^{-1/2}\ECx Q_r Q^*_r$, for $Q_r = [q_1\;\vert\ldots\vert\;q_r]$. In other words, we have 
\[
\EKRR = \EWreg^{-1}\EZ^*[\shift^{-1/2}\Id\;\vert\; 0]^\top\ES\ES^*\V_r\V_r^\top\ES = \EZ^*\KWreg^{-1}[\shift^{-1/2}\Id\;\vert\; 0]^\top\Kx\V_r\V_r^\top\ES=\EZ^*\U_r\V_r^\top\ES,
\]
where we used that $\KSchur^{-1}\Kx\V_r=\reg^{-1}[\Kx\V_r-\shift\V_r\Sigma_r^2]$. Once with this form, we apply \cite[Theorem 2]{Kostic2022} to obtain the result.
\end{proof}

Finally, next result provides the reasoning for using empirical metric distortion of Theorem \ref{eq:error_bound_rrr}.

\begin{proposition}
\label{prop:emp_met_dist}
Under the assumptions of Theorem \ref{thm:main_methods}, for every $i\in[r]$
\begin{equation}\label{eq:emp_met_dist}
\emetdist_i= \frac{\norm{\erefun_i}}{ \norm{\EZ \erefun_i}} = \sqrt{\frac{ (\revec_i)^*U_r^\top \KW U_r \revec_i}{\norm{\KW U_r \revec_i}^2}},
\end{equation}
and
\begin{equation}\label{eq:emp_met_dist_con}
\left\vert \emetdist_i - \metdist(\erefun_i)  \right\vert \leq \left( \metdist(\erefun_i)\;\wedge \emetdist_i\right) \,\metdist(\erefun_i)\, \emetdist_i\,\norm{\EWx-\Wx}.
\end{equation}
\end{proposition}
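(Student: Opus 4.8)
The plan is to treat the two displayed identities separately, both flowing from the explicit form $\erefun_i = \EZ^\ast\U_r\revec_i$ furnished by Theorem~\ref{thm:main_methods} together with the two Gram-matrix identities $\EZ\EZ^\ast = \KW$ and $\EZ^\ast\EZ = \EWx$. I would verify these first by unpacking the definition of the sampling operator $\EZ$ in terms of the kernel matrices \eqref{eq:kernel_matrices}: this is just the $(1+p)\times(1+p)$ block computation already implicit in the proof of Theorem~\ref{thm:main_methods} (the $(0,0)$ block gives $\shift\Kx$, the $(0,k)$ blocks give $\sqrt{\shift}\,\Kxy$, and the remaining blocks give $\Ky$; while $\EZ^\ast\EZ = n^{-1}\sum_i \fW(x_i)\otimes\fW(x_i) = \EWx$). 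Granting these, the closed form \eqref{eq:emp_met_dist} is immediate: $\normH{\erefun_i}^2 = \scalarpH{\EZ^\ast\U_r\revec_i,\EZ^\ast\U_r\revec_i} = (\U_r\revec_i)^\top \KW\,\U_r\revec_i$ and $\norm{\EZ\erefun_i}^2 = \norm{\EZ\EZ^\ast\U_r\revec_i}^2 = \norm{\KW\U_r\revec_i}^2$, so dividing yields $\emetdist_i^2$.

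For the perturbation bound \eqref{eq:emp_met_dist_con} I would abbreviate $h = \erefun_i$, $u = \normW{h} = \sqrt{\scalarpH{h,\Wx h}}$ and $\widehat u = \norm{\EZ h} = \sqrt{\scalarpH{h,\EWx h}}$, so that $\metdist(h) = \normH{h}/u$ and $\emetdist_i = \normH{h}/\widehat u$. Then $|\emetdist_i - \metdist(h)| = \normH{h}\,|u-\widehat u|/(u\widehat u)$, and I would telescope $|u-\widehat u| = |u^2-\widehat u^2|/(u+\widehat u)$, using self-adjointness of $\Wx-\EWx$ to bound $|u^2-\widehat u^2| = |\scalarpH{h,(\Wx-\EWx)h}| \le \norm{\EWx-\Wx}\,\normH{h}^2$, and the trivial inequality $u+\widehat u \ge \max(u,\widehat u)$ to replace $1/(u+\widehat u)$ by $\min(1/u,1/\widehat u)$. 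Collecting factors gives $|\emetdist_i-\metdist(h)| \le \normH{h}^3\, \norm{\EWx-\Wx}\,/(u\widehat u\max(u,\widehat u))$, and recognizing $\normH{h}/\max(u,\widehat u) = \metdist(h)\wedge\emetdist_i$ produces exactly the stated bound.

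There is no genuine obstacle here — each step is a one-line manipulation — but two points warrant care. First, one must ensure the ratios are well defined, i.e.\ $u,\widehat u>0$: positivity of $\widehat u$ is built into $\emetdist_i$ being defined, and $u>0$ follows from injectivity of $\TZ$ (equivalently $\Wx\succ0$), which holds for the kernels in play. Second, the orientation of the $\min$/$\max$ must be kept straight: the telescoping $|u^2-\widehat u^2|/(u+\widehat u)$ is precisely what forces the factor $\min(\metdist(h),\emetdist_i)$ rather than the max, which is what the target inequality asserts. Incidentally one could sharpen the constant by a factor of two via $1/(u+\widehat u)\le \tfrac12\min(1/u,1/\widehat u)$, but the stated form is all that is needed downstream (Theorem~\ref{thm:error_bound} and \eqref{eq:spectral_bound_rrr}).
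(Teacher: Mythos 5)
Your proof is correct and follows essentially the same route as the paper: the first identity by evaluating $\EZ\EZ^\ast=\KW$ with $\erefun_i=\EZ^\ast\U_r\revec_i$, and the perturbation bound by telescoping a difference of squares and bounding the denominator by the larger of the two terms. The paper phrases the second step in terms of the reciprocals $\emetdist_i^{-1}, \metdist(\erefun_i)^{-1}$ rather than your $u,\widehat u = \normH{h}\,\emetdist_i^{-1},\,\normH{h}\,\metdist(\erefun_i)^{-1}$, but this is only a change of normalization and the argument is the same.
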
 
\begin{proof}
First, note that \eqref{eq:emp_met_dist} follows directly from Theorem~\ref{thm:main_methods}. Next, since for every $i\in[r]$,
\[
(\emetdist_i)^{-2} - (\metdist(\erefun_i))^{-2} = \frac{\scalarp{\erefun_i,(\EWx-\Wx)\erefun_i)}}{\norm{\erefun_i}^2} \leq \norm{\EWx-\Wx},
\]
we obtain 
\[
\left\vert \emetdist_i^{-1} - (\metdist(\erefun_i))^{-1}  \right\vert \leq \frac{\left\vert \emetdist_i^{-2} - (\metdist(\erefun_i))^{-2}  \right\vert}{(\metdist(\erefun_i))^{-1} \;\vee\; \emetdist_i^{-1}}\leq \left( \metdist(\erefun_i)\;\wedge \emetdist_i\right) \norm{\EWx-\Wx}.
\]
\end{proof}

\section{Learning bounds}\label{app:bounds}

\subsection{Main assumptions}\label{app:assumptions}

We start by observing that due to \ref{eq:BK}, we have that operator $\TZ\in\HS{\RKHS,\Wiireg}$, which according to the spectral theorem for positive self-adjoint operators, has an SVD, i.e. there exists at most countable positive sequence $(\sigma_j)_{j\in J}$, where $J=\{1,2,\ldots,\}\subseteq\N$, and ortho-normal systems $(z_j)_{j\in J}$ and $(h_j)_{j\in J}$ of $\cl(\range(\TZ))$ and $\Ker(\TZ)^\perp$, respectively, such that $\TZ h_j = \sigma_j z_j$ and $\TZ^* z_j = \sigma_j h_j$, $j\in J$. 

Now, given $\rpar\geq 0$, let us define scaled injection operator $\TSa{\rpar} \colon \RKHS \to \Wiireg$ as
\begin{equation}\label{eq:injection_scaled}
\TSa{\rpar}= \sum_{j\in J}\sigma_j^{\rpar}z_j\otimes h_j.
\end{equation}
Clearly, we have that $\TZ = \TSa{1}$, while $\range{\TSa{0}} = \cl(\range(\TZ))$. Next, we equip $\range(\TSa{\rpar})$ with a norm $\norm{\cdot}_{\RKHS,\rpar}$ to build an interpolation space 
\[
[\RKHS]_\rpar=\left\{ f\in\range(\TSa{\rpar})\;\vert\; \norm{f}_{\RKHS,\rpar}^2= \sum_{j\in J}\sigma_j^{-2 \rpar} \scalarp{f,z_j}_{\Wiishort}^2 <\infty \right\},
\]
noting that the inner product in $\Wiireg$ is given by bilinear energy functional 
\[
\scalarpW{f,g}=\shift\scalarpL{f,g} - \scalarpL{f,\generator g}.
\]

We remark that for $\rpar=1$ the space $[\RKHS]_\rpar$ is just an RKHS $\RKHS$ seen as a subspace of $\Wiireg$.  Moreover, we have the following injections
\[
 [\RKHS]_{\rpar_1} \hookrightarrow [\RKHS]_1 \hookrightarrow [\RKHS]_{\rpar_2}  \hookrightarrow [\RKHS]_{0}  = \Wiireg,
\]
where $\rpar_1\geq1\geq\rpar_2\geq0$.

In addition, from \ref{eq:BK} we also have that RKHS $\RKHS$ can be embedded into 
\[
W^{\shift,\infty}_\im(\spX)=\{f\in\Wiireg\,\vert\, \norm{f}_{W^{\shift,\infty}_\im}=\esssup_{x\sim\im}[\abs {f(x)}^2- f(x)[\generator f](x)]<\infty\}
\]
that is, for some $\epar\in(0,1]$
\[
 [\RKHS]_1 \hookrightarrow [\RKHS]_{\epar}  \hookrightarrow W^{\shift,\infty}_\im(\spX) \hookrightarrow \Wiireg.
\]
Now, according to \cite{Fischer2020}, if $\TSa{\epar,\infty}\colon [\RKHS]_\epar \hookrightarrow L^{\infty}_\im(\spX)$ denotes the injection operator, its boundedness implies the polynomial decay of the singular values of $\TZ$, i.e. $\sigma_j^2(\TZ)\lesssim j^{-1/\epar}$, $j\in J$, and the condition \ref{eq:KE} is assured.

Assumption \ref{eq:SD} allows one to quantify the effective dimension of $\RKHS$ in ambient space $\Wiireg$, while the kernel embedding property \ref{eq:KE} allows one to estimate the norms of whitened feature maps, in our generator setting vector-valued since they define rank($1{+}p$) operators on $\RKHS$, 
\begin{equation}\label{eq:whitened}
\xi(x): =\Wreg^{-1/2}\fW(x)\in\RKHS^{1{+}p}.
\end{equation}
This object plays a key role in deriving the learning rates for regression problems,~\cite{Kostic_2023_learning} and the following result is bounding it.
\begin{lemma}\label{lem:eff_dim_bound}  
Let \ref{eq:KE} hold for some $\epar\in[\spar,1]$ and $\econ\in(0,\infty)$. Then,
\begin{equation}\label{eq:whitened_norms}
\EE_{x\sim\im} \norm{\xi(x)}_{\RKHS^{1+p}}^2 \leq 
\begin{cases}
\frac{\scon^\spar}{1-\spar} (\shift\reg)^{-\spar} &, \spar<1,\\
\econ \,(\shift\reg)^{-1} &, \spar=1.
\end{cases}\text{ and }\;\norm{\xi}_\infty^2=\esssup_{x\sim\im}\norm{\xi(x)}_{\RKHS^{1+p}}^2 \leq {\econ} (\shift\reg)^{-\epar}.
\end{equation}
\end{lemma}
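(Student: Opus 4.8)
The plan is to compute the quantities via the SVD $\TZ = \sum_{j\in J}\sigma_j z_j\otimes h_j$ of the injection operator, and to use the identification $\Wreg = \Wx + \shift\reg\Id = \TZ^*\TZ + \shift\reg\Id$. The key observation is that the "energy inner product" appearing in \ref{eq:KE} is precisely $\scalarpW{z_j,z_j} = \shift\scalarpL{z_j,z_j} - \scalarpL{z_j,\generator z_j}$ (the $\Wiireg$-norm of the left singular function $z_j$), so the constant $\econ$ in \eqref{eq:c_beta} reads $\econ = \esssup_{x\sim\im}\sum_{j\in J}\sigma_j^{2\epar}\,|z_j(x)|^2\,\scalarpW{z_j,z_j}/\|z_j\|_{\Wiishort}^2$; but since the $z_j$ are normalized in $\Wiireg$, in fact $\econ = \esssup_{x\sim\im}\sum_{j\in J}\sigma_j^{2\epar}\,|z_j(x)|^2$ — this is the standard "kernel embedding" bound on the pointwise mass of whitened features. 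I would first make this reduction explicit.

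Next I would compute $\EE_{x\sim\im}\|\xi(x)\|^2 = \EE_{x\sim\im}\norm{\Wreg^{-1/2}\fW(x)}^2 = \tr\big(\Wreg^{-1}\,\EE_{x\sim\im}[\fW(x)\otimes\fW(x)]\big) = \tr(\Wreg^{-1}\Wx)$ using \eqref{eq:Wcov_dir}, i.e. the effective dimension. Diagonalizing $\Wx$ through the SVD of $\TZ$ gives $\tr(\Wreg^{-1}\Wx) = \sum_{j\in J}\frac{\sigma_j^2}{\sigma_j^2+\shift\reg}$. For the $\spar=1$ case I would bound each summand $\frac{\sigma_j^2}{\sigma_j^2+\shift\reg}\le \sigma_j^{2\epar}(\shift\reg)^{-\epar}$ — valid since $\frac{t}{t+c}\le (t/c)^{\epar}$ for $t,c>0$, $\epar\in(0,1]$ — and then $\sum_j \sigma_j^{2\epar} \le \econ$ by the reduced form of \ref{eq:KE} (the essential supremum dominates the $\Liishort$-average, using $\|z_j\|_{\Liishort}\le 1$ since $\|z_j\|_{\Wiishort}=1$ and $\norm{f}_{\Wiishort}\ge\sqrt{\shift}\norm{f}_{\Liishort}$ — actually one uses $\int |z_j(x)|^2\,d\im = \|z_j\|_{\Liishort}^2 \le \shift^{-1}$, so $\sum_j\sigma_j^{2\epar}\le \shift\,\econ\,(\shift\reg)^{-\epar}$; I'd reconcile constants here). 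For $\spar<1$, instead use \ref{eq:SD}, $\eval_j(\Wx)=\sigma_j^2\le \scon j^{-1/\spar}$: split $\frac{\sigma_j^2}{\sigma_j^2+\shift\reg}\le \min(1,\sigma_j^2/(\shift\reg))\le \min(1,\scon j^{-1/\spar}/(\shift\reg))$ and bound the sum by an integral $\int_0^\infty \min(1,\scon t^{-1/\spar}/(\shift\reg))\,dt$, which evaluates to $\frac{\scon^\spar}{1-\spar}(\shift\reg)^{-\spar}$ up to the stated constant.

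Finally, for the sup-norm bound $\norm{\xi}_\infty^2 = \esssup_{x\sim\im}\norm{\Wreg^{-1/2}\fW(x)}^2$, I would again pass to the SVD: writing $\fW(x)$ in the $z_j$-basis one gets $\norm{\Wreg^{-1/2}\fW(x)}^2 = \sum_{j\in J}\frac{1}{\sigma_j^2+\shift\reg}\,|\langle \fW(x), \cdot\rangle|^2$, and the pointwise evaluation $|z_j(x)|^2$ enters through $[\TZ^* z_j](x)$; using the same inequality $\frac{1}{\sigma_j^2+\shift\reg}\le \sigma_j^{2\epar-2}(\shift\reg)^{-\epar}$ reduces the sum to $(\shift\reg)^{-\epar}\sum_j \sigma_j^{2\epar}|z_j(x)|^2\,(\text{energy weights})$, which is exactly $\econ(\shift\reg)^{-\epar}$ by \ref{eq:KE}. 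The main obstacle is keeping the bookkeeping between the three metrics ($\RKHS$, $\Liishort$, $\Wiireg$) straight — in particular verifying that the pointwise masses $|z_j(x)|^2$ weighted by the energy functional in \eqref{eq:c_beta} are precisely what appears when expanding $\norm{\Wreg^{-1/2}\fW(x)}^2$, and handling the vector-valued ($\RKHS^{1+p}$) nature of $\fW$ and $\xi$ correctly so that the rank-$(1{+}p)$ structure collapses to the scalar trace identity. Once that identification is pinned down, the remaining estimates are elementary calculus.
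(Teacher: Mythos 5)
Your high-level plan (diagonalize via the SVD of $\TZ$, evaluate the mean as the effective dimension $\tr(\Wreg^{-1}\Wx)=\sum_j\sigma_j^2/(\sigma_j^2+\shift\reg)$, use \ref{eq:KE} for the sup-norm, and an integral comparison from \ref{eq:SD} for the $\spar<1$ mean bound) is the same route the paper takes, and the pieces you flag as ``elementary calculus'' (the inequality $\tfrac{t}{t+c}\le\min(1,t/c)\le(t/c)^\epar$; the trace identity $\EE\|\xi(x)\|^2=\tr(\Wreg^{-1}\Wx)$; the integral evaluating to $\tfrac{\scon^\spar}{1-\spar}(\shift\reg)^{-\spar}$) are all correct.

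There is, however, a genuine gap in your reading of \ref{eq:KE}. You assert that since $\|z_j\|_{\Wiireg}=1$, the constant reduces to $\econ=\esssup_{x}\sum_j\sigma_j^{2\epar}|z_j(x)|^2$. This is wrong: \eqref{eq:c_beta} involves the \emph{pointwise} quantity $\shift|z_j(x)|^2-z_j(x)[\generator z_j](x)$, not its $\im$-integral $\|z_j\|_{\Wiireg}^2$. You cannot trade the pointwise energy density for its expectation inside an essential supremum. Concretely, when you expand
\begin{equation*}
\|\Wreg^{-1/2}\fW(x)\|^2=\sum_{j\in J}\frac{1}{\sigma_j^2+\shift\reg}\sum_{i\in[1+p]}\scalarpH{\fWk{i}(x),h_j}^2,
\end{equation*}
the inner sum equals $\sigma_j^2\bigl(\shift|z_j(x)|^2+\|\dir(x)^\top\nabla z_j(x)\|^2\bigr)$ (via the reproducing properties for $\fH$ and $\fDirk{k}$ and $\TZ h_j=\sigma_j z_j$), not $\sigma_j^2|z_j(x)|^2$. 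It is precisely this full density that \eqref{eq:c_beta} controls, and with your reduced interpretation the sup-norm bound does not close — the gradient term is dropped. The same confusion propagates into your $\spar=1$ case: the correct step is $\sum_j\sigma_j^2=\sum_j\sigma_j^2\|z_j\|_{\Wiireg}^2=\int\sum_j\sigma_j^2\bigl[\shift|z_j(x)|^2-z_j(x)[\generator z_j](x)\bigr]d\im(x)\le\econ$, i.e.\ one uses $\|z_j\|_{\Wiireg}=1$ to insert the energy density and then dominates the average by the essential supremum of \eqref{eq:c_beta}; your parenthetical with $\|z_j\|_{\Liishort}\le\shift^{-1/2}$ and a spurious factor $(\shift\reg)^{-\epar}$ shows that this bookkeeping is not under control. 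Fixing the proof amounts to keeping the full energy density $\shift|z_j(x)|^2-z_j(x)[\generator z_j](x)$ (equivalently $\shift|z_j(x)|^2+\|\dir(x)^\top\nabla z_j(x)\|^2$) everywhere rather than collapsing it to $|z_j(x)|^2$.
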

\begin{proof}
W.l.o.g. set $\shift=1$, observing that the only change in the proof is in scaling $\reg>0$. We first observe that for every $j\in J$ from definition of $\fW$ and fact that  $h_j(x)=[\TZ h_j](x)$ $\im$-a.e., it holds that
\[
\sum_{i\in[1+p]}\scalarp{\fWk{i}(x),h_j}^2 {=} \shift\abs{h_j(x)}^2- h_j(x)[\generator \TZ h_j](x) {=} \shift\abs{[\TZ h_j](x)}^2{-} [\TZ h_j](x)[\generator \TZ h_j](x),\;\im\text{-a.e.},
\]
implying that $\sum_{i\in[1+p]}\scalarp{\fWk{i}(x),h_j}^2 \leq \sigma_j \shift\abs{z_j(x)}^2- z_j(x)[\generator z_j](x)$. So, for every $\epar>0$ 
\begin{align*}
\norm{\xi(x)}_{\RKHS^{1{+}p}}^2 & {=} \sum_{j\in J}\sum_{i\in [1{+}p]}\scalarp{\Wreg^{-1/2}\fWk{i}(x),h_j}^2 = \sum_{j\in J}\sum_{i\in [1{+}p]}\frac{1}{\sigma_j^2+\reg} \scalarp{\fWk{i}(x),h_j}^2 \\
& {=} \sum_{j\in J}\sum_{i\in [1{+}p]}\frac{\sigma_j^{2(1-\epar)}}{\sigma_j^2+\reg} \frac{\scalarp{\fWk{i}(x),h_j}^2}{\sigma_j^{2}} \sigma_j^{2\epar}  {=} \reg^{-\epar}\sum_{j\in J}\sum_{i\in [1{+}p]}\frac{(\sigma_j^2\reg^{-1})^{1-\epar}}{\sigma_j^2\reg^{-1}+1} \frac{\scalarp{\fWk{i}(x),h_j}^2}{\sigma_j^2} \sigma_j^{2\epar} \\
& {\leq} \reg^{-\epar}\sum_{j\in J}\frac{ \shift\abs{h_j(x)}^2- h_j(x)[\generator \TZ h_j](x)}{\sigma_j^2} \sigma_j^{2\epar} {=} \reg^{-\epar}\sum_{j\in J} (\shift\abs{z_j(x)}^2- z_j(x)[\generator z_j](x))\sigma_j^{2\epar},
\end{align*}
and, due to \eqref{eq:c_beta}, we obtain $\norm{\xi}_\infty^2 \leq \reg^{-\epar} c_{\epar}$.  On the other hand,  we also have that 
\[
\tr(\EE_{x\sim\im} [\xi(x)\otimes \xi(x)]) = \tr(\Wreg^{-1/2} \Wx \Wreg^{-1/2}) = \tr(\Wreg^{-1} \Wx),
\]
which is an effective dimension of the RKHS $\RKHS$ in $\Wiireg$. Therefore, following the proof of \citet[Lemma 11]{Fischer2020} for classical covariances in $\Liishort$, we show that the bound on the effective dimension is  
\begin{equation}
\label{eq:controlTrace_effective}
\tr(\Wreg^{-1}\Wx) = \sum_{j\in N} \frac{\sigma_j^2}{\sigma_j^2+\reg} \leq 
\begin{cases}
\frac{\scon^\spar}{1-\spar} \reg^{-\spar} &, \spar<1,\\
c_{\epar} \,\reg^{-1} &, \spar=1.
\end{cases}
\end{equation}
For the case $\spar=1$, it suffices to see that \[
\normW{z_j}=\regenergy[z_j]\leq \norm{z_j}_{W^{\shift,\infty}_\im}=\esssup_{x\sim\im} [\abs{z_j(x)}^2- z_j(x)[\generator \TZ z_j](x)],
\] 
and, hence
\[
\tr(\Wreg^{-1}\Wx){\leq} \reg^{-1} \sum_{j\in N}\sigma_j^2 \normW{z_j}^2 {=} \reg^{-1} \sum_{j\in N}\sigma_j^2 \regenergy[z_j]{\leq} \reg^{-1}\econ.
\]
For $\spar<1$ we can apply the same classical reasoning as in the proof of Proposition 3 of \cite{caponnetto2007}.
\end{proof}

\begin{proposition} 
\label{prop:regularity}
If the eigenfunctions of $\generator$ belong to $[\RKHS]_{\rpar}$, then Condition \ref{eq:RC} is satisfied.
\end{proposition}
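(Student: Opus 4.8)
The plan is to pass to the common spectral basis and reduce Condition \ref{eq:RC} to a single operator bound in which the role of the hypothesis becomes transparent. I would work with the SVD $\TZ=\sum_{j\in J}\sigma_j\,z_j\otimes h_j$, so that $\Wx=\TZ^*\TZ=\sum_j\sigma_j^2\,h_j\otimes h_j$ and $\Wx^{s}=\sum_j\sigma_j^{2s}\,h_j\otimes h_j$. First I would spell out the quantitative content of the hypothesis. Writing $\psi_i:=\TS^*\gefun_i\in\RKHS$ for the $i$-th eigenfunction, the identity $\TZ^*=\TS^*(\shift\Id-\generator)$ together with $\generator\gefun_i=\geval_i\gefun_i$ gives $\TZ^*\gefun_i=(\shift-\geval_i)\psi_i$, and since $\TSa{\rpar}=\TZ\,\Wx^{(\rpar-1)/2}$ in these coordinates, $\gefun_i\in[\RKHS]_\rpar$ is exactly the statement that $w_i:=\Wx^{-(1+\rpar)/2}\TZ^*\gefun_i=(\shift-\geval_i)\Wx^{-(1+\rpar)/2}\psi_i$ belongs to $\RKHS$, with $\normH{w_i}=\norm{\gefun_i}_{\RKHS,\rpar}$.

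Since $(\gefun_i)_{i\in\N}$ is an orthonormal basis of $\Liishort$, one has $\Cx=\TS^*\TS=\sum_i\psi_i\otimes\psi_i$ and $\Wx=\TS^*(\shift\Id-\generator)\TS=\sum_i(\shift-\geval_i)\psi_i\otimes\psi_i$. Substituting $\psi_i=(\shift-\geval_i)^{-1}\Wx^{(1+\rpar)/2}w_i$ into the first expansion yields the key factorization
\begin{equation*}
\Cx \,=\, \Wx^{(1+\rpar)/2}\,\Omega\,\Wx^{(1+\rpar)/2},\qquad \Omega \,:=\, \sum_{i\in\N}\frac{w_i\otimes w_i}{(\shift-\geval_i)^{2}}\,\succeq\,0 .
\end{equation*}
Thus, if $\Omega$ is bounded with $\norm{\Omega}\le\rcon_0^2$, then $\Cx\preceq\rcon_0^2\Wx^{1+\rpar}$; combining this with the fact that \ref{eq:BK} makes $\Cx$ trace class (hence bounded) and that $\Cx^2\preceq\norm{\Cx}\Cx$, we obtain $\Cx^2\preceq\norm{\Cx}\rcon_0^2\Wx^{1+\rpar}$, which is Condition \ref{eq:RC} with $\alpha=\rpar$ and $\rcon^2=\norm{\Cx}\rcon_0^2$. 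So the whole proposition reduces to proving $\Omega$ bounded.

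This last step is the main obstacle. Per term one only knows $\normH{w_i}^2=\norm{\gefun_i}^2_{\RKHS,\rpar}<\infty$ and $\shift-\geval_i\ge\shift$, but $\Wx^{-(1+\rpar)/2}$ is unbounded and the $w_i$ need not be orthogonal, so a naive term-by-term estimate diverges (indeed $\tr\Omega=\sum_j\sigma_j^{-2\rpar}\normL{z_j}^2$, which $\normL{z_j}^2\le\shift^{-1}$ alone does not render summable). The estimate must instead trade the interpolation smoothness against the decay of the spectrum. Concretely, expanding in the $\Wiishort$-orthonormal system $\widehat{\gefun}_i=\gefun_i/\sqrt{\shift-\geval_i}$ one gets
\begin{equation*}
\langle\Omega g,g\rangle \,=\, \sum_{i\in\N}\frac{1}{\shift-\geval_i}\,\bigl|\langle \TSa{-\rpar}g,\,\widehat{\gefun}_i\rangle_{\Wiishort}\bigr|^{2},\qquad \TSa{-\rpar}:=\textstyle\sum_j\sigma_j^{-\rpar}z_j\otimes h_j,
\end{equation*}
with $\sum_i|\langle \TSa{-\rpar}g,\widehat{\gefun}_i\rangle_{\Wiishort}|^2=\norm{\TSa{-\rpar}g}^2_{\Wiishort}$: the point is that small $\sigma_j$ (large $j$) feeds into $\TSa{-\rpar}g$ through the rough modes $z_j$, which live on high-index $\gefun_i$ where the weight $(\shift-\geval_i)^{-1}=\eval_i(\resolvent)$ is small, and one quantifies this using \ref{eq:SD} and \ref{eq:KE}.

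A cleaner alternative I would try first is to bypass the operator bound and show directly that $\Omega$ is trace class, i.e. $\sum_i(\shift-\geval_i)^{-2}\norm{\gefun_i}^2_{\RKHS,\rpar}<\infty$, from which boundedness is immediate. Either way the argument is of Fischer–Steinwart type and parallels the interpolation estimates used for the regularization-bias bound in App.~\ref{app:bias}; setting up the correct weighting so that the sum converges is the genuinely technical part, and I expect it — rather than the algebra of the factorization — to be where all the difficulty sits.
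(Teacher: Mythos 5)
Your algebra is correct and the reduction is genuine: $\TZ^*=\TS^*(\shift\Id-\generator)$ gives $\TZ^*\gefun_i=(\shift-\geval_i)\psi_i$, the spectral expansions $\Cx=\sum_i\psi_i\otimes\psi_i$ and $\Wx=\sum_i(\shift-\geval_i)\,\psi_i\otimes\psi_i$ are right, $\normH{w_i}=\norm{\gefun_i}_{\RKHS,\rpar}$ holds, and the factorization $\Cx=\Wx^{(1+\rpar)/2}\Omega\,\Wx^{(1+\rpar)/2}$ with $\Omega=\sum_i(\shift-\geval_i)^{-2}\,w_i\otimes w_i$ does reduce \ref{eq:RC} (via $\Cx^2\preceq\norm{\Cx}\Cx$) to $\norm{\Omega}<\infty$. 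But that is where you stop: you correctly observe that pointwise finiteness of the $\normH{w_i}$ does not yield this bound, sketch two possible strategies, and then explicitly leave both open ("setting up the correct weighting so that the sum converges is the genuinely technical part, and I expect it \dots to be where all the difficulty sits"). As written, this is a reformulation of \ref{eq:RC}, not a proof of it.

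The paper closes exactly this step by a Douglas-type range-inclusion theorem (Theorem~2.2 of \cite{zabczyk2020}): since the resolvent is diagonal in the $(\gefun_i)$ basis, the hypothesis that each $\gefun_i\in[\RKHS]_\rpar=\range(\TSa{\rpar})$ is read off at the operator level as $\range(\resolvent\TZ)\subseteq\range(\TSa{\rpar})$, and by Douglas this is equivalent to the existence of a bounded $C$ with $\resolvent\TZ=\TSa{\rpar}C$. Your $\Omega$ is then just $CC^*$ in disguise: using $\TZ^*\TSa{\rpar}=\Wx^{(1+\rpar)/2}$ one gets $\Cx=\TZ^*\resolvent\TZ=\Wx^{(1+\rpar)/2}C$, so $\Cx\preceq\norm{C}^2\Wx^{1+\rpar}$ and hence \ref{eq:RC}. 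In other words, the boundedness of $\Omega$ that you flag as the hard part is precisely the content of the Douglas lemma, and that lemma is the missing ingredient in your argument. If you wanted to avoid citing it, you would instead have to prove the uniform-boundedness-of-inverse step by hand (closed graph theorem applied to $\TSa{\rpar}^\dagger(\resolvent\TZ)$, say), which is essentially reproving Douglas; a term-by-term summation over the $w_i$ is the wrong tool and, as you note yourself, does not converge.
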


\begin{proof} 
Note first that the resolvent $\resolvent$ admits same eigenfunctions as the generator $\generator$, meaning that $\range(\resolvent \TZ) \subseteq \cl(\range(\TSa{\rpar}))$. But according to \cite[Theorem 2.2]{zabczyk2020}, this last condition is equivalent to Condition \ref{eq:RC}. 
\end{proof}

Finally we prove here Proposition \ref{prop:main_risk}

\propMainRisk

\begin{proof}
Recall first that since $\TZ\in\HS{\RKHS,\Wiireg}$, according to the spectral theorem for positive self-adjoint operators, $\TZ$ admits an SVD. Its form is provided in \eqref{eq:injection_scaled} taking $\rpar=1$.

Now, recalling that $[\![\cdot]\!]_r$ denotes the $r$-truncated SVD, i.e. $[\![\TZ]\!]_r = \sum_{j\in [r]}\sigma_j z_j\otimes h_j$,  since $\hnorm{\TZ - [\![\TZ]\!]_r }^2 = \sum_{j>r}\sigma_j^2$, for every $\delta>0$ there exists $r\in\N$ such that $\hnorm{\TZ - [\![\TZ]\!]_r } < \mu \delta/3$. Consequently since all the eigenvalues of $\generator$ are non-positive, $\hnorm{\resolvent(\TZ- [\![\TZ]\!]_r)}\leq \hnorm{\TZ - [\![\TZ]\!]_r }/\mu \leq \delta/3 $. Next since $\range(P_{\RKHS}\resolvent\TZ)\subseteq \cl(\range(\TZ))$, for any $j\in [r]$, there exists $g_j\in \RKHS $ s.t.  $\norm{ P_{\RKHS}\resolvent z_j-\TZ {g}_j}\leq\frac{\delta}{3 r}$, and, denoting $B_r:=\sum_{j\in[r]}   \sigma_{j} {g}_j \otimes {h}_j$ we conclude $\hnorm{ P_{\RKHS}\resolvent[\![\TZ]\!]_r - \TZ B_r}\leq \delta / 3$. Finally we recall that the set of non-defective matrices is dense in the space of matrices~\cite{TrefethenEmbree2020}, implying that the set of non-defective rank-$r$ linear operators is dense in the space of rank-$r$ linear operators on a Hilbert space. Therefore, there exists a non-defective $G\in\HSr$ such that $\hnorm{G - B_r }<\delta/ (3 \sigma_1(\TZ) )$. So, we conclude 
\begin{align*}
&\hnorm{\resolvent\TZ-\TZ G}\\
&\hspace{1cm}\leq  \hnorm{(I\!-\!P_\RKHS)\resolvent\TZ}+ \hnorm{\resolvent\TZ- [\![ \resolvent\TZ ]\!]_r}\\
&\hspace{2cm}+ \hnorm{ [\![ \resolvent\TZ ]\!]_r- \TZ B_r} + \hnorm{\TZ(G-B_r) }\\
&\hspace{1cm}\leq  \hnorm{(I\!-\!P_\RKHS)\resolvent\TZ}+ \delta.
\end{align*}

\end{proof}





\begin{example}
These three conditions depend on the 
process $X$ (through its generator $\generator$) as well as the chosen RKHS. They are satisfied, for example, by choosing for $k$ as a Gaussian kernel. Indeed, the sub-linearity conditions on A and B required to ensure the existence and uniqueness of the solution process of \eqref{Eq: SDE} ensure that A and B are sufficiently 'nice' to fulfil, notably, condition \eqref{eq:BK}.
\end{example}

\subsection{Bounding the Bias}\label{app:bias}

Recalling the error decomposition
and passing to the $\RKHS$ and $\Liishort$-norms, we have that
\begin{equation}\label{eq:error_decomp}
    \error(\EEstim)\leq \underbrace{\norm{\resolvent\TZ \!-\! \TZ\KRR}_{\RKHS\to\Wiishort}}_{\text{regularization bias}} \!+\! \underbrace{\norm{\Wx^{1/2} (\KRR \!\!-\! \RRR)}}_{\text{rank reduction bias}}  \!+\! \underbrace{\norm{\Wx^{1/2}(\RRR \!\!-\! \ERRR)}}_{\text{estimator's variance}},
\end{equation}
and continue to prove the bound of the first term. Note that, while this proof technique is standard for operator learning~\cite{Kostic_2023_learning, Li2022}, we present it here for the sake of completeness. 

\begin{proposition}\label{prop:app_bound}
Let $\KRR=\Wreg^{-1}\Cx$ for $\reg >0$, and $P_\RKHS\colon\Wiireg\to\Wiireg$ be the orthogonal projector onto $\cl(\range(\TZ))$. If the assumptions \ref{eq:BK}, \ref{eq:SD} and \ref{eq:RC}  hold, then $\norm{\KRR} \leq \rcon \bcon^{(\rpar-1)/2}$ for $\rpar\in[1,2]$, $\norm{\KRR}\leq \rcon\,(\shift\reg)^{(\rpar-1)/2}$ for $\rpar\in(0,1]$, and
\begin{equation}\label{eq:boundRC}
\normHW{\Koop\TZ - \TZ \KRR} \leq \rcon\, (\shift\reg)^{\frac{\rpar}{2}}+\normHW{(I-P_\RKHS)\Koop\TZ}.
\end{equation}
\end{proposition}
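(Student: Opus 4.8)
The plan is to reproduce the standard bias analysis of Tikhonov-regularized least-squares in an RKHS \cite{caponnetto2007,Fischer2020,Li2022,Kostic_2023_learning}, adapted to the present setup in which the ambient space is the energy space $\Wiireg$ rather than $\Liishort$, so that the relevant covariances are $\Wx=\TZ^*\TZ$ and $\Cx=\TZ^*\resolvent\TZ$ (using $\TZ^*=\TS^*(\shiftedgenerator)$, whence $\TZ^*\resolvent\TZ=\TS^*\TS=\Cx$). I will use two elementary facts throughout: by \ref{eq:BK}, $\norm{\Wx}\le\EE_{x\sim\im}\norm{\fW(x)}^2\le\bcon$, so every nonzero singular value $\sigma_j$ of $\TZ$ satisfies $\sigma_j^2\le\bcon$; and $\sup_{s\ge0}s^{\theta}/(s+\lambda)\le\lambda^{\theta-1}$ for $\theta\in(0,1]$, whereas for $\theta\ge1$ the map $s\mapsto s^{\theta}/(s+\lambda)$ is nondecreasing.

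For the bounds on $\norm{\KRR}$, since $\Wreg=\Wx+\shift\reg\Id$ commutes with $\Wx$, I would write $\KRR\KRR^*=\Wreg^{-1}\Cx^2\Wreg^{-1}\preceq\rcon^2\bigl(\Wreg^{-1}\Wx^{(1+\rpar)/2}\bigr)^2$ using \ref{eq:RC}, so that $\norm{\KRR}\le\rcon\,\norm{\Wreg^{-1}\Wx^{(1+\rpar)/2}}=\rcon\,\sup_{0\le s\le\norm{\Wx}} s^{(1+\rpar)/2}/(s+\shift\reg)$ by functional calculus. For $\rpar\in[1,2]$ the exponent is $\ge1$, so the supremum equals $\norm{\Wx}^{(1+\rpar)/2}/(\norm{\Wx}+\shift\reg)\le\norm{\Wx}^{(\rpar-1)/2}\le\bcon^{(\rpar-1)/2}$; for $\rpar\in(0,1]$ it is at most $(\shift\reg)^{(\rpar-1)/2}$ by the scalar bound. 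This yields both claimed estimates on $\norm{\KRR}$.

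For the bias term, substituting $\KRR=(\TZ^*\TZ+\shift\reg\Id)^{-1}\TZ^*\resolvent\TZ$ and using the push-through identity $\TZ(\TZ^*\TZ+\shift\reg\Id)^{-1}\TZ^*=(\mathcal{T}+\shift\reg\Id)^{-1}\mathcal{T}$ with $\mathcal{T}:=\TZ\TZ^*$ gives the residual identity $\resolvent\TZ-\TZ\KRR=\shift\reg(\mathcal{T}+\shift\reg\Id)^{-1}\resolvent\TZ$. Since $\Ker(\mathcal{T})=\Ker(\TZ^*)=\cl(\range(\TZ))^{\perp}$, the projector $\Id-P_\RKHS$ lands in $\Ker(\mathcal{T})$, on which $\shift\reg(\mathcal{T}+\shift\reg\Id)^{-1}$ is the identity; splitting $\resolvent\TZ=P_\RKHS\resolvent\TZ+(\Id-P_\RKHS)\resolvent\TZ$ therefore gives
\[
\resolvent\TZ-\TZ\KRR\;=\;\shift\reg(\mathcal{T}+\shift\reg\Id)^{-1}P_\RKHS\resolvent\TZ\;+\;(\Id-P_\RKHS)\resolvent\TZ,
\]
whose second summand contributes exactly $\normHW{(\Id-P_\RKHS)\resolvent\TZ}$ after taking norms (the irreducible part, nonzero in general because $\cl(\range(\TZ))$ need not be $\resolvent$-invariant). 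To bound the first summand I would pass to the SVD $\TZ=\sum_j\sigma_j\,z_j\otimes h_j$ — a genuine SVD since \ref{eq:BK} makes $\Wx$ trace class and hence $\TZ$ compact — in which $\Wx$, $\mathcal{T}$ and $(\mathcal{T}+\shift\reg\Id)^{-1}$ are simultaneously diagonal with entries $\sigma_j^2$, $\sigma_j^2$, $(\sigma_j^2+\shift\reg)^{-1}$, while $P_\RKHS\resolvent\TZ$ has matrix $M\Sigma$ with $\Sigma=\Diag(\sigma_j)$ and $M_{ij}=\scalarpW{z_i,\resolvent z_j}$ a bounded, symmetric, positive operator. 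The $\RKHS\to\Wiireg$ operator norm of the first summand then equals $\shift\reg\,\norm{D_\reg M\Sigma}$ with $D_\reg=\Diag((\sigma_j^2+\shift\reg)^{-1})$. In this basis \ref{eq:RC} reads $\Sigma M\Sigma^2 M\Sigma\preceq\rcon^2\Sigma^{2+2\rpar}$, i.e.\ (conjugating by $\Sigma^{-1}$) $M\Sigma^2 M\preceq\rcon^2\Sigma^{2\rpar}$; hence $(D_\reg M\Sigma)(D_\reg M\Sigma)^*=D_\reg(M\Sigma^2 M)D_\reg\preceq\rcon^2(\Sigma^{\rpar}D_\reg)^2$, so $\shift\reg\,\norm{D_\reg M\Sigma}\le\rcon\,\shift\reg\,\sup_j \sigma_j^{\rpar}/(\sigma_j^2+\shift\reg)\le\rcon\,(\shift\reg)^{\rpar/2}$ by the scalar bound with $\theta=\rpar/2\in(0,1]$. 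Combining the two summands by the triangle inequality yields \eqref{eq:boundRC}.

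The only genuinely delicate point is the bookkeeping that transports the RKHS-side source condition \ref{eq:RC} on $(\Cx,\Wx)$ to the $\Wiireg$-side inequality $M\Sigma^2 M\preceq\rcon^2\Sigma^{2\rpar}$ while keeping control despite $\resolvent$ and $\mathcal{T}$ not commuting; expanding everything in the SVD of $\TZ$, where $\resolvent$ contributes only the single bounded symmetric block $M$ and all covariance factors are diagonal, reduces this to routine algebra, and everything else collapses to the two scalar inequalities. Assumption \ref{eq:SD} is not needed for this particular estimate beyond securing the standing setup.
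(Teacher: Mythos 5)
Your proof is correct and follows essentially the same route as the paper's: the push-through identity $\TZ\Wreg^{-1}\TZ^*=(\TZ\TZ^*+\shift\reg\Id)^{-1}\TZ\TZ^*$ to get the residual formula, the SVD of $\TZ$ to diagonalize the covariances, \ref{eq:RC} applied in that basis, and the scalar bound $\sup_{s\ge0}s^\theta/(s+\lambda)\le\lambda^{\theta-1}$ to finish. The small differences are cosmetic but worth noting: you split off the $(\Id-P_\RKHS)\resolvent\TZ$ contribution as an exact summand via $\shift\reg(\mathcal{T}+\shift\reg\Id)^{-1}(\Id-P_\RKHS)=(\Id-P_\RKHS)$, whereas the paper carries it along implicitly and recovers it at the end via a triangle inequality; and you bound the operator norm of the whitened piece directly by sandwiching $M\Sigma^2M\preceq\rcon^2\Sigma^{2\rpar}$ and conjugating by $D_\reg$, whereas the paper uses a truncation-to-$P_k$ plus limit argument. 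Your treatment is slightly cleaner and sidesteps the tail-sum bookkeeping that the paper's $k\to\infty$ step requires. Your observation that \ref{eq:SD} is not actually used in this estimate is accurate and matches the structure of the paper's argument.
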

\begin{proof}
Recalling that $P_\RKHS:=\sum_{j\in J}z_j\otimes z_j$, start by denoting the orthogonal projectors on the subspace of $k$ leading left singular functions of $\TZ$ as   $P_k:=\sum_{j\in [k]}z_j\otimes z_j$, respectively. Next, observe that $\Cx {=} \TZ^*\RTZ$ and $\TZ^*\Wreg^{-1}{=}\TZ^*(\TZ^*\TZ+\shift\reg\Id)^{-1}{=}(\TZ\TZ^*+\shift\reg\Id)^{-1}\TZ$. Therefore, 
\begin{align*}
\RTZ {-} \TZ \KRR & {=} (I {-} \TZ \Wreg^{-1}\TZ^*)\RTZ \\
&{= }(I{ -} (\TZ \TZ^* {+} \shift\reg I)^{-1} \TZ\TZ^*)\RTZ \\
&{ =} \shift\reg (\TZ \TZ^* {+} \shift\reg \Id)^{-1}\RTZ \\
\end{align*}
and, hence 
\begin{align*}
\RTZ {-} \TZ \KRR & {=}\left( \sum_{j\in J}\frac{\shift\reg}{\sigma_j^2+\shift\reg}z_j\otimes z_j \right)\RTZ \\
&= \left( \sum_{j\in J}\frac{\shift\reg}{(\sigma_j^2+\shift\reg)\sigma_j}z_j\otimes (\TZ h_j) \right)\RTZ \\
&  { =}  \left( \sum_{j\in J}\frac{\shift\reg}{(\sigma_j^2+\shift\reg)\sigma_j}z_j\otimes h_j \right)\Cx.
\end{align*}
Therefore, for every $k\in J$, $\normHW{P_k(\RTZ - \TZ \KRR)}^2$ becomes
\[
\left\Vert \left( \sum_{j\in [k]}\frac{\shift\reg}{(\sigma_j^2{+}\shift\reg)\sigma_j}z_j\otimes h_j \right)\Cx^2 \left( \sum_{j\in [k]}\frac{\reg}{(\sigma_j^2{+}\shift\reg)\sigma_j}h_j\otimes z_j \right) \right\Vert_{\RKHS\to\Wiishort},
\]
which, due to \ref{eq:RC}, implies that
\[
\normHW{P_k(\RTZ {-} \TZ \KRR)} {\leq} \rcon\,\left\Vert  \sum_{j\in [k]}\frac{\shift\reg\,\sigma_j^\rpar}{\sigma_j^2+\shift\reg}z_j \otimes z_j  \right\Vert_{\Wiishort\to\Wiishort}. 
\]
On the other hand, 
\[
\sum_{j\in [k]} \frac{\shift\reg\,\sigma_j^\rpar}{\sigma_j^2 +\shift\reg} z_j \otimes z_j {=} \reg^{\frac{\rpar}{2}} \sum_{j\in [k]} \frac{(\sigma_j^2 (\shift\reg)^{-1})^{\frac{\rpar}{2}}}{\sigma_j^2(\shift\reg)^{-1} + 1} z_j \otimes z_j \preceq (\shift\reg)^{\frac{\rpar}{2}} \sum_{j\in [k]} z_j \otimes z_j,
\]
where the inequality holds due to $x^s \leq x+1$ for all $x\geq0$ and $s\in[0,1]$. Since the norm of the projector equals one, we get $\norm{P_k(\RTZ - \TZ \KRR)}\leq \rcon (\shift\reg)^{\frac{\rpar}{2}}$.

Next, observe that 
\begin{align*}
\normHW{(P_\RKHS{-}P_k)(\RTZ {-} \TZ \KRR)}^2 & {=}  \left\Vert  \sum_{j\in J\setminus[k]}\frac{\shift^2\reg^2}{(\sigma_j^2{+}\shift\reg)^2}(\TZ^*z_j)\otimes (\TZ^*z_j)  \right\Vert_{\RKHS\to\RKHS}  
\end{align*}
which is bounded by 
\begin{align*}
\sum_{j\in J\setminus[k]}\! \frac{\shift^2\reg^2}{(\sigma_{j}^2{+}\shift\reg)^2}\norm{\TZ^*z_{j}}^2 \leq\sum_{j\in J\setminus[k]} \frac{\shift^2\reg^2\,\sigma_{j}^{2\rpar}}{(\sigma_{j}^2{+}\shift\reg)^2} \leq \sum_{j\in J\setminus[k]} \!\sigma_{j}^{2\rpar}.
\end{align*}
Using triangular inequality, for every $k\in J$ we have that $\normHW{P_\RKHS(\RTZ{-}\TZ\KRR)}$ is bounded by 
\[
 \normHW{P_k(\RTZ{-}\TZ\KRR)} {+} \normHW{(P_\RKHS{-}P_k)(\RTZ{-}\TZ\KRR)}
\]
and, therefore, 
\[
\normHW{P_\RKHS(\RTZ{-}\TZ\KRR)}
 \leq \rcon (\shift\reg)^{\frac{\rpar}{2}} + \sum_{j\in J\setminus[k]} (\sigma_{j}^{2\spar})^{\frac{\rpar}{\spar}},
\]
and, hence, letting $k\to\infty$ we obtain $\norm{P_\RKHS\RTZ-\TZ\KRR} \leq \rcon (\shift\reg)^{\frac{\rpar}{2}}$. Hence, \eqref{eq:boundRC} follows from triangular inequality.

\medskip

To estimate the $\norm{\KRR}$, note that \ref{eq:RC} implies $\norm{\KRR}\leq\,\rcon\,\norm{\Wreg^{-1}\Wx^{\frac{1+\rpar}{2}}}$ and considering two cases. First, if \ref{eq:RC} holds for some $\rpar\in[1,2]$, then, clearly $\norm{\KRR} \leq \rcon \bcon^{(\rpar-1)/2}$. On the other hand, if $\rpar\in(0,1]$, then 
\[
\frac{\sigma_j^{1+\rpar}}{\sigma_j^2+\shift\reg} = (\shift\reg)^{-1}\frac{\left(\sigma_j^2(\shift\reg)^{-1}\right)^{\frac{1+\rpar}{2}}}{\sigma_j^2\,(\shift\reg)^{-1}+1}\leq (\shift\reg)^{\frac{\rpar-1}{2}},
\]
and, thus,  $\norm{\KRR}\leq \rcon\,\reg^{(\rpar-1)/2}$.

\end{proof}

\begin{remark}\label{rem:app_bound}
Inequality \eqref{eq:boundRC} says that the regularization bias is comprised of a term depending on  the choice of $\reg$, 
and on a term depending on the ``alignment'' between $\RKHS$ and $\range(\resolvent\TZ)$. The term $\norm{(I-P_\RKHS)\Koop\TZ}$ can be set to zero by two different approaches. One is choose 
a kernel which in some way minimizes $\norm{(I-P_\RKHS)\Koop\TZ}$. Another is to choose a universal kernel~\citep[Chapter 4]{Steinwart2008}, for which $\range(\Koop\TS) \subseteq \cl(\range(\TS))$.  While we here develop theory for universal kernels, deep learning approaches that leverage on on our approach can be developed, which is the direction to pursue in future.
\end{remark}

In order to proceed with bounding the bias due to rank reduction for both considered estimators, we first provide auxiliary result.

\begin{proposition}\label{prop:svals_app_bound}
Let $B:=\Wreg^{-1/2} \Cx$, let \ref{eq:RC} hold for some $\rpar\in(0,2]$, and for $j\in\N$ denote 
\begin{equation}\label{eq:rrr_bias_vals}
\lambda_j^\star=\sigma_j^2(\RTZ) = \lambda_j(\TS^*\resolvent\TS)
\end{equation}
Then for every $j\in \N$
\begin{equation}\label{eq:svals_app_bound}
\lambda_j^\star-\rcon^2\,\bcon^{\rpar/2}\,(\shift\reg)^{\rpar/2} \leq \sigma_j^2(\TB) \leq \lambda_j^\star.
\end{equation}
\end{proposition}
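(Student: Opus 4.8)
The plan is to reduce the whole statement to a comparison of the spectra of two positive self-adjoint operators on $\RKHS$. Since $\Cx$ and $\Wreg$ are positive and self-adjoint, $\sigma_j^2(\TB)=\lambda_j(\TB^*\TB)=\lambda_j(\Cx\Wreg^{-1}\Cx)$; on the other side, using the identities $\Cx=\TZ^*\resolvent\TZ$, $\Wx=\TZ^*\TZ$ and $\TZ^*=\TS^*(\shiftedgenerator)$ recorded in Section~\ref{sec:bounds}, one checks $\TZ^*\resolvent^2\TZ=\TS^*(\shiftedgenerator)\resolvent^2\TS=\TS^*\resolvent\TS$, so $\lambda_j(\TZ^*\resolvent^2\TZ)=\lambda_j^\star$. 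The \emph{upper bound} is then immediate and uses neither \ref{eq:RC} nor \ref{eq:BK}: writing $\TB=(\Wreg^{-1/2}\TZ^*)\,\RTZ$ and noting $\norm{\Wreg^{-1/2}\TZ^*}^2=\norm{\Wreg^{-1/2}\Wx\Wreg^{-1/2}}=\norm{\Wx\Wreg^{-1}}\le 1$, sub-multiplicativity of singular values gives $\sigma_j(\TB)\le\norm{\Wreg^{-1/2}\TZ^*}\,\sigma_j(\RTZ)\le\sqrt{\lambda_j^\star}$.

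For the \emph{lower bound} I would use the push-through identity $\TZ\Wreg^{-1}\TZ^*=\TZ(\TZ^*\TZ+\shift\reg\Id)^{-1}\TZ^*=\Id_{\Wiireg}-\shift\reg(\TZ\TZ^*+\shift\reg\Id)^{-1}$, which yields the exact splitting
\[
\Cx\Wreg^{-1}\Cx=\TZ^*\resolvent^2\TZ-\shift\reg\,\TZ^*\resolvent(\TZ\TZ^*+\shift\reg\Id)^{-1}\resolvent\TZ .
\]
Here $\resolvent$ is self-adjoint and positive on $\Wiireg$ (it commutes with $\shiftedgenerator$), so the second summand is positive semidefinite, and Weyl's monotonicity $\lambda_j(C-E)\ge\lambda_j(C)-\norm{E}$ for $E\succeq0$ gives $\sigma_j^2(\TB)\ge\lambda_j^\star-\shift\reg\,\norm{(\TZ\TZ^*+\shift\reg\Id)^{-1/2}\resolvent\TZ}^2$. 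It then remains to bound the residual by $\rcon^2\bcon^{\rpar/2}(\shift\reg)^{\rpar/2}$. Writing the SVD $\TZ=\sum_j\sigma_j z_j\otimes h_j$ (with $\sigma_j^2\le\norm{\Wx}\le\bcon$ by \ref{eq:BK}, since $\Wx=\EE_{x\sim\im}[\fW(x)\otimes\fW(x)]$), and using that $\resolvent$ leaves $\cl(\range(\TZ))$ invariant, $\resolvent\TZ$ is represented by $\Theta\Sigma$, where $\Sigma=\Diag(\sigma_j)$ and $\Theta\succeq0$ is the matrix of $\resolvent|_{\cl(\range(\TZ))}$ in $\{z_j\}$; condition \ref{eq:RC}, $\Cx^2\preceq\rcon^2\Wx^{1+\rpar}$, then reads $\Theta\Sigma^2\Theta\preceq\rcon^2\Sigma^{2\rpar}$. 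Hence $\shift\reg\,\norm{(\TZ\TZ^*+\shift\reg\Id)^{-1/2}\resolvent\TZ}^2=\norm{\Sigma\Theta\big(\shift\reg(\Sigma^2+\shift\reg\Id)^{-1}\big)\Theta\Sigma}\le\rcon^2\max_j\tfrac{\shift\reg\,\sigma_j^{2\rpar}}{\sigma_j^2+\shift\reg}$, and the weighted AM--GM inequality $\sigma_j^2+\shift\reg\ge\sigma_j^{\rpar}(\shift\reg)^{1-\rpar/2}$ (valid for $\rpar\in(0,2]$) together with $\sigma_j^2\le\bcon$ gives $\tfrac{\shift\reg\,\sigma_j^{2\rpar}}{\sigma_j^2+\shift\reg}\le\sigma_j^{\rpar}(\shift\reg)^{\rpar/2}\le\bcon^{\rpar/2}(\shift\reg)^{\rpar/2}$, which closes the estimate.

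The crux — and the only step that is not pure operator bookkeeping — is the invariance of $\cl(\range(\TZ))$ under $\resolvent$ invoked in the SVD computation: without it $\resolvent\TZ$ acquires a component in $\cl(\range(\TZ))^{\perp}$, on which $(\TZ\TZ^*+\shift\reg\Id)^{-1/2}$ scales like $(\shift\reg)^{-1/2}$, so the residual becomes $\Theta(1)$ instead of $O((\shift\reg)^{\rpar/2})$ and the lower bound breaks (this can be seen on a one-dimensional RKHS). In the present setting this is not an issue, since for the universal, $\spC^2$ kernels used here $\RKHS$ is dense in $\Wiireg$, i.e. $\cl(\range(\TZ))=\Wiireg$; equivalently, as in the proof of Proposition~\ref{prop:regularity}, \ref{eq:RC} itself forces $\range(\resolvent\TZ)\subseteq\cl(\range(\TSa{\rpar}))=\cl(\range(\TZ))$. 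Everything else reduces to the push-through identity, Weyl's inequality, and the elementary scalar estimate above.
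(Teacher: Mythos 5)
Your proof is correct and follows the same route as the paper's: split $\TB^*\TB = \TZ^*\resolvent^2\TZ - \shift\reg\,[\RTZ]^*(\TZ\TZ^*+\shift\reg\Id)^{-1}\RTZ$ via the push-through identity, identify $\TZ^*\resolvent^2\TZ = \TS^*\resolvent\TS$, note that the subtracted term is positive for the upper bound, and bound its norm via \ref{eq:RC} and Weyl's inequality for the lower bound. Two small differences are worth recording. First, where the paper bounds the residual by splitting into a head $[k]$ and tail $J\setminus[k]$ and then sends $k\to\infty$ (the tail estimate $\rcon^2\sum_{j>k}\sigma_j^{2\rpar}$ tacitly requires $\sum_j\sigma_j^{2\rpar}<\infty$, which for $\rpar<1$ is not guaranteed by $\tr\Wx<\infty$ alone), you go directly through $\norm{D^{1/2}\Theta\Sigma^2\Theta D^{1/2}}\leq\rcon^2\norm{D\Sigma^{2\rpar}}$ and a pointwise maximum; this is cleaner and sidesteps the tail-summability issue entirely. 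Second, you make explicit the invariance $\range(\resolvent\TZ)\subseteq\cl(\range(\TZ))$ that is needed for the matrix representation $\Cx = H\Sigma\Theta\Sigma H^*$ (equivalently, for the paper's claim that the residual equals exactly $\sum_j\frac{\shift\reg}{\sigma_j^2+\shift\reg}(\TS^*z_j)\otimes(\TS^*z_j)$ with no extra contribution on $\cl(\range(\TZ))^\perp$); the paper uses this implicitly, and your pointer to the Douglas-type equivalence underlying Proposition~\ref{prop:regularity} is the right justification. You also correctly flag that \ref{eq:BK} is used to obtain $\sigma_j^2\leq\bcon$, even though the proposition statement lists only \ref{eq:RC}; the paper's proof has the same implicit reliance (via $\norm{\Wx}^{\rpar/2}\leq\bcon^{\rpar/2}$).
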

\begin{proof}
Start by observing that 
\begin{align*}
\TB^*\TB &= [\RTZ]^*\TZ \Wreg^{-1}\TZ^*\RTZ \\
&=[\RTZ]^*\RTZ - \shift\reg [\RTZ]^*(\TZ \TZ^*+ \shift\reg \Id)^{-1} \RTZ,
\end{align*}
implies that, using $[\RTZ]^* {=} \TS$ and $[\RTZ]^*\RTZ {=} \TS^*\resolvent\TS$,
\[
\TS^*\resolvent\TS-\sum_{j\in J}\frac{\shift\reg}{\sigma_j^2+\shift\reg} (\TS^*z_j)\otimes(\TS^*z_j) = \TB^*\TB \preceq \TZ^*\TZ.
\]

Next, similarly to the above, for every $k\in J$, we have 
\begin{align*}
\left\Vert\sum_{j\in[k]}\frac{\shift\reg}{\sigma_j^2+\shift\reg} (\TS^*z_j){\otimes}(\TS^*z_j) \right\Vert & {\leq} \rcon^2 \left\Vert\sum_{j\in[k]}\frac{\sigma_j^{2\rpar}}{\sigma_j^2(\shift\reg)^{-1}+1} z_j{\otimes} z_j \right\Vert \\
& {=} \rcon^2 \left\Vert\sum_{j\in[k]}\!\!\frac{(\sigma_j^2(\shift\reg)^{-1})^{\rpar/2} \sigma_j^{\rpar} (\shift\reg)^{\rpar/2}}{\sigma_j^2(\shift\reg)^{-1}+1} z_j{\otimes} z_j \right\Vert {\leq} \rcon^2 \reg^{\rpar/2} \norm{\Wx}^{\rpar/2},    
\end{align*}
and
\[
\left\Vert\sum_{j\in J\setminus [k]}\frac{\reg}{\sigma_j^2+\shift\reg} (\TS^*z_j)\otimes(\TS^*z_j) \right\Vert \leq \rcon^2\sum_{j\in J\setminus [k]}\frac{\reg}{\sigma_j^2+\shift\reg} \sigma_j^{2\rpar}\leq \rcon^2\sum_{j\in J\setminus [k]} (\sigma_j^{2\spar})^{\rpar/\spar}.
\]
So, as before, letting $k\to\infty$ we get the result.
\end{proof}

As a consequence,  we obtain the bound for the rank reduction bias of the RRR method.

\begin{proposition}[RRR]\label{prop:rrr_bias}
Let \ref{eq:RC} hold for some $\rpar\in(0,2]$. Then the bias of $\RRR$ due to rank reduction, recalling \eqref{eq:rrr_bias_vals} is bounded as
\begin{equation}\label{eq:rrr_bias}
\sqrt{\lambda^\star_{r+1}} - \rcon\, \bcon^{\rpar/4}(\shift\reg)^{\rpar/4} - 2\,\rcon\,(\shift\reg)^{(1\wedge \rpar)/2} \leq \norm{\TZ(\KRR-\RRR)} \leq \sqrt{\lambda^\star_{r+1}}. 
\end{equation}
\end{proposition}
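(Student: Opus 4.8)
The plan is to express both inequalities in terms of the operator $B:=\Wreg^{-1/2}\Cx$ already analyzed in Proposition~\ref{prop:svals_app_bound}. Since $\KRR=\Wreg^{-1}\Cx=\Wreg^{-1/2}B$ and $\RRR=\Wreg^{-1/2}\SVDr{B}$, the rank‑reduction bias is exactly
\[
\norm{\TZ(\KRR-\RRR)}=\norm{\,\TZ\Wreg^{-1/2}\,(B-\SVDr{B})\,}_{\RKHS\to\Wiishort},
\]
so everything reduces to how the injection $\TZ\Wreg^{-1/2}$ acts on the spectral tail $B-\SVDr{B}$, whose norm is the $(r{+}1)$‑th singular value $\sigma_{r+1}(B)$.

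For the upper bound I would first note that $(\TZ\Wreg^{-1/2})^*(\TZ\Wreg^{-1/2})=\Wreg^{-1/2}\Wx\Wreg^{-1/2}\preceq\Id$, hence $\norm{\TZ\Wreg^{-1/2}}\le 1$; combining this with submultiplicativity of the operator norm and the right‑hand inequality $\sigma_{r+1}^2(B)\le\lambda_{r+1}^\star$ of \eqref{eq:svals_app_bound} gives at once $\norm{\TZ(\KRR-\RRR)}\le\sigma_{r+1}(B)\le\sqrt{\lambda_{r+1}^\star}$.

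For the lower bound I would test the operator on the $(r{+}1)$‑th right singular vector $v_{r+1}$ of $B$ (unit vector in $\RKHS$), with associated left singular vector $u_{r+1}$. Since $v_{r+1}$ is orthogonal to $v_1,\dots,v_r$ we have $\SVDr{B}v_{r+1}=0$ and $\KRR v_{r+1}=\sigma_{r+1}(B)\,\Wreg^{-1/2}u_{r+1}$, so $\norm{\TZ(\KRR-\RRR)}\ge\norm{\TZ\KRR v_{r+1}}_{\Wiishort}=\norm{\Wx^{1/2}\KRR v_{r+1}}$. Using the identity $\Wreg^{-1/2}\Wx\Wreg^{-1/2}=\Id-\shift\reg\,\Wreg^{-1}$ a short computation gives
\[
\norm{\Wx^{1/2}\KRR v_{r+1}}^2=\sigma_{r+1}^2(B)-\shift\reg\,\norm{\KRR v_{r+1}}^2\ \ge\ \sigma_{r+1}^2(B)-\shift\reg\,\norm{\KRR}^2 ,
\]
and therefore $\norm{\TZ(\KRR-\RRR)}\ge\sigma_{r+1}(B)-\sqrt{\shift\reg}\,\norm{\KRR}$ (using $\sqrt{a^2-b^2}\ge a-b$ for $a\ge b\ge 0$, negative right‑hand sides being read as vacuous bounds).

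It then remains to plug in the two estimates already available. The lower half of \eqref{eq:svals_app_bound} together with $\sqrt{x-y}\ge\sqrt x-\sqrt y$ gives $\sigma_{r+1}(B)\ge\sqrt{\lambda_{r+1}^\star}-\rcon\,\bcon^{\rpar/4}(\shift\reg)^{\rpar/4}$, while the norm bound of Proposition~\ref{prop:app_bound} ($\norm{\KRR}\le\rcon\bcon^{(\rpar-1)/2}$ for $\rpar\in[1,2]$ and $\norm{\KRR}\le\rcon(\shift\reg)^{(\rpar-1)/2}$ for $\rpar\in(0,1]$) yields $\sqrt{\shift\reg}\,\norm{\KRR}\le 2\rcon(\shift\reg)^{(1\wedge\rpar)/2}$ in the relevant regime $\shift\reg\le 1$ (the stray power of $\bcon$ absorbed in the constant, e.g.\ $\bcon=\shift$ for the RBF examples). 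Chaining the three displays delivers the stated two‑sided bound. The only genuinely delicate step is controlling $\norm{\TZ\Wreg^{-1/2}u_{r+1}}$ from below, equivalently bounding $\norm{\Wreg^{-1/2}u_{r+1}}=\norm{\KRR v_{r+1}}/\sigma_{r+1}(B)$ from above: one must argue that the $(r{+}1)$‑th left singular direction of $B$ does not concentrate on the regularization floor $\shift\reg\Id$ of $\Wreg$, and this is exactly what boundedness of $\KRR$ — a consequence of assumption \ref{eq:RC} through Proposition~\ref{prop:app_bound} — provides; everything else is SVD bookkeeping and elementary scalar inequalities.
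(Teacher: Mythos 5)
Your proposal is correct, and while your upper bound coincides with the paper's (both use $\Wx\preceq\Wreg$, equivalently $\norm{\TZ\Wreg^{-1/2}}\leq 1$, and then $\norm{B-\SVDr{B}}=\sigma_{r+1}(B)\leq\sqrt{\lambda^\star_{r+1}}$ from Proposition~\ref{prop:svals_app_bound}), your lower bound takes a genuinely different route. The paper applies the operator-norm inequality $\norm{\Wx^{1/2}A}\geq\norm{\Wreg^{1/2}A}-(\shift\reg)^{1/2}\norm{A}$ to $A=\KRR-\RRR$, and must then separately bound $\norm{\KRR-\RRR}\leq 2\norm{\KRR}$ (using that $\RRR=\KRR P_r$ with $P_r$ the right-singular projector of $B$). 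You instead test on the $(r{+}1)$-th right singular vector $v_{r+1}$, exploit $\RRR v_{r+1}=0$, and compute the squared $\Wiishort$-norm exactly via $\Wreg^{-1/2}\Wx\Wreg^{-1/2}=\Id-\shift\reg\Wreg^{-1}$, giving $\sigma_{r+1}(B)-(\shift\reg)^{1/2}\norm{\KRR}$ directly; this avoids bounding $\norm{\RRR}$ and saves a factor of two before both arguments discard it anyway. The two proofs then invoke the same inputs from Propositions~\ref{prop:svals_app_bound} and~\ref{prop:app_bound}. The residual power of $\bcon$ you flag for $\rpar\in[1,2]$ is also present (and likewise suppressed) in the paper's own displayed bound, so it is a shared cosmetic slack, not a gap you introduced. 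Your closing paragraph about controlling $\norm{\Wreg^{-1/2}u_{r+1}}$ is superfluous: the display you already wrote, namely $\norm{\KRR v_{r+1}}=\sigma_{r+1}(B)\norm{\Wreg^{-1/2}u_{r+1}}\leq\norm{\KRR}$, is precisely the required control and needs no further heuristic justification.
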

\begin{proof} Observe that
\[
\norm{\TZ(\KRR-\RRR)}\leq \norm{\Wreg^{1/2}(\KRR-\RRR)} = \norm{\TB-\SVDr{\TB}} = \sigma_{r+1}(\TB) \leq  \sigma_{r+1}(\RTZ) 
\]
while 
\begin{align*}
    \norm{\TZ(\KRR-\RRR)} & \geq \norm{\Wreg^{1/2}(\KRR-\RRR)} -(\shift\reg)^{1/2} \norm{\KRR-\RRR} \\ 
    &\geq \sigma_{r+1}(\RTZ) - \rcon \norm{\Wx}^{\rpar/4}(\shift\reg)^{\rpar/4} - 2\rcon(\shift\reg)^{(1\wedge \rpar)/2}. 
\end{align*}
\end{proof}

\subsection{Bounding the Variance}\label{app:variance}

\subsubsection{Concentration Inequalities}\label{app:concentration_ineq}

All the statistical bounds we present will relay on two versions of Bernstein inequality. The first one is Pinelis and Sakhanenko inequality for random variables in a separable Hilbert space, see \citep[][Proposition 2]{caponnetto2007}.

\begin{proposition}\label{prop:con_ineq_ps}
Let $A_i$, $i\in[n]$ be i.i.d copies of a random variable $A$ in a separable Hilbert space with norm $\norm{\cdot}$. If there exist constants $\Lambda>0$ and $\sigma>0$ such that for every $m\geq2$ $\EE\norm{A}^m\leq \frac{1}{2} m! \Lambda^{m-2} \sigma^2$, 
then with probability at least $1-\delta$ 
\begin{equation}\label{eq:con_ineq_ps}
\left\|\frac{1}{n}\sum_{i\in[n]}A_i - \EE A \right\|\leq \frac{4\sqrt{2}}{\sqrt{n}} \log\frac{2}{\delta} \sqrt{ \sigma^2 + \frac{\Lambda^2}{n} }.
\end{equation}
\end{proposition}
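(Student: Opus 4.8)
The plan is to derive \eqref{eq:con_ineq_ps} from the Bernstein-type tail inequality for Hilbert-space-valued sums of Pinelis and Sakhanenko, in three steps: pass to centered summands, quote the sub-exponential tail bound, and invert it.

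First I would center. Put $\xi_i = A_i - \EE A$, which are i.i.d.\ and mean-zero. By Minkowski's inequality in $L^m$ together with Jensen's and the power-mean inequality (valid since $m\ge2$), $(\EE\norm{\xi_1}^m)^{1/m} \le (\EE\norm{A}^m)^{1/m} + \norm{\EE A} \le (\EE\norm{A}^m)^{1/m} + (\EE\norm{A}^2)^{1/2} \le 2\,(\EE\norm{A}^m)^{1/m}$, hence $\EE\norm{\xi_1}^m \le 2^m\,\EE\norm{A}^m \le \tfrac12 m!\,(2\Lambda)^{m-2}(2\sigma)^2$. Thus the centered variable again obeys a Bernstein moment condition, now with constants $2\Lambda$ and $2\sigma$. (Following \citep{caponnetto2007} one could instead impose the moment condition directly on the centered variable, in which case this step is vacuous and only the numerical constant in the conclusion changes.)

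Next I would invoke the Pinelis--Sakhanenko inequality for $S_n=\sum_{i=1}^n \xi_i$. Since the $\xi_i$ are i.i.d., $\sum_{i=1}^n\EE\norm{\xi_i}^m \le \tfrac12 m!\,(2\Lambda)^{m-2}\,n(2\sigma)^2$ for every $m\ge2$, and the inequality yields, for all $t>0$, $\PP(\norm{S_n}\ge t)\le 2\exp\!\big({-}t^2/(2(4n\sigma^2+2\Lambda t))\big)$. This tail bound is the single genuinely non-elementary ingredient and the step I expect to be the main obstacle for a self-contained argument: its proof goes through Pinelis' exponential supermartingale inequality for martingales valued in a $2$-smooth Banach space (a Hilbert space being $\sqrt2$-smooth), fed by the cumulant estimate that the Bernstein moment condition supplies. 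Here I would simply cite it rather than reprove it.

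Finally I would invert the tail. With $\tfrac1n S_n = \tfrac1n\sum_{i=1}^n A_i - \EE A$, the previous display reads $\PP(\norm{\tfrac1n S_n}\ge u)\le 2\exp\!\big({-}nu^2/(2(4\sigma^2+2\Lambda u))\big)$; setting the right-hand side equal to $\delta$ gives the quadratic $nu^2-4\Lambda\log(2/\delta)\,u-8\sigma^2\log(2/\delta)=0$, whose positive root satisfies $u \le c_1\,\Lambda\log(2/\delta)/n + c_2\,\sigma\sqrt{\log(2/\delta)/n}$ for absolute constants $c_1,c_2$ by $\sqrt{a^2+b}\le a+\sqrt b$. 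Since $\log(2/\delta)\ge\log2$, one has $\sqrt{\log(2/\delta)}\le\log(2/\delta)/\sqrt{\log2}$, so both terms are bounded by an absolute constant times $n^{-1/2}\log(2/\delta)\sqrt{\sigma^2+\Lambda^2/n}$; carrying this constant through the centering step reproduces the factor $4\sqrt2$ of \eqref{eq:con_ineq_ps}, and the remaining work is purely arithmetic.
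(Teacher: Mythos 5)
Your proposal is correct and matches the paper's intent: the paper does not prove this proposition but simply cites it as \citep[Proposition~2]{caponnetto2007}, which is itself a statement of the Pinelis--Sakhanenko inequality for Hilbert-space-valued i.i.d.\ sums. Your reconstruction therefore actually supplies the argument the paper elides, and the steps are sound.

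One detail worth calling out is the centering step, because the paper's hypothesis differs slightly from the version in \citep{caponnetto2007}: there the Bernstein moment condition is imposed on the \emph{centered} variable $A-\EE A$, whereas the paper imposes it on $\norm{A}$ directly. Your Minkowski/Jensen/power-mean chain $(\EE\norm{A-\EE A}^m)^{1/m}\le 2(\EE\norm{A}^m)^{1/m}$ closes this gap, costing a factor $2$ on each of $\Lambda,\sigma$; combining the two resulting terms via $a+b\le\sqrt{2}\sqrt{a^2+b^2}$ then yields exactly $2\cdot 2\cdot\sqrt{2}=4\sqrt{2}$, so the constant is not merely ``reproduced up to arithmetic'' but comes out on the nose (your $\sqrt{\log(2/\delta)}\le\log(2/\delta)/\sqrt{\log 2}$ route gives a slightly weaker $\sqrt{16+8/\log 2}\approx 5.25<4\sqrt{2}\approx 5.66$, so it also works with room to spare). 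The only cosmetic slip is calling a Hilbert space ``$\sqrt{2}$-smooth''; the standard terminology is that Hilbert spaces are $2$-smooth (the exponent), and this does not affect the argument.
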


\noindent On the other hand, we recall that in \cite{minsker2017}, a dimension-free version of the non-commutative Bernstein inequality for finite-dimensional symmetric matrices is proposed (see also Theorem 7.3.1 in \cite{tropp2012user} for an easier to read and slightly improved version) as well as an extension to self-adjoint Hilbert-Schmidt operators on a separable Hilbert spaces.

\begin{proposition}\label{prop:con_ineq}
Let $A_i$, $i\in[n]$ be i.i.d copies of a Hilbert-Schmidt operator $A$ on the separable Hilbert space. Let $\norm{A}\leq c$ almost surely, $\EE A =0$ and let $\EE[A^2]\preceq V$ for some  trace class operator $V$. Then with probability at least $1-\delta$ 
\begin{equation}\label{eq:con_ineq_0}
\left\|\frac{1}{n}\sum_{i\in[n]}A_i \right\|\leq \frac{2c}{3n} \mathcal{L}_A(\delta)+ \sqrt{\frac{2\norm{V}}{n}\mathcal{L}_A(\delta)},
\end{equation}
where
\[
\mathcal{L}_A(\delta)= \log\frac{4}{\delta}+ \log\frac{\tr(V)}{\norm{V}}. 
\]
\end{proposition}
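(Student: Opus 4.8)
The plan is to deduce \eqref{eq:con_ineq_0} from Minsker's intrinsic-dimension Bernstein inequality for self-adjoint Hilbert--Schmidt operators \cite{minsker2017}, which I would state in the form: if $A_i$, $i\in[n]$, are i.i.d.\ zero-mean self-adjoint Hilbert--Schmidt operators with $\norm{A_i}\le c$ a.s.\ and $\sum_{i\in[n]}\EE[A_i^2]\preceq nV$ for a trace-class $V\succeq 0$, then for every $s>0$
\[
\PP\Big(\,\Big\|\textstyle\sum_{i\in[n]}A_i\Big\|\ge s\Big)\;\le\;4\,\frac{\tr V}{\norm V}\,\exp\!\Big(\frac{-s^{2}/2}{n\norm V+cs/3}\Big).
\]
If instead of quoting this one wants a self-contained argument, the route is the matrix-Laplace-transform method: bound $\PP(\lambda_{\max}(\sum_i A_i)\ge s)\le e^{-\theta s}\,\EE\,\tr\, e^{\theta\sum_i A_i}$, use Lieb's concavity to peel off the sum into $\tr\exp\!\big(\sum_i\log\EE e^{\theta A_i}\big)$, control $\log\EE e^{\theta A_i}\preceq \tfrac{\theta^{2}/2}{1-c\theta/3}\,\EE[A_i^{2}]$ using the a.s.\ bound, and finish with the intrinsic-dimension trace estimate $\tr(e^{M}-I)\le \tfrac{\tr M}{\norm M}\,(e^{\norm M}-1)$ valid for trace-class $M\succeq 0$; the genuinely infinite-dimensional point, handled as in \cite{minsker2017}, is that $e^{\theta S}$ need not be trace class although $e^{\theta S}-I$ is.

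Granting the displayed tail bound, the remainder is bookkeeping. Since the $A_i$ are i.i.d.\ copies of $A$ and $\EE[A^2]\preceq V$, we have $\sum_{i\in[n]}\EE[A_i^{2}]=n\,\EE[A^{2}]\preceq nV$, so the hypotheses apply; substituting $s=nt$ (so that $\|\tfrac1n\sum_i A_i\|\ge t\iff\|\sum_i A_i\|\ge nt$) gives
\[
\PP\Big(\,\Big\|\tfrac1n\textstyle\sum_{i\in[n]}A_i\Big\|\ge t\Big)\;\le\;4\,\frac{\tr V}{\norm V}\,\exp\!\Big(\frac{-nt^{2}/2}{\norm V+ct/3}\Big).
\]
Setting the right-hand side equal to $\delta$ and writing $\mathcal L_A(\delta)=\log\frac4\delta+\log\frac{\tr V}{\norm V}$, this is equivalent to $nt^{2}/2=(\norm V+ct/3)\,\mathcal L_A(\delta)$, i.e.\ to the quadratic $nt^{2}-\tfrac{2c}{3}\mathcal L_A(\delta)\,t-2\norm V\,\mathcal L_A(\delta)=0$.

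Taking the positive root and using $\sqrt{a+b}\le\sqrt a+\sqrt b$,
\[
t=\frac{c\,\mathcal L_A(\delta)}{3n}+\frac{1}{2n}\sqrt{\frac{4c^{2}\mathcal L_A(\delta)^{2}}{9}+8n\norm V\,\mathcal L_A(\delta)}\;\le\;\frac{2c}{3n}\,\mathcal L_A(\delta)+\sqrt{\frac{2\norm V}{n}\,\mathcal L_A(\delta)},
\]
so that at this $t$ (hence for all larger values) the event $\{\|\tfrac1n\sum_i A_i\|>t\}$ has probability at most $\delta$, which is exactly \eqref{eq:con_ineq_0}. The only step I expect to be delicate — and the reason this is phrased as a quotation from \cite{minsker2017,tropp2012user} rather than proved from scratch — is the infinite-dimensional matrix-exponential/trace estimate underlying the quoted inequality; once it is in hand, everything above is elementary algebra.
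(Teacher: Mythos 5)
Your proposal is correct and follows the same route as the paper: the paper gives no internal proof, but simply recalls the intrinsic-dimension Bernstein inequality of Minsker (2017) (cf.\ also Tropp, 2012, Thm.\ 7.3.1) for self-adjoint Hilbert--Schmidt operators and states the resulting deviation bound directly. Your only additional content is the elementary algebra --- replacing $\|\EE[A^2]\|$ by $\|V\|$ via $\EE[A^2]\preceq V$, setting the tail equal to $\delta$, solving the quadratic and applying $\sqrt{a+b}\le\sqrt a+\sqrt b$ --- which is exactly the bookkeeping needed to convert Minsker's tail estimate into the form \eqref{eq:con_ineq_0}, and it checks out.
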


\begin{proposition}\label{prop:cros_cov_bound}
Given $\delta>0$, with probability in the i.i.d. draw of $(x_i)_{i=1}^n$ from $\im$, it holds that
\[
\PP\{ \norm{ \EWx - \Wx }\leq \rate_n(\delta) \} \geq 1-\delta,
\]
where
\begin{equation}\label{eq:eta}
\rate_n(\delta) = \frac{2\bcon}{3n} \mathcal{L}(\delta) + \sqrt{\frac{2\norm{\Wx}}{n}\mathcal{L}(\delta)}\quad\text{ and }\quad \mathcal{L}(\delta)= \log\frac{4\tr(\Wx)}{\delta\,\norm{\Wx}}.
\end{equation}

\end{proposition}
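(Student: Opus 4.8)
The plan is to deduce the claim directly from the operator Bernstein inequality of Proposition~\ref{prop:con_ineq}. Recalling the expression \eqref{eq:Wcov_dir} for the energy-covariance, I would write $\Wx = \EE_{x\sim\im}[\fW(x)\otimes\fW(x)]$ and $\EWx = \tfrac1n\sum_{i\in[n]}\fW(x_i)\otimes\fW(x_i)$, so that $\EWx - \Wx = \tfrac1n\sum_{i\in[n]} A_i$, where the $A_i := \fW(x_i)\otimes\fW(x_i) - \Wx$ are i.i.d.\ copies of the self-adjoint Hilbert--Schmidt operator $A := \fW(x)\otimes\fW(x) - \Wx$, with $\EE A = 0$ by definition of $\Wx$.

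Next I would check the two hypotheses of Proposition~\ref{prop:con_ineq}. For the almost-sure bound, set $Y(x) := \fW(x)\otimes\fW(x) \succeq 0$; assumption~\ref{eq:BK} gives $\norm{Y(x)} = \norm{\fW(x)}^2 \leq \bcon$ $\im$-a.e., hence $\Wx = \EE[Y] \preceq \bcon\,\Id$, and from $-\Wx \preceq A \preceq Y(x)$ one gets $\norm{A} \leq \max\{\norm{Y(x)},\norm{\Wx}\} \leq \bcon$ a.s., i.e.\ $c=\bcon$. For the variance proxy, $\EE[A^2] = \EE[Y(x)^2] - \Wx^2 \preceq \EE[Y(x)^2]$, and since $Y(x)^2 \preceq \norm{Y(x)}\,Y(x) \preceq \bcon\,Y(x)$ for the positive operator $Y(x)$, I obtain $\EE[A^2] \preceq \bcon\,\EE[Y] = \bcon\,\Wx =: V$, a trace-class operator with $\tr(V)/\norm{V} = \tr(\Wx)/\norm{\Wx}$.

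Plugging $c = \bcon$ and $V = \bcon\,\Wx$ into \eqref{eq:con_ineq_0} then yields, with probability at least $1-\delta$,
\[
\norm{\EWx - \Wx} \,\leq\, \frac{2\bcon}{3n}\,\mathcal{L}_A(\delta) \,+\, \sqrt{\frac{2\norm{V}}{n}\,\mathcal{L}_A(\delta)}, \qquad \mathcal{L}_A(\delta) = \log\tfrac{4}{\delta} + \log\tfrac{\tr(\Wx)}{\norm{\Wx}} = \mathcal{L}(\delta),
\]
which, after identifying $\norm{V}=\bcon\norm{\Wx}$ and $\mathcal{L}_A(\delta)=\mathcal{L}(\delta)$, is the announced bound $\rate_n(\delta)$ in \eqref{eq:eta}. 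As for difficulty: there is essentially none — the argument is routine. The only points demanding a moment of care are the operator-order facts $-\Wx \preceq A \preceq Y(x)$ and $Y(x)^2 \preceq \bcon\,Y(x)$, and noting that the ambient space ($\RKHS^{1+p}$, or $\RKHS$ depending on how \eqref{eq:Wcov_dir} is read) is separable so that the Hilbert-space form of Bernstein's inequality in Proposition~\ref{prop:con_ineq} indeed applies; everything else is bookkeeping.
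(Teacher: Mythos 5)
Your approach is exactly the one the paper takes: the paper's proof is a one-liner invoking Proposition~\ref{prop:con_ineq} on the rank-$(1{+}p)$ operators $\fW(x_i)\otimes\fW(x_i)$, and you correctly fill in the details — centering to $A_i=\fW(x_i)\otimes\fW(x_i)-\Wx$ so that $\EE A=0$, bounding $\norm{A}\leq \bcon$ almost surely via \ref{eq:BK}, and bounding the variance proxy via $Y(x)^2\preceq\norm{Y(x)}Y(x)\preceq\bcon Y(x)$, giving $V=\bcon\Wx$.

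However, the final sentence of your argument contains a small slip. With $c=\bcon$ and $\norm{V}=\bcon\norm{\Wx}$, Proposition~\ref{prop:con_ineq} delivers
\[
\norm{\EWx-\Wx}\;\leq\;\frac{2\bcon}{3n}\,\mathcal{L}(\delta)\;+\;\sqrt{\frac{2\,\bcon\,\norm{\Wx}}{n}\,\mathcal{L}(\delta)},
\]
whereas the statement's $\rate_n(\delta)$ has $\sqrt{2\norm{\Wx}\,\mathcal{L}(\delta)/n}$ with no $\bcon$ under the root. These agree only if $\bcon\leq1$, which is not assumed. You assert the two quantities coincide ``after identifying $\norm{V}=\bcon\norm{\Wx}$'' — but that identification is precisely what produces the extra $\sqrt{\bcon}$. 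This looks like a constant-factor typo in the paper's statement of $\rate_n(\delta)$ rather than a defect in your reasoning, but a careful write-up should either carry the $\sqrt{\bcon}$ through or flag the discrepancy rather than claiming an exact match.
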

\begin{proof}
Proof follows directly from Proposition \ref{prop:con_ineq} applied to   rank-($1{+}p$) operators $\fW(x_i)\otimes\fW(x_i)$ using the fact that $\Wx = \EE[\fW(x_i)\otimes\fW(x_i)]$. 
\end{proof}

\begin{proposition}
\label{prop:bound_leftright}
Let \ref{eq:KE} hold for $\epar\in[\spar,1]$. Given $\delta>0$, with probability in the i.i.d. draw of $(x_i)_{i=1}^n$ from $\im$, it holds that
\begin{equation}\label{eq:bound_leftright}
\PP\left\{ \norm{ \Wreg^{-1/2} (\EWx - \Wx) \Wreg^{-1/2}  } \leq \rate^1_n(\reg,\delta) \right\} 
\geq 1-\delta,    
\end{equation}
where 
\begin{equation}\label{eq:reg_eta1}
\rate_n^1(\reg,\delta) = \frac{2\econ\shift^{-\epar}}{3n\reg^{\epar}} \mathcal{L}^1(\reg,\delta)+ \sqrt{\frac{2\,\econ\shift^{-\epar}}{n\,\reg^{\epar}}\mathcal{L}^1(\reg,\delta)},
\end{equation}
and
\[
\mathcal{L}^1(\reg,\delta)=\ln \frac{4}{\delta} + \ln\frac{\tr(\Wreg^{-1}\Wx)}{\norm{\Wreg^{-1}\Wx}}. 
\]
Moreover, 
\begin{equation}\label{eq:bound_simetric}
\PP\left\{ \norm{ \Wreg^{1/2} \EWreg^{-1} \Wreg^{1/2}  } \leq \frac{1}{1-\rate_n^1(\reg,\delta)} \right\} 
\geq 1-\delta.
\end{equation}
\end{proposition}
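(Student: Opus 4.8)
The plan is to apply Minsker's dimension-free Bernstein inequality, Proposition~\ref{prop:con_ineq}, to the centered i.i.d.\ self-adjoint Hilbert-Schmidt operators on $\RKHS^{1+p}$
\[
A_i \;=\; \Wreg^{-1/2}\big(\fW(x_i)\otimes\fW(x_i)\big)\Wreg^{-1/2} \;-\; \Wreg^{-1/2}\Wx\Wreg^{-1/2}, \qquad i\in[n],
\]
whose empirical mean equals $\Wreg^{-1/2}(\EWx-\Wx)\Wreg^{-1/2}$ by \eqref{eq:Wcov_dir} and which satisfy $\EE A_i = 0$. Writing $\xi(x) = \Wreg^{-1/2}\fW(x)$ as in \eqref{eq:whitened}, one has $\Wreg^{-1/2}\big(\fW(x)\otimes\fW(x)\big)\Wreg^{-1/2} = \xi(x)\otimes\xi(x)$ and $\EE[\xi(x)\otimes\xi(x)] = \Wreg^{-1/2}\Wx\Wreg^{-1/2}$; since $A_i$ is the difference of two positive operators, $\norm{A_i}\leq\max\{\norm{\xi(x_i)}^2,\,\norm{\Wreg^{-1/2}\Wx\Wreg^{-1/2}}\}\leq\norm{\xi}_\infty^2$, which by Lemma~\ref{lem:eff_dim_bound} is at most $c := \econ\shift^{-\epar}\reg^{-\epar}$ almost surely.

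For the second moment, centering only decreases it, so
\[
\EE[A_i^2]\;\preceq\;\EE\big[(\xi(x)\otimes\xi(x))^2\big]\;=\;\EE\big[\norm{\xi(x)}^2\,\xi(x)\otimes\xi(x)\big]\;\preceq\;\norm{\xi}_\infty^2\,\EE[\xi(x)\otimes\xi(x)]\;=:\;V,
\]
with $V = \norm{\xi}_\infty^2\,\Wreg^{-1/2}\Wx\Wreg^{-1/2}$ trace class by Lemma~\ref{lem:eff_dim_bound}. Then $\norm{V}\leq\norm{\xi}_\infty^2\leq c$ and $\tr(V)/\norm{V} = \tr(\Wreg^{-1}\Wx)/\norm{\Wreg^{-1}\Wx}$, so the quantity $\mathcal{L}_A(\delta)$ appearing in \eqref{eq:con_ineq_0} equals exactly $\mathcal{L}^1(\reg,\delta)$. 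Substituting these values of $c$, $\norm{V}$ and $\mathcal{L}^1(\reg,\delta)$ into \eqref{eq:con_ineq_0} produces precisely the threshold $\rate_n^1(\reg,\delta)$ of \eqref{eq:reg_eta1}, which establishes \eqref{eq:bound_leftright}.

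Finally, to obtain \eqref{eq:bound_simetric} I would argue deterministically on the event $\{\norm{\Wreg^{-1/2}(\EWx-\Wx)\Wreg^{-1/2}}\leq\rate_n^1(\reg,\delta)\}$, which by the previous step has probability at least $1-\delta$. Since $\EWreg = \EWx + \shift\reg\Id$ and $\Wreg = \Wx + \shift\reg\Id$, the operator $M := \Wreg^{-1/2}\EWreg\Wreg^{-1/2} = \Id + \Wreg^{-1/2}(\EWx-\Wx)\Wreg^{-1/2}$ obeys $M\succeq(1-\rate_n^1(\reg,\delta))\Id$; provided $\rate_n^1(\reg,\delta)<1$, inverting this operator inequality yields $\Wreg^{1/2}\EWreg^{-1}\Wreg^{1/2} = M^{-1}\preceq(1-\rate_n^1(\reg,\delta))^{-1}\Id$, i.e.\ \eqref{eq:bound_simetric}. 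The only obstacle is bookkeeping: verifying that the almost-sure bound and the variance proxy $V$ reproduce the precise constants in $\rate_n^1$ and that $V$ is trace class, both of which are immediate from Lemma~\ref{lem:eff_dim_bound}; there is no genuine difficulty beyond this.
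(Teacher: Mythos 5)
Your proposal is correct and follows the paper's own argument essentially step for step: apply Minsker's inequality (Proposition~\ref{prop:con_ineq}) to the centered operators $\xi(x_i)\otimes\xi(x_i)-\EE[\xi\otimes\xi]$, control the almost-sure bound and variance proxy via Lemma~\ref{lem:eff_dim_bound} to get $c=\norm{V}\leq\econ(\shift\reg)^{-\epar}$ with $\tr(V)/\norm{V}=\tr(\Wreg^{-1}\Wx)/\norm{\Wreg^{-1}\Wx}$, and then derive \eqref{eq:bound_simetric} deterministically from $\Wreg^{-1/2}\EWreg\Wreg^{-1/2}=\Id+\Wreg^{-1/2}(\EWx-\Wx)\Wreg^{-1/2}$. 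Your write-up is in fact slightly more careful than the paper's (you explicitly state that centering can only help both the norm and second-moment bounds, and you verify $\norm{A_i}\leq\max\{\norm{\xi(x_i)}^2,\norm{\Wreg^{-1/2}\Wx\Wreg^{-1/2}}\}\leq\norm{\xi}_\infty^2$), but the route and the resulting constants are identical.
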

\begin{proof}
The idea is to apply Proposition \ref{prop:con_ineq} for operator $\xi(x) \otimes \xi(x)$, where $\xi(x)$ is defined in \eqref{eq:whitened}. Due to \eqref{lem:eff_dim_bound}, we have that $\norm{A}\leq\norm{\xi}_\infty^2 \leq (\shift\reg)^{-\epar}\econ$.  On the other hand, we have that 
\[
\EE_{x\sim\im} [\xi(x)\otimes \xi(x)]^2 \preceq \norm{\xi}_\infty^2 \EE_{x\sim\im} [\xi(x)\otimes \xi(x)] =  \norm{\xi}_\infty^2 \Wreg^{-1/2} \Wx \Wreg^{-1/2},
\]
and, hence \eqref{eq:bound_leftright} follows. To complete the proof, observe that
\[
\norm{I_\RKHS - \Wreg^{-1/2} \EWreg \Wreg^{-1/2}} = \norm{\Wreg^{-1/2} (\Wx-\EWx) \Wreg^{-1/2}}\leq \rate_n^1(\reg,\delta),
\]
and, hence for $\rate_n^1(\reg,\delta)$ smaller than one we obtain 
\[
\norm{ \Wreg^{1/2} \EWreg^{-1} \Wreg^{1/2}} =  \norm{(\Wreg^{-1/2} \EWreg \Wreg^{-1/2})^{-1}}\leq \frac{1}{1-\norm{I_\RKHS - \Wreg^{-1/2} \EWreg \Wreg^{-1/2}}}.
\]
\end{proof}

\begin{proposition}
\label{prop:bound_left}
Let \ref{eq:RC}, \ref{eq:SD}  and \ref{eq:KE} hold for some $\rpar\in(0,2]$, $\spar\in(0,1]$ and $\epar\in[\spar,1]$. Given $\delta>0$, with probability in the i.i.d. draw of $(x_i)_{i=1}^n$ from $\im$, it holds
\[
\PP\left\{ \hnorm{ \Wreg^{-1/2} (\EWx - \Wx)\Wreg^{-1}\Cx } \leq \rate^2_n(\reg,\delta) \right\}
\geq 1-\delta,
\]
where
\begin{equation}\label{eq:reg_eta2}
\rate_n^2(\reg,\delta) = 4\,\sqrt{2\,\rcon\,\econ}\ln\frac{2}{\delta} \,\sqrt{\frac{\scon\shift^{-\spar}}{n\reg^{\spar}} + \frac{\econ\shift^{-\epar}}{n^2\reg^{\epar}}}\; \begin{cases}
(\shift\reg)^{-(\epar-\rpar)/2}&, \rpar\leq\epar,\\
\bcon^{(\rpar-\epar)/2}  &, \rpar\geq\epar.
\end{cases}
\end{equation}
\end{proposition}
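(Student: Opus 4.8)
The plan is to apply the Pinelis--Sakhanenko inequality (Proposition~\ref{prop:con_ineq_ps}) in the separable Hilbert space of Hilbert--Schmidt operators on $\RKHS$, equipped with the norm $\hnorm{\cdot}$. Setting $A(x) := \Wreg^{-1/2}[\fW(x)\otimes\fW(x)]\Wreg^{-1}\Cx$ and using $\Wx = \EE_{x\sim\im}[\fW(x)\otimes\fW(x)]$, one has $\Wreg^{-1/2}(\EWx - \Wx)\Wreg^{-1}\Cx = \tfrac1n\sum_{i\in[n]}\bigl(A(x_i) - \EE_{x\sim\im}A(x)\bigr)$, a recentred mean of i.i.d.\ Hilbert--Schmidt operators. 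With the whitened feature $\xi(x)=\Wreg^{-1/2}\fW(x)\in\RKHS^{1+p}$ of \eqref{eq:whitened}, one has the factorisation $A(x) = \bigl(\xi(x)\otimes\xi(x)\bigr)\Wreg^{-1/2}\Cx$, where $\xi(x)\otimes\xi(x)=\sum_k\xi_k(x)\otimes\xi_k(x)$ is a rank-$(1{+}p)$ operator on $\RKHS$ of operator norm at most $\norm{\xi(x)}^2_{\RKHS^{1+p}}$; submultiplicativity $\hnorm{BC}\le\norm{B}\,\hnorm{C}$ then reduces every estimate to quantities involving $\norm{\xi}_\infty$ and $\Wreg^{-1/2}\Cx$.

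To verify the moment hypothesis of Proposition~\ref{prop:con_ineq_ps} it suffices to produce an almost-sure bound $\hnorm{A(x)}\le\Lambda$ and a variance bound $\EE_{x\sim\im}\hnorm{A(x)}^2\le\sigma^2$, since then $\EE\hnorm{A(x)}^m\le\Lambda^{m-2}\EE\hnorm{A(x)}^2\le\tfrac12 m!\,\Lambda^{m-2}\sigma^2$ for all $m\ge2$. For $\Lambda$ I would combine the bound $\norm{\xi}_\infty^2\le\econ(\shift\reg)^{-\epar}$ from Lemma~\ref{lem:eff_dim_bound} (which uses \ref{eq:KE}) with the regularity condition \ref{eq:RC} in the form $\norm{\Wreg^{-1/2}\Cx}^2=\norm{\Wreg^{-1/2}\Cx^2\Wreg^{-1/2}}\le\rcon^2\sup_j\sigma_j^{2(1+\rpar)}/(\sigma_j^2+\shift\reg)$, where $(\sigma_j)$ are the singular values of the injection $\TZ$. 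For $\sigma^2$ I would pass to the trace, $\EE_{x\sim\im}\hnorm{A(x)}^2\le\norm{\xi}_\infty^2\,\tr\!\bigl(\Wreg^{-1}\Wx\,\Wreg^{-1}\Cx^2\bigr)$ using $\EE_{x\sim\im}[\xi(x)\otimes\xi(x)]=\Wreg^{-1/2}\Wx\Wreg^{-1/2}$, bound $\Cx^2\preceq\rcon^2\Wx^{1+\rpar}$ by \ref{eq:RC}, and control the remaining trace via the effective-dimension estimate $\tr(\Wreg^{-1}\Wx)\lesssim\scon(\shift\reg)^{-\spar}$ of Lemma~\ref{lem:eff_dim_bound} (which uses \ref{eq:SD}); this produces the factor $\scon(\shift\reg)^{-\spar}$ in the first summand of $\rate_n^2$ and $\econ(\shift\reg)^{-\epar}$ in the second. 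The case split in $\rate_n^2$ comes from one interpolation step used in both bounds: after extracting an $\epar$-power of $\sigma_j$ one is left with $\sigma_j^{2(\rpar-\epar)}$, which for $\rpar\ge\epar$ is bounded by $\norm{\Wx}^{\rpar-\epar}\le\bcon^{\rpar-\epar}$, while for $\rpar<\epar$ one instead trades the negative power against the regulariser through $x^{s}\le x+1$, producing the blow-up factor $(\shift\reg)^{-(\epar-\rpar)/2}$.

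Finally I would substitute the resulting $\Lambda,\sigma$ (of orders $\Lambda^2\asymp\rcon\econ^2(\shift\reg)^{-\epar}$ and $\sigma^2\asymp\rcon\econ\scon(\shift\reg)^{-\spar}$, each multiplied by the squared case factor) into Proposition~\ref{prop:con_ineq_ps} and absorb the numerical constants into the $4\sqrt{2}$ prefactor, obtaining $\hnorm{\Wreg^{-1/2}(\EWx - \Wx)\Wreg^{-1}\Cx}\le\rate_n^2(\reg,\delta)$ with probability at least $1-\delta$. I expect the delicate part to be this last round of estimates: keeping Hilbert--Schmidt versus operator-norm bookkeeping straight while carrying out the interpolation that gives the \emph{sharp} powers of $\reg,\bcon,\rcon,\scon,\econ$ in $\rate_n^2$ --- in particular the dichotomy $\rpar\lessgtr\epar$ --- rather than a cruder, looser bound.
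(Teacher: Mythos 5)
Your high-level plan is the same as the paper's: apply the Pinelis--Sakhanenko inequality (Prop.~\ref{prop:con_ineq_ps}) in the Hilbert space of Hilbert--Schmidt operators, to the recentred i.i.d.\ operators $A(x_i)-\EE A$. You also correctly identify $A(x)=\Wreg^{-1/2}(\fW(x)\otimes\fW(x))\Wreg^{-1}\Cx$, which is exactly the paper's $\xi(x)\otimes\psi(x)$ with $\xi=\Wreg^{-1/2}\fW$ and $\psi=\Cx\Wreg^{-1}\fW$. Where your plan diverges, and in fact fails to produce the stated rate, is in how $\Lambda$ and $\sigma^2$ are assembled.

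The paper's sharpness rests on a \emph{pointwise}, essentially-supremum bound on $\norm{\psi(x)}^2_{\RKHS^{1+p}}$, obtained by expanding $\psi(x)=\Cx\Wreg^{-1}\fW(x)$ in the singular basis $(h_j)$ of $\TZ$ and applying \ref{eq:RC} and \ref{eq:KE} \emph{coefficient by coefficient}:
\[
\norm{\psi(x)}^2\;\leq\;\rcon^2\sum_{j}\frac{\sigma_j^{2(1+\rpar)}}{(\sigma_j^2+\shift\reg)^2}\sum_{i}\scalarpH{\fWk{i}(x),h_j}^2
\;\leq\;\rcon^2\,\econ\cdot\max_j\Bigl[\tfrac{\sigma_j^{2+\rpar-\epar}}{\sigma_j^2+\shift\reg}\Bigr]^2,
\]
and the last $\max$ is precisely the case factor. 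So $\norm{\psi}_\infty^2\lesssim\rcon^2\econ\cdot\mathrm{case}^2$ is a \emph{constant} (times the case factor). The paper then takes $\Lambda=\norm{\xi}_\infty\norm{\psi}_\infty$ and $\sigma^2=\norm{\psi}_\infty^2\,\EE\norm{\xi}^2=\norm{\psi}_\infty^2\,\tr(\Wreg^{-1}\Wx)$, i.e.\ it essential-sup-bounds $\psi$ and \emph{averages} $\xi$. Your proposal does the opposite: you essential-sup-bound $\xi$ (so $\norm{\xi}_\infty^2\lesssim\econ(\shift\reg)^{-\epar}$ enters both $\Lambda$ and $\sigma^2$), and for $\psi$ you either average (the trace $\tr(\Wreg^{-1}\Wx\Wreg^{-1}\Cx^2)$ for $\sigma^2$) or use the operator-norm submultiplicative bound $\norm{\psi(x)}\le\norm{\Wreg^{-1/2}\Cx}\,\norm{\xi(x)}$ (for $\Lambda$). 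Both routes put an extra $(\shift\reg)^{-\epar}$ in front. Concretely, for $\rpar\geq\epar$: your $\sigma^2\lesssim\norm{\xi}_\infty^2\cdot\rcon^2\bcon^{\rpar}\tr(\Wreg^{-1}\Wx)\sim\rcon^2\econ\scon\,\bcon^{\rpar}\,(\shift\reg)^{-\spar-\epar}$, whereas the proposition needs $\sigma^2\sim\rcon\econ\scon\,\bcon^{\rpar-\epar}\,(\shift\reg)^{-\spar}$; similarly your $\Lambda^2\sim\norm{\xi}_\infty^4\norm{\Wreg^{-1/2}\Cx}^2\sim\econ^2\rcon^2\bcon^{\rpar}(\shift\reg)^{-2\epar}$, while the target is $\rcon\econ^2\bcon^{\rpar-\epar}(\shift\reg)^{-\epar}$. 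In both places you lose a factor of order $\rcon(\bcon/(\shift\reg))^{\epar}$, which blows up as $\reg\to 0$ and would degrade the final learning rate. The missing idea is therefore not the use of PS, RC, KE, or Lemma~\ref{lem:eff_dim_bound} --- you have all the right ingredients --- but the realization that one must bound $\norm{\psi(x)}$, not $\norm{\xi(x)}$, in the $\mathcal{L}^\infty$-norm, and that this requires the joint spectral computation above rather than a norm-submultiplicativity shortcut $\norm{\psi}\le\norm{\Wreg^{-1/2}\Cx}\norm{\xi}$.
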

\begin{proof}
As before, w.l.o.g. set $\shift=1$. First, recall that $\HS{\RKHS}$ equipped with $\hnorm{\cdot}$ is separable Hilbert space. Hence, we will apply Proposition \ref{prop:con_ineq_ps} for $A = \xi(x)\otimes\psi(x)$, where  $\psi(x)=\Cx\Wreg^{-1}\fW(x)\in\RKHS^{1{+}p}$. To that end, 
observe that $\EE\hnorm{A}^m {=} \EE \,[\norm{\xi(x)}^m_{\RKHS^{1{+}p}}\,\norm{\psi(x)}^m_{\RKHS^{1{+}p}} ]$, and that
\[
 \EE[\norm{\xi(x)}^m_{\RKHS^{1{+}p}}]\leq \frac{1}{2}m! \left( \reg^{-\epar/2}\,\sqrt{\econ} \right)^{m-2} \left(\sqrt{\tr(\Wreg^{-1}\Wx)} \right)^2.  
\]
Recalling Lemma \ref{lem:eff_dim_bound}, the task is to bound $\norm{\psi(x)}_{\RKHS^{1{+}p}}^2{=} \sum_{i\in[1{+}p]}\normH{\Cx\Wreg^{-1}\fWk{i}(x)}^2$.
Using \ref{eq:RC}, we have that  
\[
\norm{\psi(x)}_{\RKHS^{1{+}p}}^2{\leq}\rcon \sum_{i\in[1{+}p]}\normH{\Wx^{(1{+}\rpar)/2}\Wreg^{-1}\fWk{i}(x)}^2 {=}\rcon \sum_{j\in J}\sum_{i\in[1{+}p]}\scalarpH{\Wx^{(1{+}\rpar)/2}\Wreg^{-1}\fWk{i}(x),h_j}^2.
\]
But, since 
\[
\scalarpH{\Wx^{(1{+}\rpar)/2}\Wreg^{-1}\fWk{i}(x),h_j} = \frac{\sigma_j^{1+\rpar}}{\sigma_j^{2}+\reg}\scalarpH{\fWk{i}(x),h_j},
\]
expanding as in proof of Lemma \ref{lem:eff_dim_bound} and using \ref{eq:KE}, we have that
\[
\norm{\psi(x)}_{\RKHS^{1{+}p}}^2{\leq}\rcon\sum_{j\in J}\sum_{i\in [1{+}p]}\left[\frac{\sigma_j^{(2+\rpar-\epar)}}{\sigma_j^2+\reg}\right]^2 \frac{\scalarp{\fWk{i}(x),h_j}^2}{\sigma_j^{2}} \sigma_j^{2\epar}\leq 
\begin{cases}
\reg^{-(\epar-\rpar)}&, \rpar\leq\epar,\\
\bcon^{(\rpar-\epar)} &, \rpar\geq\epar.
\end{cases}
\] 
Therefore, we can set
\[
\Lambda^2=\rcon\econ^2\begin{cases}
\reg^{-(2\epar-\rpar)}&, \rpar\leq\epar,\\
\bcon^{(\rpar-\epar)}\reg^{-\epar} &, \rpar\geq\epar.
\end{cases}\quad\text{ and }\quad \sigma^2=\rcon\econ\scon\begin{cases}
\reg^{-(\spar+\epar-\rpar)}&, \rpar\leq\epar,\\
\bcon^{(\rpar-\epar)}\reg^{-\spar} &, \rpar\geq\epar.
\end{cases}
\]
in Proposition \ref{prop:con_ineq_ps} to obtain the bound
\end{proof}


\begin{proposition}
\label{prop:bound_left_covs}
Let \ref{eq:KE} hold for $\epar\in[\spar,1]$. Given $\delta>0$, with probability in the i.i.d. draw of $(x_i)_{i=1}^n$ from $\im$, it holds
\[
\PP\left\{ \hnorm{ \Wreg^{-1/2} (\ECx - \Cx) } \leq \rate^3_n(\reg,\delta) \right\}
\geq 1-\delta,
\]
where
\begin{equation}\label{eq:reg_eta2}
\rate_n^3(\reg,\delta) = \frac{4\,\sqrt{2\,\bcon}}{\shift}\,\ln\frac{2}{\delta}\,\sqrt{\frac{\scon\shift^{-\spar}}{n\reg^{\spar}} + \frac{\econ\shift^{-\epar}}{n^2\reg^{\epar}}}.
\end{equation}
\end{proposition}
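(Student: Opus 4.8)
The plan is to invoke the Pinelis--Sakhanenko inequality of Proposition~\ref{prop:con_ineq_ps} in the separable Hilbert space $\HS{\RKHS}$ (equipped with its Hilbert--Schmidt inner product), applied to the i.i.d.\ operators $A_i := \Wreg^{-1/2}\bigl(\fH(x_i)\otimes\fH(x_i)\bigr)$. Since $\Cx=\EE_{x\sim\im}[\fH(x)\otimes\fH(x)]$, the common mean is $\EE A=\Wreg^{-1/2}\Cx$ and $\tfrac1n\sum_{i\in[n]}A_i-\EE A=\Wreg^{-1/2}(\ECx-\Cx)$, so it remains only to verify the Bernstein-type moment condition $\EE\hnorm{A}^m\le\tfrac12 m!\,\Lambda^{m-2}\sigma^2$ for suitable $\Lambda$ and $\sigma$.

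First I would observe that $A$ is the rank-one operator $(\Wreg^{-1/2}\fH(x))\otimes\fH(x)$, hence $\hnorm{A}=\normH{\Wreg^{-1/2}\fH(x)}\,\normH{\fH(x)}$. From $\fW(x)=[\sqrt\shift\,\fH(x),\fDirk{1}(x),\ldots,\fDirk{p}(x)]^\top$ we get $\shift\normH{\fH(x)}^2\le\norm{\fW(x)}^2\le\bcon$ by \ref{eq:BK}, so $\normH{\fH(x)}^2\le\bcon/\shift$ $\im$-a.s.; and since $\sqrt\shift\,\Wreg^{-1/2}\fH(x)$ is the zeroth block of the whitened feature $\xi(x)=\Wreg^{-1/2}\fW(x)$ of \eqref{eq:whitened}, we have $\normH{\Wreg^{-1/2}\fH(x)}^2\le\shift^{-1}\norm{\xi(x)}_{\RKHS^{1+p}}^2$, to which Lemma~\ref{lem:eff_dim_bound} applies: $\normH{\Wreg^{-1/2}\fH(x)}^2\le\shift^{-1}\econ(\shift\reg)^{-\epar}$ $\im$-a.s.\ and $\EE\normH{\Wreg^{-1/2}\fH(x)}^2\lesssim\shift^{-1}\scon(\shift\reg)^{-\spar}$ (up to the absolute constant appearing in that lemma). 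Chaining these estimates --- bound $\normH{\fH(x)}^m$ by $(\bcon/\shift)^{m/2}$, then pull out $\normH{\Wreg^{-1/2}\fH(x)}^{m-2}$ by its a.s.\ bound and retain one factor $\EE\normH{\Wreg^{-1/2}\fH(x)}^2$ --- yields $\EE\hnorm{A}^m\le\Lambda^{m-2}\sigma^2$ with $\sigma^2=\bcon\scon\,\shift^{-2}(\shift\reg)^{-\spar}$ and $\Lambda^2=\bcon\econ\,\shift^{-2}(\shift\reg)^{-\epar}$; since $\tfrac12 m!\ge1$ for $m\ge2$, the hypothesis of Proposition~\ref{prop:con_ineq_ps} is met.

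Substituting into \eqref{eq:con_ineq_ps} then gives, with probability at least $1-\delta$,
\[
\hnorm{\Wreg^{-1/2}(\ECx-\Cx)}\le\frac{4\sqrt2}{\sqrt n}\ln\frac2\delta\,\sqrt{\sigma^2+\frac{\Lambda^2}{n}}
=\frac{4\sqrt{2\bcon}}{\shift}\ln\frac2\delta\,\sqrt{\frac{\scon\shift^{-\spar}}{n\reg^{\spar}}+\frac{\econ\shift^{-\epar}}{n^2\reg^{\epar}}}\,,
\]
which is precisely $\rate_n^3(\reg,\delta)$, after rewriting $(\shift\reg)^{-\spar}/n=\shift^{-\spar}/(n\reg^{\spar})$ and $(\shift\reg)^{-\epar}/n^2=\shift^{-\epar}/(n^2\reg^{\epar})$. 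I do not anticipate a genuine obstacle: the argument is a one-sided (left-whitening only) specialization of the scheme behind Proposition~\ref{prop:bound_left}, and the only point requiring care is the identification of $\Wreg^{-1/2}\fH(x)$ with the zeroth block of $\xi(x)$, so that both the effective-dimension bound and the kernel-embedding bound of Lemma~\ref{lem:eff_dim_bound} can be brought to bear. It is also worth noting that, unlike in Proposition~\ref{prop:bound_left}, assumption \ref{eq:RC} is not needed here, because the right factor $\fH(x)$ is controlled directly through the boundedness assumption \ref{eq:BK} rather than through the resolvent.
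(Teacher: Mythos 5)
Your proof is correct. The paper argues differently: it transfers assumptions \ref{eq:KE} and \ref{eq:SD} from $\Wx$ to $\Cx$ (using $\normH{\fH(x)}^2\le\shift^{-1}\norm{\fW(x)}^2$ and $\Cx\preceq\shift^{-1}\Wx$), invokes an external bound on $\hnorm{\Creg^{-1/2}(\ECx-\Cx)}$ from \cite[Prop.~13]{Kostic_2023_learning}, and then converts to the $\Wreg$-whitening via the operator inequality $\Wreg=\shift\Creg-\Cxy\succeq\shift\Creg$, which gives $\norm{\Wreg^{-1/2}\Creg^{1/2}}\le\shift^{-1/2}$. You instead apply Proposition~\ref{prop:con_ineq_ps} directly to $A_i=\Wreg^{-1/2}(\fH(x_i)\otimes\fH(x_i))$, exploiting that $\sqrt{\shift}\,\Wreg^{-1/2}\fH(x)$ is the zeroth block of $\xi(x)$ so that both estimates in Lemma~\ref{lem:eff_dim_bound} apply, and you verify the moment condition by factoring $\hnorm{A}=\normH{\Wreg^{-1/2}\fH(x)}\,\normH{\fH(x)}$. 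Your route is more self-contained and mirrors the proof of Proposition~\ref{prop:bound_left}, arriving at the stated constant $4\sqrt{2\bcon}/\shift$ directly; the paper's route is shorter but relies on an external result and a reduction argument. The only loose end in your argument, which the paper's own statement also glosses over, is the replacement of the Lemma~\ref{lem:eff_dim_bound} constant $\scon^\spar/(1-\spar)$ by $\scon$; this is a bookkeeping issue shared with the source and does not affect the rate.
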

\begin{proof}
First, note that since $\norm{\fH(x)}^2_{\RKHS}\leq \shift^{-1}\norm{\fW(x)}^2_{\RKHS^{1+p}}$ and $\Cx\preceq \shift^{-1}\Wx$, \ref{eq:KE} and \ref{eq:SD} for $\Wx$ imply analogous assumptions for $\Cx$. Hence, we can apply Proposition 13 from \cite{Kostic_2023_learning}, which using the observation that $\norm{\Wreg^{-1/2}\Creg^{1/2}}^2{=} \norm{\Creg^{1/2}(\shift\Creg{-}\Cxy)^{-1}\Creg^{1/2}}{\leq} \shift^{-1}\norm{\Creg\Creg^{-1}}{=}\shift^{-1} $, completes the proof.
\end{proof}

Next, we develop concentration bounds of some key quantities used to build RRR empirical estimator. 

\subsubsection{Variance and Norm of KRR Estimator}\label{app:variance_KRR}

\begin{proposition}\label{prop:krr_norm_bound}
Let \ref{eq:RC}, \ref{eq:SD}  and \ref{eq:KE} hold for some $\rpar\in(0,2]$, $\spar\in(0,1]$ and $\epar\in[\spar,1]$. Given $\delta>0$ if $\rate^1_n(\reg,\delta)<1$, then with probability at least $1-\delta$ in the i.i.d. draw of $(x_i,y_i)_{i=1}^n$ from $\rho$
\[
\PP\left\{ \norm{\Wreg^{1/2}(\EKRR - \KRR)}\leq \frac{\rate^2_n(\reg,\delta / 3)+\rate^3_n(\reg,\delta / 3)}{1-\rate^1_n(\reg,\delta / 3)} \right\} 
\geq 1-\delta.
\]
\end{proposition}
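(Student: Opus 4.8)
The plan is to reduce everything to the three concentration estimates established above (Propositions~\ref{prop:bound_leftright},~\ref{prop:bound_left}, and~\ref{prop:bound_left_covs}) by an elementary operator identity followed by a union bound. Writing $\KRR=\Wreg^{-1}\Cx$ and $\EKRR=\EWreg^{-1}\ECx$, I would first split
\[
\EKRR-\KRR = \EWreg^{-1}(\ECx-\Cx) + \bigl(\EWreg^{-1}-\Wreg^{-1}\bigr)\Cx,
\]
and then apply the resolvent identity $\EWreg^{-1}-\Wreg^{-1}=\EWreg^{-1}(\Wreg-\EWreg)\Wreg^{-1}$ together with the cancellation of the deterministic ridge, $\Wreg-\EWreg=(\Wx+\shift\reg\Id)-(\EWx+\shift\reg\Id)=\Wx-\EWx$, to arrive at
\[
\EKRR-\KRR = \EWreg^{-1}(\ECx-\Cx) + \EWreg^{-1}(\Wx-\EWx)\Wreg^{-1}\Cx.
\]

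Next I would left-multiply by $\Wreg^{1/2}$ and insert $\Wreg^{1/2}\Wreg^{-1/2}=\Id$ immediately after $\EWreg^{-1}$ in order to expose precisely the quantities controlled by the auxiliary results:
\[
\Wreg^{1/2}(\EKRR-\KRR) = \bigl(\Wreg^{1/2}\EWreg^{-1}\Wreg^{1/2}\bigr)\Bigl[\Wreg^{-1/2}(\ECx-\Cx) + \Wreg^{-1/2}(\Wx-\EWx)\Wreg^{-1}\Cx\Bigr].
\]
Taking operator norms, using submultiplicativity and $\norm{\cdot}\leq\hnorm{\cdot}$, and then invoking \eqref{eq:bound_simetric} of Proposition~\ref{prop:bound_leftright} — which gives $\norm{\Wreg^{1/2}\EWreg^{-1}\Wreg^{1/2}}\leq(1-\rate^1_n(\reg,\delta/3))^{-1}$, and is exactly where the hypothesis $\rate^1_n(\reg,\delta)<1$ enters, guaranteeing that $\EWreg$ is invertible with the stated stability bound — together with Proposition~\ref{prop:bound_left_covs}, which gives $\hnorm{\Wreg^{-1/2}(\ECx-\Cx)}\leq\rate^3_n(\reg,\delta/3)$, and Proposition~\ref{prop:bound_left}, which gives $\hnorm{\Wreg^{-1/2}(\EWx-\Wx)\Wreg^{-1}\Cx}\leq\rate^2_n(\reg,\delta/3)$, a union bound over the three corresponding events, each of probability at least $1-\delta/3$, yields the claimed inequality with probability at least $1-\delta$.

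The argument is essentially bookkeeping: all the probabilistic content is already packaged in the cited propositions, so there is no genuine obstacle. The only points deserving care are (i) verifying that the deterministic ridge $\shift\reg\Id$ cancels in $\Wreg-\EWreg$, so that the perturbation appearing is $\Wx-\EWx$ and matches the hypotheses of Propositions~\ref{prop:cros_cov_bound}--\ref{prop:bound_left}; and (ii) placing the fractional powers $\Wreg^{\pm1/2}$ so that every factor in the final product is a quantity for which a high-probability bound is available. With those in place the proof closes immediately.
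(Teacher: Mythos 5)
Your argument is correct and is essentially the paper's own proof: you arrive at the same decomposition $\Wreg^{1/2}(\EKRR-\KRR)=\bigl(\Wreg^{1/2}\EWreg^{-1}\Wreg^{1/2}\bigr)\bigl[\Wreg^{-1/2}(\ECx-\Cx)+\Wreg^{-1/2}(\Wx-\EWx)\Wreg^{-1}\Cx\bigr]$ (the paper reaches it by factoring out $\EWreg^{-1}$ and inserting $\pm\Cx$, you reach it via the resolvent identity $\EWreg^{-1}-\Wreg^{-1}=\EWreg^{-1}(\Wreg-\EWreg)\Wreg^{-1}$, but these are the same algebra), and you then invoke the same three concentration results with a $\delta/3$ union bound. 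If anything your writeup is slightly cleaner: you explicitly cite Proposition~\ref{prop:bound_left_covs} for the $\rate^3_n$ term, whereas the paper's proof sketch omits that reference.
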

\begin{proof}
Note that $\Wreg^{1/2}(\EKRR - \KRR) = \Wreg^{1/2}(\EWreg^{-1}\ECx - \Wreg^{-1}\Cx)$, and, hence,
\begin{align}
\Wreg^{1/2}(\EKRR - \KRR) & = \Wreg^{1/2}\EWreg^{-1}(\ECx - \EWreg\Wreg^{-1}\Cx \pm \Cx) \nonumber \\
& = \Wreg^{1/2}\EWreg^{-1}\Wreg^{1/2}\left(\Wreg^{-1/2}(\ECx-\Cx) - \Wreg^{-1/2}(\EWx-\Wx)\Wreg^{-1}\Cx\right). \label{eq:decomp_krr}
\end{align}

Thus, taking the norm and using $\norm{\Creg^{-1}\Cxy}\leq \rcon\, \sigma_1^{\rpar-1}(\TS)$ with the Propositions \ref{prop:bound_left} and \ref{prop:bound_leftright} we prove the first bound. 
\end{proof}

\subsubsection{Variance of Singular Values}\label{app:variance_svals}

In this section we prove concentration of singular values computed in Theorem \ref{thm:main_methods}, a necessary step to derive learining rates for RRR estimator.

\begin{proposition}\label{prop:svals_bound}
Let \ref{eq:RC}, \ref{eq:SD}  and \ref{eq:KE} hold for some $\rpar\in(0,2]$, $\spar\in(0,1]$ and $\epar\in[\spar,1]$. Let $B=\Wreg^{-1/2} \Cx$ and $\EB = \EWreg^{-1/2} \ECx$. Given $\delta>0$ if $\rate^1_n(\reg,\delta/5)<1$, then with probability at least $1-\delta$ in the i.i.d. draw of $(x_i)_{i=1}^n$ from $\im$
\begin{equation}\label{eq:op_B}
\abs{\sigma_i^2(\EB){-}\sigma_i^2(\TB)}{ \leq}\norm{\EB^*\EB -\TB^*\TB} {\leq} \rate^4_n(\reg,\delta / 3),    
\end{equation}
where 
\begin{equation}\label{eq:rate_op_B}
\rate^4_n(\reg,\delta / 3)= (\rate^2_n(\reg,\delta / 3){+}\rate^3_n(\reg,\delta / 3))\Big(\frac{1}{\shift} {+} \frac{\rate^2_n(\reg,\delta / 3){+}\rate^3_n(\reg,\delta / 3)}{1-\rate^1_n(\reg,\delta / 3)} \Big).
\end{equation}
\end{proposition}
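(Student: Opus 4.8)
\emph{Proof proposal.} The plan is to establish the two inequalities of \eqref{eq:op_B} separately. For the left inequality I would invoke Weyl's perturbation theorem: both $\EB^*\EB$ (finite rank) and $\TB^*\TB=\Cx\Wreg^{-1}\Cx$ (trace class, since $\Cx$ is) are self-adjoint and positive semi-definite, and $\sigma_i^2(\EB)=\lambda_i(\EB^*\EB)$, $\sigma_i^2(\TB)=\lambda_i(\TB^*\TB)$; the min--max characterisation of eigenvalues then gives $|\lambda_i(\EB^*\EB)-\lambda_i(\TB^*\TB)|\le\norm{\EB^*\EB-\TB^*\TB}$ for all $i$. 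Hence the whole task reduces to bounding the operator norm $\norm{\EB^*\EB-\TB^*\TB}$.

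The algebraic heart of the argument is the identity $\EB^*\EB=\ECx\,\EWreg^{-1}\ECx=\ECx\,\EKRR$ and $\TB^*\TB=\Cx\,\Wreg^{-1}\Cx=\Cx\,\KRR$, where $\EKRR=\EWreg^{-1}\ECx$ and $\KRR=\Wreg^{-1}\Cx$ are precisely the empirical and population KRR estimators of Section~\ref{sec:erm}. Telescoping then gives
\begin{equation*}
\EB^*\EB-\TB^*\TB=(\ECx-\Cx)\,\KRR+(\ECx-\Cx)(\EKRR-\KRR)+\Cx\,(\EKRR-\KRR),
\end{equation*}
and, after inserting $\Wreg^{\pm 1/2}$ at the appropriate slots, every summand is expressed through the three quantities $\Wreg^{-1/2}(\ECx-\Cx)$, $\Wreg^{1/2}(\EKRR-\KRR)$ and the self-adjoint operator $\TB=\Wreg^{-1/2}\Cx$ (together with $\Wreg^{-1/2}\ECx$, controlled by $\norm{\TB}+\rate^3_n$).

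Next I would plug in the already available estimates, each valid on an event of probability at least $1-\delta/3$: $\norm{\Wreg^{-1/2}(\ECx-\Cx)}\le\hnorm{\Wreg^{-1/2}(\ECx-\Cx)}\le\rate^3_n(\reg,\delta/3)$ by Proposition~\ref{prop:bound_left_covs}; $\norm{\Wreg^{1/2}(\EKRR-\KRR)}\le(\rate^2_n(\reg,\delta/3)+\rate^3_n(\reg,\delta/3))/(1-\rate^1_n(\reg,\delta/3))$ by Proposition~\ref{prop:krr_norm_bound} (which needs $\rate^1_n<1$, supplied by Proposition~\ref{prop:bound_leftright}); and the deterministic bound $\norm{\TB}=\norm{\Wreg^{-1/2}\Cx}\le C\shift^{-1}$, which follows from $\Cx\preceq\shift^{-1}\Wreg$ (true because $\Wx=\shift\Cx-\Cxy$ with $\Cxy\preceq0$) together with $\norm{\Cx}\le\bcon\shift^{-1}$, itself a consequence of \ref{eq:BK} via $\norm{\fH(x)}^2\le\shift^{-1}\norm{\fW(x)}^2$. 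Using $\rate^3_n\le(\rate^2_n+\rate^3_n)/(1-\rate^1_n)$ to pull out a single factor $\rate^2_n+\rate^3_n$, and intersecting the three events, produces the claimed $\rate^4_n(\reg,\delta/3)$ up to the routine bookkeeping of numerical constants.

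I expect the main obstacle to be the mismatch between the empirical whitening $\EWreg^{-1/2}$ implicit in $\EB$ and the population whitening $\Wreg^{-1/2}$ used throughout the bias analysis: $\EWreg^{-1/2}$ and $\Wreg^{-1/2}$ cannot be compared head-on, so the estimate must be routed through the self-adjoint ``sandwich'' $\Wreg^{1/2}\EWreg^{-1}\Wreg^{1/2}$, bounded by $(1-\rate^1_n)^{-1}$ through \eqref{eq:bound_simetric}, and through the operator concentration of $\Wreg^{-1/2}(\EWx-\Wx)\Wreg^{-1/2}$; keeping the nested products of relative errors organised so that they collapse into the compact form of $\rate^4_n$ is the fiddly part.
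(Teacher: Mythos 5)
Your strategy matches the paper's: Weyl's inequality reduces the problem to bounding $\norm{\EB^*\EB-\TB^*\TB}$, and then an algebraic decomposition of $\ECx\EKRR-\Cx\KRR$ is controlled by inserting $\Wreg^{\pm 1/2}$ and invoking Propositions~\ref{prop:bound_leftright}, \ref{prop:bound_left}, \ref{prop:bound_left_covs} and \ref{prop:krr_norm_bound}. The one substantive difference is the telescoping you choose. You use the plain first-order expansion
\[
\EB^*\EB-\TB^*\TB=(\ECx-\Cx)\KRR+(\ECx-\Cx)(\EKRR-\KRR)+\Cx(\EKRR-\KRR),
\]
which is algebraically correct, but the third term $\Cx(\EKRR-\KRR)=\TB^*\cdot\Wreg^{1/2}(\EKRR-\KRR)$ picks up both the factor $\norm{\TB}\lesssim \shift^{-1}$ and the factor $(1-\rate^1_n)^{-1}$ hidden inside Proposition~\ref{prop:krr_norm_bound}. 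Your final bound is therefore of the form
\[
\frac{\rate^3_n}{\shift}+\frac{\rate^2_n+\rate^3_n}{\shift(1-\rate^1_n)}+\frac{\rate^3_n(\rate^2_n+\rate^3_n)}{1-\rate^1_n},
\]
which is only $\lesssim\rate^4_n/(1-\rate^1_n)$, not $\rate^4_n$ itself. The paper avoids this extra $(1-\rate^1_n)^{-1}$ on the $\shift^{-1}$ term by re-expanding $\EKRR-\KRR$ via \eqref{eq:decomp_krr} and regrouping into
\[
\EB^*\EB-\TB^*\TB=(\KRR)^*(\ECx-\Cx)+(\ECx-\Cx)\KRR-(\KRR)^*(\EWx-\Wx)\KRR+(M-N\KRR)^*\,\Wreg^{1/2}\EWreg^{-1}\Wreg^{1/2}\,(M-N\KRR),
\]
where $M=\Wreg^{-1/2}(\ECx-\Cx)$, $N=\Wreg^{-1/2}(\EWx-\Wx)$. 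This decomposition is manifestly self-adjoint and \emph{separates} the terms that are linear in the deviations (each carrying only $\shift^{-1}$) from the symmetric quadratic sandwich, which alone absorbs the full $(1-\rate^1_n)^{-1}$, producing exactly $\frac{\rate^2_n+\rate^3_n}{\shift}+\frac{(\rate^2_n+\rate^3_n)^2}{1-\rate^1_n}=\rate^4_n$. So your proof would give a bound of the correct order but would need the extra regrouping step to match the stated constant; you anticipated this ("the fiddly part") but did not carry it out. Two small side remarks: $\TB=\Wreg^{-1/2}\Cx$ is not self-adjoint (the operators $\Wreg$ and $\Cx$ do not commute in general), though this plays no role in the argument; and your deterministic bound $\norm{\TB}\le C\shift^{-1}$ with $C=\sqrt{\bcon}$ carries a constant that the paper implicitly suppresses as well, so this is not an error on your side.
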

\begin{proof}
We start from the Weyl's inequalities for the square of singular values
\[
\abs{\sigma_i^2(\EB)-\sigma_i^2(\TB)}\leq \norm{\EB^*\EB -\TB^*\TB}, \;i\in[n].
\]
But, since,
\[
\EB^*\EB -\TB^*\TB = \ECx^*\EWreg^{-1}\ECx - \Cx^*\Wreg^{-1}\Cx = (\ECx-\Cx)^*\EWreg^{-1}\ECx + \Cx^*\Wreg^{-1}(\ECx - \Cx) +  \Cx^*(\EWreg^{-1} - \Wreg^{-1})\ECx
\]
denoting $M = \Wreg^{-1/2}(\ECx - \Cx)$, $N = \Wreg^{-1/2}(\EWx - \Wx)$  and  $R=\Wreg^{1/2}(\EKRR - \KRR)$, we have
\begin{align*}
\EB^*\EB -\TB^*\TB  & = \TB^*M  + M^*\Creg^{1/2}\EKRR -  \TB^*N\EKRR = \TB^*M +  (M^*\Creg^{1/2}-\TB^*N)(\EKRR \pm \KRR) \\
& = \TB^*M +  M^*\TB - \TB^*N\KRR + (M^*-\TB^*N\Wreg^{-1/2}) R\\
& = (\KRR)^*(\ECx - \Cx) {+}  (\ECx {-} \Cx)\KRR {-} (\KRR)^*(\EWx {-} \Wx)\KRR {+} (M^*+(\KRR)^*N^*) R.
\end{align*}
Therefore, recalling that, due to \eqref{eq:decomp_krr}, $R = \Wreg^{1/2}\EWreg^{-1}\Wreg^{1/2} (M-N\KRR)$, we conclude
\begin{align}
    \EB^*\EB -\TB^*\TB = & (\KRR)^*(\ECx {-} \Cx) {+}  (\ECx {-} \Cx)\KRR {-} (\KRR)^*(\EWx - \Wx)\KRR \nonumber \\
    & + (M-N\KRR)^* \Wreg^{1/2}\EWreg^{-1}\Wreg^{1/2}(M-N\KRR). \label{eq:op_B_sq}
\end{align}

Next, observe that
\begin{itemize}
\item $\norm{(\ECx {-} \Cx)\KRR}{\leq} \norm{(\ECx {-} \Cx)\Wreg^{1/2}}\norm{\Wreg^{1/2}\Cx}{\leq} \shift^{-1}\norm{\Wreg^{1/2}(\ECx {-} \Cx)}$ is bounded by Proposition \ref{prop:bound_left_covs}
\item $\norm{M{-}N\KRR}\leq \norm{\Wreg^{-1/2}(\ECx {-} \Cx)}+\norm{\Wreg^{-1/2}(\EWx{ -} \Wx)\Wreg^{-1}\Cx}$ is bounded by Propositions \ref{prop:bound_left} and \ref{prop:bound_left_covs},
\item $\norm{\KRR^*(\EWx {-} \Wx)\KRR}{\leq} \shift^{-1} \norm{\Wreg^{-1/2}(\EWx{ - }\Wx)\Wreg^{-1}\Cx}$ is bounded by Proposition \ref{prop:bound_left}.
\end{itemize}

Therefore, using additionally Proposition \ref{prop:bound_leftright} result follows.
\end{proof}

Remark that to bound singular values we can rely on the fact
\[
\abs{\sigma_i(\EB)-\sigma_i(\TB)} = \frac{\abs{\sigma_i^2(\EB)-\sigma_i^2(\TB)}}{\sigma_i(\EB)+\sigma_i(\TB)} \leq \frac{\abs{\sigma_i^2(\EB)-\sigma_i^2(\TB)}}{ \sigma_i(\EB)\vee \sigma_i(\TB)}.  
\]

\subsubsection{Variance of RRR Estimator}\label{app:variance_RRR}

Recalling the notation $\TB:=\Wreg^{-1/2} \Cx$ and $\EB := \EWreg^{-1/2} \ECx$, let denote $\TP_r$ and $\EP_r$ denote the orthogonal projector onto the subspace of leading $r$ right singular vectors of $\TB$ and $\EB$, respectively. Then we have $\SVDr{\TB} = \TB\TP_r$ and $\SVDr{\EB} = \EB\EP_r$, and, hence $\RRR = \KRR\TP_r$ and $\ERRR = \EKRR\EP_r$.

\begin{proposition}\label{prop:rrr_variance}
Let \ref{eq:RC}, \ref{eq:SD} and \ref{eq:KE} hold for some $\rpar\in(0,2]$, $\spar\in(0,1]$ and  $\epar\in[\spar,1]$. Given $\delta>0$ and $\reg>0$, if $\rate_n^1(\reg,\delta)<1$, then with probability at least $1-\delta$ in the i.i.d. draw of $(x_i)_{i=1}^n$ from $\im$,
\begin{equation}\label{eq:variance_bound_rrr}
\norm{\TZ(\RRR - \ERRR)} \leq \frac{\rate_n^2(\reg,\delta/3) + \rate_n^3(\reg,\delta/3) }{1-\rate_n^1(\reg,\delta/3)}+  \frac{\sigma_1(\TB)}{\sigma_r^2(\TB) -\sigma_{r+1}^2(\TB)}\rate^4_n(\reg,\delta/3).
\end{equation}
\end{proposition}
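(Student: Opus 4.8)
\emph{Proof sketch.} The plan is to reduce everything to the regularized covariance norm and then split the error into a spectral-projector perturbation term and a KRR perturbation term. First I would use $\Wx\preceq\Wreg$ to get $\norm{\TZ(\RRR-\ERRR)}\le\norm{\Wreg^{1/2}(\RRR-\ERRR)}$, and recall that, writing $\TB=\Wreg^{-1/2}\Cx$, $\EB=\EWreg^{-1/2}\ECx$ and letting $\TP_r$, $\EP_r$ be the orthogonal projectors onto the leading $r$ eigenspaces of $\TB^*\TB$ and $\EB^*\EB$ (equivalently, onto the top-$r$ right singular subspaces), one has $\Wreg^{1/2}\RRR=\TB\TP_r$, $\Wreg^{1/2}\ERRR=\Wreg^{1/2}\EKRR\EP_r$, and $\TB=\Wreg^{1/2}\KRR$. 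Inserting $\pm\,\TB\EP_r$ then gives the decomposition
\[
\Wreg^{1/2}(\RRR-\ERRR)=\TB\,(\TP_r-\EP_r)\;-\;\Wreg^{1/2}(\EKRR-\KRR)\,\EP_r .
\]

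For the second summand, since $\EP_r$ is a contraction its norm is at most $\norm{\Wreg^{1/2}(\EKRR-\KRR)}$, which Proposition~\ref{prop:krr_norm_bound} bounds by $\bigl(\rate^2_n(\reg,\delta/3)+\rate^3_n(\reg,\delta/3)\bigr)/\bigl(1-\rate^1_n(\reg,\delta/3)\bigr)$. For the first summand I would bound $\norm{\TB(\TP_r-\EP_r)}\le\sigma_1(\TB)\,\norm{\TP_r-\EP_r}$ and then apply the spectral-projector perturbation bound of Proposition~\ref{prop:spec_proj_bound} to the compact self-adjoint operators $\TB^*\TB$ and $\EB^*\EB$, whose top-$r$ spectral gap equals $\sigma_r^2(\TB)-\sigma_{r+1}^2(\TB)$: this yields $\norm{\TP_r-\EP_r}\le\norm{\EB^*\EB-\TB^*\TB}\big/\bigl(\sigma_r^2(\TB)-\sigma_{r+1}^2(\TB)\bigr)$, an inequality that stays trivially valid when the numerator exceeds the gap, since $\norm{\TP_r-\EP_r}\le1$. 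The operator perturbation $\norm{\EB^*\EB-\TB^*\TB}$ is in turn controlled by $\rate^4_n(\reg,\delta/3)$ through Proposition~\ref{prop:svals_bound}. Combining the two bounds gives exactly \eqref{eq:variance_bound_rrr}.

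The only delicate point is the probability accounting. The estimates of Propositions~\ref{prop:krr_norm_bound} and~\ref{prop:svals_bound} are both deduced from the same three concentration events — those of Propositions~\ref{prop:bound_left}, \ref{prop:bound_left_covs} and~\ref{prop:bound_leftright}, each invoked at confidence level $\delta/3$ — so the two bounds hold \emph{simultaneously} on a single event of probability at least $1-\delta$, and no further union bound is needed; the hypothesis $\rate^1_n(\reg,\delta)<1$ (equivalently, the analogous condition at the sublevels used inside those propositions) guarantees the denominators are positive, and the assumed positivity of $\sigma_r^2(\TB)-\sigma_{r+1}^2(\TB)$ makes the projector bound meaningful. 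I expect the main obstacle to be purely organizational: threading the constants so that the $\rate^i_n$ appearing here match those already fixed in the definition of $\rate^4_n$ in Proposition~\ref{prop:svals_bound}; once that is done the argument is a routine telescoping plus Davis--Kahan. $\Box$
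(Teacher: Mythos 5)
Your proof is correct, and it uses a cleaner decomposition than the one in the paper. The paper writes
\[
\Wreg^{1/2}(\RRR - \ERRR) = (\Wreg^{1/2}\EWreg^{-1}\Wreg^{1/2})\bigl( \Wreg^{-1/2}(\EWx-\Wx)\KRR\TP_r + \Wreg^{-1/2}(\ECx-\Cx)\EP_r + \TB (\EP_r-\TP_r)  \bigr),
\]
i.e.\ it factors out $\Wreg^{1/2}\EWreg^{-1}\Wreg^{1/2}$ (the source of the $1/(1-\rate_n^1)$ factor via Prop.~\ref{prop:bound_leftright}) in front of \emph{all three} terms, and then bounds the first two via Props.~\ref{prop:bound_left}, \ref{prop:bound_left_covs} and the third via Props.~\ref{prop:spec_proj_bound}, \ref{prop:svals_bound}. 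Taken at face value, this yields $\tfrac{1}{1-\rate_n^1}\bigl[\rate_n^2+\rate_n^3+\sigma_1(\TB)\tfrac{\rate_n^4}{\sigma_r^2-\sigma_{r+1}^2}\bigr]$, which is slightly weaker than the proposition's display because the projector term also picks up the $1/(1-\rate_n^1)$ factor. Your telescoping $\Wreg^{1/2}(\RRR-\ERRR)=\TB(\TP_r-\EP_r)-\Wreg^{1/2}(\EKRR-\KRR)\EP_r$ avoids that prefactor on the projector term and, after invoking Prop.~\ref{prop:krr_norm_bound} as a black box, produces exactly the displayed bound. So your route is both more modular (it reuses the KRR variance bound wholesale rather than re-expanding it) and marginally tighter. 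Two minor remarks: the paper's stated hypotheses are inconsistent about whether the smallness condition should be on $\rate_n^1(\reg,\delta)$ or $\rate_n^1(\reg,\delta/3)$ (and Prop.~\ref{prop:svals_bound} even writes $\delta/5$), a sloppiness you correctly flag with "the analogous condition at the sublevels"; and your probability accounting is right because Props.~\ref{prop:krr_norm_bound} and~\ref{prop:svals_bound} are both consequences of the same three $\delta/3$-level concentration events (Props.~\ref{prop:bound_left}, \ref{prop:bound_left_covs}, \ref{prop:bound_leftright}), so a single intersection of probability at least $1-\delta$ covers both conclusions without an extra union bound.
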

\begin{proof}
Start by observing that $\norm{\TZ(\RRR - \ERRR)} \leq \norm{\Wreg^{1/2}(\RRR - \ERRR)} $ and
\begin{align*}
\Wreg^{1/2}(\RRR - \ERRR) = & (\Wreg^{1/2}\EWreg^{-1}\Wreg^{1/2})\cdot \nonumber\\
& \left( \Wreg^{-1/2}(\EWx-\Wx)\KRR\TP_r + \Wreg^{-1/2}(\ECx-\Cx)\EP_r + \TB (\EP_r-\TP_r)  \right).
\end{align*}
Using that the norm of orthogonal projector $\EP$ is bounded by one and applying Propositions \ref{prop:bound_leftright} and \ref{prop:bound_left} together with Propositions \ref{prop:spec_proj_bound} and \ref{prop:svals_bound} completes the proof.
\end{proof}

\subsection{Proof of Theorem \ref{thm:error_bound}}

\subsubsection{Operator Norm Error Bounds}\label{app:error_bound}

Summarising previous sections, in order to prove Theorem \ref{thm:error_bound}, we just need to analyse the bounds $\rate_n^1$, $\rate_n^2$ and $\rate_n^3$. Not that we fix the hyperparameter $\shift>0$, which affects the constants, but need to chose the decay rate of Tikhonov regularization parameter $\reg>0$ to obtain balancing of bias and variance in the generalization bounds.  

Let us first assume regime $\rpar\geq\epar$, which covers "well-specified learning" when non-trivial eigenfunctions of $\generator$ are inside $\RKHS$. Since $\rpar\geq\tau$ and $\spar\leq\epar$, we have that for large enough $n$ one has 
\begin{equation}\label{eq:rates_asymmp1}
\rate_n^1(\reg,\delta)\lesssim \frac{n^{-1/2}}{\reg^{\epar/2}}\ln\delta^{-1}\quad\text{ and }\quad \rate_n^i(\reg,\delta)\lesssim \left(\frac{n^{-1/2}}{\reg^{\spar/2}} \vee \frac{n^{-1}}{\reg^{\epar/2}}\right)\ln\delta^{-1},\;i=2,3,4.
\end{equation}
But, since the bias term is  $\lesssim\reg^{\rpar/2}$ and the slow term from Proposition \ref{prop:rrr_variance} is $1/\sqrt{n\reg^{\beta}}$, we can set $\reg_n = n^{-\frac{1}{\rpar+\spar}}$ and obtain 
\[
\reg_n^{\rpar/2} = \frac{n^{-1/2}}{\reg_n^{\spar/2}} = n^{-\frac{\rpar}{2 (\rpar+\spar)}},\quad\text{ and }\quad \frac{n^{-1/2}}{\reg_n^{\epar/2}} = n^{-\frac{\rpar+\spar-\epar}{2 (\rpar+\spar)}},
\]
which, due to $\rpar\geq\epar$, implies 
\[
 \lim_{n\to\infty}\rate^1_n(\reg_n,\delta / 3) = \lim_{n\to\infty}\rate^2_n(\reg_n,\delta / 3) =\lim_{n\to\infty}\rate^3_n(\reg,\delta / 3) =\lim_{n\to\infty}\rate^4_n(\reg,\delta / 3)= 0.
\]

Therefore, for this choice of regularization parameter Equation \eqref{eq:error_decomp} with Propositions \ref{prop:app_bound}, \ref{prop:rrr_bias} and \ref{prop:rrr_variance}  assure that
\begin{equation}\label{eq:error_rate_1}
\error(\ERRR) - \sigma_{r+1}(\TB) \leq \error(\ERRR) - \sqrt{\lambda^\star_{r+1}} \lesssim n^{-\frac{\rpar}{2 (\rpar+\spar)}}.    
\end{equation}

Now, let us consider the more difficult to learn case $\rpar<\epar$ when eignefunctions of the generator have only weaker norms than the RKHS one. Then, for large enough $n$ bounds in \eqref{eq:rates_asymmp1} hold for $i=3$, but 
\begin{equation}\label{eq:rates_asymmp2}
\rate_n^2(\reg,\delta)\vee\rate_n^4(\reg,\delta)\lesssim \left(\frac{n^{-1/2}}{\reg^{(\spar+\epar-\rpar)/2}} \vee \frac{n^{-1}}{\reg^{(2\epar-\rpar)/2}}\right)\ln\delta^{-1}.  
\end{equation}
Then, by balancing the slow terms with bias $\reg^{\alpha/2}$, we set $\reg_n = n^{-\frac{1}{\epar+\spar}}$ and obtain
\[
\reg_n^{\rpar/2} = \frac{n^{-1/2}}{\reg_n^{(\epar+\spar-\rpar)/2}} = n^{-\frac{\rpar}{2 (\epar+\spar)}},\quad\text{ and }\quad \frac{n^{-1/2}}{\reg_n^{\epar/2}} = n^{-\frac{\spar}{2(\epar+\spar)}},
\]
which, since $\epar\geq\spar$, implies
\[
 \lim_{n\to\infty}\rate^1_n(\reg_n,\delta / 3) = \lim_{n\to\infty}\rate^2_n(\reg_n,\delta / 3) =\lim_{n\to\infty}\rate^3_n(\reg,\delta / 3) =\lim_{n\to\infty}\rate^4_n(\reg,\delta / 3)= 0,
\]
and we obtain
\begin{equation}\label{eq:error_rate_2}
\error(\ERRR) - \sigma_{r+1}(\TB) \leq \error(\ERRR) - \lambda^\star_{r+1} \lesssim n^{-\frac{\rpar}{2 (\epar+\spar)}}.    
\end{equation}

Therefore, denoting 
\begin{equation}\label{eq:final_rate}
\reg_n\asymp
\begin{cases}
n^{-\frac{1}{\epar+\spar}} &, \rpar\leq\epar,\\
n^{-\frac{1}{\rpar+\spar}} &, \rpar\geq\epar,
\end{cases}   
\quad\text{ and }\quad\rate^\star_n=\begin{cases}
n^{-\frac{\rpar}{2 (\epar+\spar)}} &, \rpar\leq\epar,\\
n^{-\frac{\rpar}{2 (\rpar+\spar)}} &, \rpar\geq\epar.
\end{cases}   
\end{equation}
as a consequence the operator norm error bound in Theorem \ref{thm:error_bound} holds, and we have the following result on the estimation of singular values of $\RTZ$, that is $\lambda^\star_i$s by singular values of $\EB$, that is $\widehat{\sigma}_i$s from Theorem \ref{thm:main_methods}. 

\begin{proposition}\label{prop:svals_rate}
Let \ref{eq:RC}, \ref{eq:SD} and \ref{eq:KE} hold for some $\rpar\in(0,2]$ and $\spar\in(0,1]$ and $\epar\in[\spar,1]$, and define \eqref{eq:final_rate}. Then, there exists a constant $c>0$ such that for every given $\delta\in(0,1)$, large enough $n>r$ and with probability at least $1-\delta$ in the i.i.d. draw of $(x_i)_{i=1}^n$ from $\im$ for all $i\in[r]$
\begin{equation}\label{eq:svals_rate}
\abs{\widehat{\sigma}_i^2- \lambda_i^\star} \lesssim \rate^\star_n \ln\delta^{-1}
\end{equation}
\end{proposition}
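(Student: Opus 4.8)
The plan is to assemble the claim from three ingredients that are already in place: the deterministic population bound of Proposition~\ref{prop:svals_app_bound}, the perturbation bound of Proposition~\ref{prop:svals_bound}, and the rate bookkeeping (choice of $\reg_n$ and asymptotics of the $\rate^i_n$) carried out in the proof of Theorem~\ref{thm:error_bound}. The first step is to identify the $\esval_i$ produced by the generalized eigenvalue problem~\eqref{eq:GEP}: from the derivation in the proof of Theorem~\ref{thm:main_methods} these are exactly the singular values of $\EB=\EWreg^{-1/2}\ECx$, since there one shows $\ECx\EWreg^{-1}\ECx q_i=\esval_i^2 q_i$. Writing $\TB=\Wreg^{-1/2}\Cx$, the triangle inequality then gives
\[
\bigl\lvert \esval_i^2-\lambda_i^\star\bigr\rvert \;\leq\; \bigl\lvert \sigma_i^2(\EB)-\sigma_i^2(\TB)\bigr\rvert \;+\; \bigl\lvert \sigma_i^2(\TB)-\lambda_i^\star\bigr\rvert ,\qquad i\in[r].
\]

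The second (bias) term is handled deterministically by Proposition~\ref{prop:svals_app_bound}, which yields $\lvert \sigma_i^2(\TB)-\lambda_i^\star\rvert \leq \rcon^2\bcon^{\rpar/2}(\shift\reg)^{\rpar/2}$, i.e. of order $\reg^{\rpar/2}$ for fixed $\shift$. The first (variance) term is controlled by Proposition~\ref{prop:svals_bound}: on an event of probability at least $1-\delta$, valid once $\rate^1_n(\reg,\delta/5)<1$, one has $\lvert \sigma_i^2(\EB)-\sigma_i^2(\TB)\rvert \leq \rate^4_n(\reg,\delta/3)$, with $\rate^4_n$ the combination of $\rate^1_n,\rate^2_n,\rate^3_n$ from~\eqref{eq:rate_op_B}. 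It then remains to plug in $\reg=\reg_n$ from~\eqref{eq:final_rate} and invoke the asymptotics~\eqref{eq:rates_asymmp1} (for $\rpar\geq\epar$) and~\eqref{eq:rates_asymmp2} (for $\rpar<\epar$): for $n$ large enough $\rate^1_n(\reg_n,\cdot)<1$, so the probabilistic bound applies, and the dominant ``slow'' contribution in $\rate^4_n(\reg_n,\delta/3)$ matches the bias $\reg_n^{\rpar/2}$, both equal to $\rate^\star_n$ in each regime, while the ``fast'' terms are $o(\rate^\star_n)$ and each $\rate^i_n$ carries a single $\ln\delta^{-1}$ factor. Absorbing all constants depending only on $\RKHS$ and $\shift$ into one $c>0$ and taking the union bound over the finitely many events gives $\lvert\esval_i^2-\lambda_i^\star\rvert\leq c\,\rate^\star_n\ln\delta^{-1}$ uniformly in $i\in[r]$, which is~\eqref{eq:svals_rate}.

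I do not expect a genuine obstacle: every component is already established. The only points needing mild care are (i) confirming that the $\esval_i$ of~\eqref{eq:GEP} are the singular values of $\EB$ and not of a neighbouring operator — immediate from the proof of Theorem~\ref{thm:main_methods} — and (ii) verifying that the exponent in~\eqref{eq:final_rate} balances the $\reg^{\rpar/2}$ bias against the leading term of $\rate^4_n$ in \emph{both} regimes $\rpar\geq\epar$ and $\rpar<\epar$; but this balancing is exactly the computation performed just before the statement in the proof of Theorem~\ref{thm:error_bound}, so it transfers verbatim.
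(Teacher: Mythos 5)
Your proposal is correct and follows exactly the route the paper takes: the paper's one-line proof invokes Propositions~\ref{prop:svals_app_bound} and~\ref{prop:svals_bound} together with the asymptotics \eqref{eq:rates_asymmp1}--\eqref{eq:rates_asymmp2}, which is precisely the triangle-inequality decomposition into bias and variance terms that you spell out, with $\esval_i=\sigma_i(\EB)$ as in Theorem~\ref{thm:main_methods} and $\reg_n$ chosen by \eqref{eq:final_rate} to balance the two.
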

\begin{proof}
The proof is direct consequence of Propositions \ref{prop:svals_app_bound} and \ref{prop:svals_bound} using \eqref{eq:rates_asymmp1}-\eqref{eq:rates_asymmp2}.
\end{proof}

\subsection{Spectral Learning Rates}\label{app:spec_learning} 

Finally, we conclude the proof of Theorem \ref{thm:error_bound} showing the concentration of eigenpairs. Recalling Proposition \ref{prop:main_spectral}, we need to combine  the operator norm bound and metric distortion. Since, as indicated in Proposition \ref{prop:app_bound}, the population KRR estimator can grow in the operator norm whenever the regularity condition is violated $\rpar<1$, leading to possibly unbounded metric distortions w.r.t. increasing sample size, we restrict to the case $\rpar\geq1$.  

First, combining \eqref{eq:error_rate_1}-\eqref{eq:error_rate_2} and Proposition \ref{prop:svals_rate}, we have that 
\[
\error(\ERRR) \leq (\widehat{\sigma}_{r+1} \wedge \lambda_{r+1}^\star) + c\,\rate_n^\star \ln\delta^{-1}.   
\]
On the other hand, from Propositions \ref{eq:emp_met_dist} and \ref{prop:cros_cov_bound}, we have that with failure probability $\delta$
\[
(\emetdist_i)^2 - (\metdist(\erefun_i))^{-2}\leq\rate_n(\delta),
\]
which, if we can prove that $\emetdist_i$ and $\metdist(\erefun_i)$ are bounded, concludes the proof for empirical spectral biases. To that end, recall that, c.f.~Proposition \ref{prop:metric_dist}, for RRR estimator we have
\begin{align*}
\metdist(\erefun_i) & \leq \frac{\norm{\ERRR}}{\sigma_{r}(\TZ\ERRR)} \leq   \frac{\norm{\EKRR}}{\sigma_{r}(\resolvent\TZ) - \error(\ERRR)}\leq \frac{\norm{\KRR} + \norm{\EKRR-\KRR}}{\sqrt{\lambda_{r}^\star} - \error(\ERRR)}  \\
&\leq \frac{\norm{\KRR} + (\shift\reg)^{-1/2}\norm{\Wreg^{1/2}(\EKRR-\KRR)}}{\sqrt{\lambda_{r}^\star} - \error(\ERRR)}\lesssim \frac{1 + n^{-1/2}\reg^{-(\spar+1)/2}}{\sqrt{\lambda_{r}^\star} - n^{-1/2}\reg^{-\spar/2}} = \frac{1 + n^{-\frac{\rpar-1}{\rpar+\spar}}}{\sqrt{\lambda_{r}^\star} - n^{-\frac{\rpar}{\rpar+\spar}}} 
\end{align*}
where in the last inequality we have applied Propositions \ref{prop:app_bound} and \ref{prop:krr_norm_bound}. Thus, using Proposition \ref{eq:emp_met_dist}, the proof of Theorem \ref{thm:error_bound} is concluded.

\subsection{Discussion of the learning rates}\label{app:bounds_discussion}

In this section, we discuss the learning rates reported in Table \ref{tab:summary}. 
%
{Notice that although the papers we compare to employ a different risk, our comparison remains meaningful because our energy-based risk measure provides an upper bound on their risk measure. This ensures that any upper bound derived with our risk also applies to theirs.}

\begin{itemize}
\item The learning bound for IG obtained is \cite{Cabannes_2023_Galerkin} covers only pure diffusion processes (Laplacian with constant weights). Their learning rate is non-parametric and depends on the state space dimension $d$ in a counter-intuitive way $O(n^{-\frac{d}{2(d+1)}})$ in \cite[Theorem 3]{Cabannes_2023_Galerkin}, highlighting a potential limitation of their approach. In comparison, when we specify our RRR method with an RBF kernel (i.e. $\spar=0$), we achieve a much faster parametric learning rate $O(n^{-1/2})$. 
\item The recent work of \cite{Pillaud_Bach_2023_kernelized} covers Langevin processes via a kernel approach, but they derive a sub-optimal learning bound for IG of order $O(n^{-1/4})$ in \cite[Theorem 4.4]{Pillaud_Bach_2023_kernelized}. For Langevin diffusions, our RRR method with an RBF kernel achieves a faster parametric rate $O(n^{-1/2})$. Moreover, the computational complexity of their method is $O(n^3d^3)$, which limits its application in realistic molecular  dynamics scenarios. 
\item 
As for \cite[Theorem 7]{hou23c}, although they considered general diffusions, they only derived a suboptimal bound for the variance component of their risk, with an explicit dependence on the dimension of the state space. 
\end{itemize}


Finally, we note that the mentioned works lack learning guarantees for eigenfunctions and eigenvalues. Notably, their methods are prone to the spurious eigenvalue phenomenon, requiring expert manual review of each eigenpair to select plausible ones.

}

\section{Experiments}\label{app:exp}
{\bf Four well potential}
For this experiment, we used an in-house code to simulate the system. The equations of motions were discretized using the Euler-Maruyama scheme with a timestep of $10^{-4}$. RRR was fitted using 1000 points, $\shift=5$ and $\reg=10^{-5}$. The length scales used were $0.05$ and $0.5$. This experiment was reproduces 100 times leading to very small change in the estimation of the eigenfunctions. Here we report the result of one of them.
\begin{figure}
    \centering
    \includegraphics[scale=0.375]{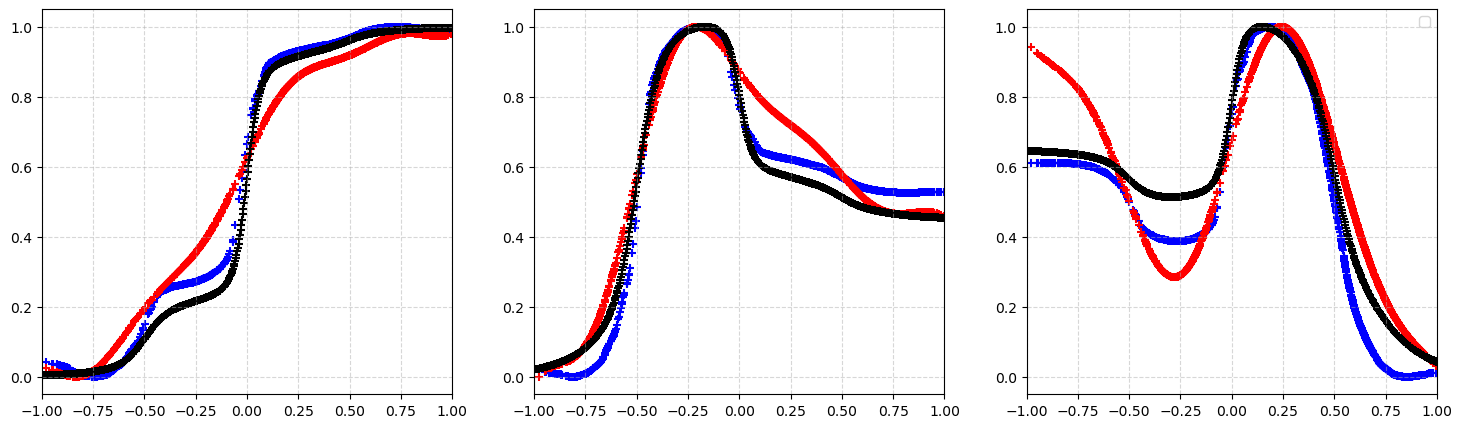}
    \caption{Results of the RRR given by our method for two different length scales (blue and red) compared with ground truth (black) for the Langevin dynamics driven by a four well one dimensional potential.}
    \label{fig:enter-label}
\end{figure}

{\bf Muller Brown}
For this experiment, we used an in-house code to simulate the system. The equations of motions were discretized using the Euler-Maruyama scheme with a timestep of $10^{-3}$ and a temperature of $2$ (arbitrary units). RRR was fitted using 2000 points, $\shift=1$ and $\reg=10^{-5}$. The length scale used was $0.6$.
\begin{figure}
    \centering
    \includegraphics[scale=0.5]{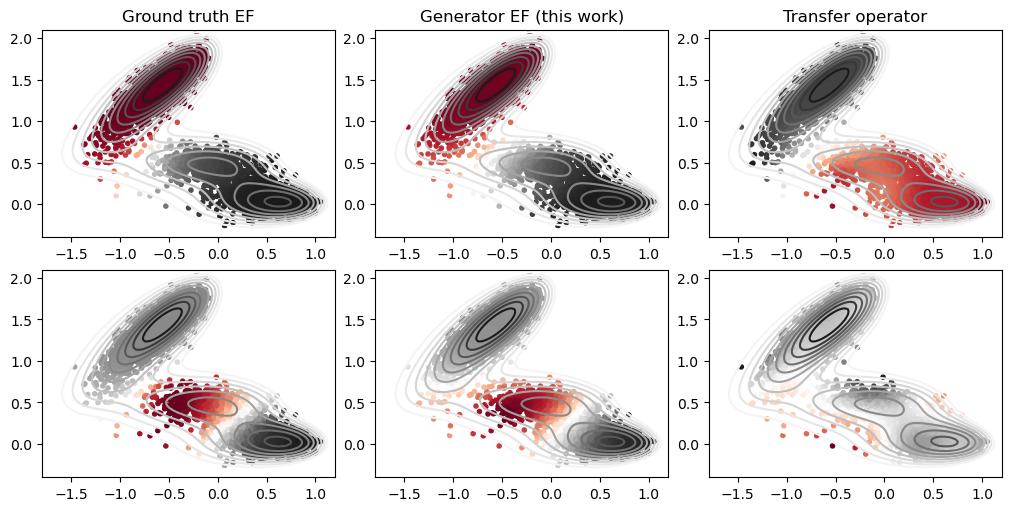}
    \caption{Results of the RRR given by our method for two different length scales (blue and red) compared with ground truth for the Langevin dynamics driven by a four well one dimensional potential. }
    \label{fig:enter-label}
\end{figure}



\begin{thebibliography}{}

\bibitem[A~Cabannes and Bach, 2024]{Cabannes_2023_Galerkin}
A~Cabannes, V. and Bach, F. (2024).
\newblock The {G}alerkin method beats graph-based approaches for spectral
  algorithms.
\newblock In {\em Proceedings of The 27th International Conference on
  Artificial Intelligence and Statistics}, volume 238 of {\em Proceedings of
  Machine Learning Research}, pages 451--459. PMLR.

\bibitem[Alexander and Giannakis, 2020]{Alexander2020}
Alexander, R. and Giannakis, D. (2020).
\newblock Operator-theoretic framework for forecasting nonlinear time series
  with kernel analog techniques.
\newblock {\em Physica D: Nonlinear Phenomena}, 409:132520.

\bibitem[Andrew et~al., 2013]{Andrew_2013_deep}
Andrew, G., Arora, R., Bilmes, J., and Livescu, K. (2013).
\newblock Deep canonical correlation analysis.
\newblock In {\em International conference on machine learning}, pages
  1247--1255. PMLR.

\bibitem[Bakry et~al., 2014]{Bakry2014}
Bakry, D., Gentil, I., and Ledoux, M. (2014).
\newblock {\em Analysis and Geometry of Markov Diffusion Operators}.
\newblock Springer.

\bibitem[Bevanda et~al., 2021]{Bevanda2021}
Bevanda, P., Beier, M., Kerz, S., Lederer, A., Sosnowski, S., and Hirche, S.
  (2021).
\newblock Koopmanizing{F}lows: {D}iffeomorphically {L}earning {S}table
  {K}oopman {O}perators.
\newblock {\em arXiv preprint arXiv.2112.04085}.

\bibitem[Bouvrie and Hamzi, 2017]{Bouvrie2017}
Bouvrie, J. and Hamzi, B. (2017).
\newblock {K}ernel {M}ethods for the {A}pproximation of {N}onlinear {S}ystems.
\newblock {\em {SIAM} Journal on Control and Optimization}, 55(4):2460--2492.

\bibitem[Brunton et~al., 2022]{Brunton2022}
Brunton, S.~L., Budi{\v{s}}i{\'{c}}, M., Kaiser, E., and Kutz, J.~N. (2022).
\newblock Modern {K}oopman {T}heory for {D}ynamical {S}ystems.
\newblock {\em {SIAM} Review}, 64(2):229--340.

\bibitem[Caponnetto and De~Vito, 2007]{caponnetto2007}
Caponnetto, A. and De~Vito, E. (2007).
\newblock Optimal rates for the regularized least-squares algorithm.
\newblock {\em Foundations of Computational Mathematics}, 7(3):331--368.

\bibitem[Das and Giannakis, 2020]{Das2020}
Das, S. and Giannakis, D. (2020).
\newblock Koopman spectra in reproducing kernel {H}ilbert spaces.
\newblock {\em Applied and Computational Harmonic Analysis}, 49(2):573--607.

\bibitem[Davis and Kahan, 1970]{DK1970}
Davis, C. and Kahan, W.~M. (1970).
\newblock The rotation of eigenvectors by a perturbation. iii.
\newblock {\em SIAM Journal on Numerical Analysis}, 7(1):1--46.

\bibitem[Fan et~al., 2021]{Fan2021}
Fan, F., Yi, B., Rye, D., Shi, G., and Manchester, I.~R. (2021).
\newblock {L}earning {S}table {K}oopman {E}mbeddings.
\newblock {\em arXiv preprint arXiv.2110.06509}.

\bibitem[Fischer and Steinwart, 2020]{Fischer2020}
Fischer, S. and Steinwart, I. (2020).
\newblock Sobolev norm learning rates for regularized least-squares algorithms.
\newblock {\em Journal of Machine Learning Research}, 21(205):1--38.

\bibitem[Haag, 2012]{Haag_2012_quantum}
Haag, R. (2012).
\newblock {\em Local quantum physics: Fields, particles, algebras}.
\newblock Springer Science \& Business Media.

\bibitem[Hogben, 2006]{HLAbook}
Hogben, L., editor (2006).
\newblock {\em Handbook of Linear Algebra}.
\newblock CRC Press, Boca Raton, FL, USA.

\bibitem[Hou et~al., 2023]{hou23c}
Hou, B., Sanjari, S., Dahlin, N., Bose, S., and Vaidya, U. (2023).
\newblock Sparse learning of dynamical systems in {RKHS}: An operator-theoretic
  approach.
\newblock In Krause, A., Brunskill, E., Cho, K., Engelhardt, B., Sabato, S.,
  and Scarlett, J., editors, {\em Proceedings of the 40th International
  Conference on Machine Learning}, volume 202 of {\em Proceedings of Machine
  Learning Research}, pages 13325--13352. PMLR.

\bibitem[Inzerilli et~al., 2023]{inzerilli2023consistent}
Inzerilli, P., Kostic, V., Lounici, K., Novelli, P., and Pontil, M. (2023).
\newblock Consistent long-term forecasting of ergodic dynamical systems.

\bibitem[Kawahara, 2016]{Kawahara2016}
Kawahara, Y. (2016).
\newblock {D}ynamic {M}ode {D}ecomposition with {R}eproducing {K}ernels for
  {K}oopman {S}pectral {A}nalysis.
\newblock In Lee, D., Sugiyama, M., Luxburg, U., Guyon, I., and Garnett, R.,
  editors, {\em Advances in Neural Information Processing Systems}, volume~29.
  Curran Associates, Inc.

\bibitem[Kessler et~al., 2012]{Kessler_2012_statistical}
Kessler, M., Lindner, A., and S{\o}rensen, M. (2012).
\newblock Statistical methods for stochastic differential equations.
\newblock {\em Monographs on Statistics and Applied Probability}, 124:7--12.

\bibitem[Klus et~al., 2020]{KLUS2020datadrivenkoopmangenerator}
Klus, S., Nüske, F., Peitz, S., Niemann, J.-H., Clementi, C., and Schütte, C.
  (2020).
\newblock Data-driven approximation of the koopman generator: Model reduction,
  system identification, and control.
\newblock {\em Physica D: Nonlinear Phenomena}, 406:132416.

\bibitem[Klus et~al., 2019]{Klus2019}
Klus, S., Schuster, I., and Muandet, K. (2019).
\newblock Eigendecompositions of transfer operators in reproducing kernel
  {H}ilbert spaces.
\newblock {\em Journal of Nonlinear Science}, 30(1):283--315.

\bibitem[Kostic et~al., 2022]{Kostic2022}
Kostic, V., Novelli, P., Maurer, A., Ciliberto, C., Rosasco, L., and Pontil, M.
  (2022).
\newblock Learning dynamical systems via {K}oopman operator regression in
  reproducing kernel hilbert spaces.
\newblock In {\em Advances in Neural Information Processing Systems}.

\bibitem[Kostic et~al., 2023a]{kostic2023sharp}
Kostic, V.~R., Lounici, K., Novelli, P., and massimiliano pontil (2023a).
\newblock Sharp spectral rates for koopman operator learning.
\newblock In {\em Thirty-seventh Conference on Neural Information Processing
  Systems}.

\bibitem[Kostic et~al., 2023b]{Kostic_2023_learning}
Kostic, V.~R., Novelli, P., Grazzi, R., Lounici, K., et~al. (2023b).
\newblock Learning invariant representations of time-homogeneous stochastic
  dynamical systems.
\newblock In {\em The Twelfth International Conference on Learning
  Representations}.

\bibitem[Kutoyants, 1984]{Kutoyants_1984_parameter}
Kutoyants, Y.~A. (1984).
\newblock Parameter estimation for stochastic processes.
\newblock {\em Berlin: Heldermann.}

\bibitem[Kutoyants, 2013]{Kutoyants_2013_statistical}
Kutoyants, Y.~A. (2013).
\newblock {\em Statistical inference for ergodic diffusion processes}.
\newblock Springer Science \& Business Media.

\bibitem[Kutz et~al., 2016]{Kutz2016}
Kutz, J.~N., Brunton, S.~L., Brunton, B.~W., and Proctor, J.~L. (2016).
\newblock {\em {D}ynamic {M}ode {D}ecomposition}.
\newblock Society for Industrial and Applied Mathematics.

\bibitem[Li et~al., 2017]{Li_2017_extended}
Li, Q., Dietrich, F., Bollt, E.~M., and Kevrekidis, I.~G. (2017).
\newblock Extended dynamic mode decomposition with dictionary learning: A
  data-driven adaptive spectral decomposition of the koopman operator.
\newblock {\em Chaos: An Interdisciplinary Journal of Nonlinear Science},
  27(10).

\bibitem[Li et~al., 2022]{Li2022}
Li, Z., Meunier, D., Mollenhauer, M., and Gretton, A. (2022).
\newblock Optimal rates for regularized conditional mean embedding learning.
\newblock In {\em Advances in Neural Information Processing Systems}.

\bibitem[Lusch et~al., 2018]{Lusch2018}
Lusch, B., Kutz, J.~N., and Brunton, S.~L. (2018).
\newblock Deep learning for universal linear embeddings of nonlinear dynamics.
\newblock {\em Nature Communications}, 9(1).

\bibitem[Mackey, 2011]{mackey2011time}
Mackey, M.~C. (2011).
\newblock {\em Time's Arrow: The origins of thermodynamic behavior}.
\newblock Courier Corporation.

\bibitem[Mardt et~al., 2018]{Mardt2018}
Mardt, A., Pasquali, L., Wu, H., and No{\'{e}}, F. (2018).
\newblock {VAMPnets} for deep learning of molecular kinetics.
\newblock {\em Nature Communications}, 9(1).

\bibitem[Minsker, 2017]{minsker2017}
Minsker, S. (2017).
\newblock On some extensions of {B}ernstein’s inequality for self-adjoint
  operators.
\newblock {\em Statistics \& Probability Letters}, 127:111--119.

\bibitem[Oksendal, 2013]{oksendal2013}
Oksendal, B. (2013).
\newblock {\em Stochastic differential equations: an introduction with
  applications}.
\newblock Springer Science \& Business Media.

\bibitem[Pascucci, 2011]{Pascucci2011}
Pascucci, A. (2011).
\newblock {\em {PDE} and {M}artingale {M}ethods in {O}ption {P}ricing}.
\newblock Springer Milan.

\bibitem[Pavliotis, 2014]{Pavliotis2014}
Pavliotis, G.~A. (2014).
\newblock {\em Stochastic Processes and Applications}.
\newblock Springer New York.

\bibitem[Pillaud-Vivien and Bach, 2023]{Pillaud_Bach_2023_kernelized}
Pillaud-Vivien, L. and Bach, F. (2023).
\newblock Kernelized diffusion maps.
\newblock In {\em The Thirty Sixth Annual Conference on Learning Theory}, pages
  5236--5259. PMLR.

\bibitem[Reed and Simon, 1980]{Reed1980}
Reed, M. and Simon, B. (1980).
\newblock {\em I: Functional Analysis}.
\newblock Academic Press.

\bibitem[Steinwart and Christmann, 2008]{Steinwart2008}
Steinwart, I. and Christmann, A. (2008).
\newblock {\em Support Vector Machines}.
\newblock Springer New York.

\bibitem[Tian and Wu, 2021]{Tian_2021_kernel}
Tian, W. and Wu, H. (2021).
\newblock Kernel embedding based variational approach for low-dimensional
  approximation of dynamical systems.
\newblock {\em Computational Methods in Applied Mathematics}, 21(3):635--659.

\bibitem[Trefethen and Embree, 2020]{TrefethenEmbree2020}
Trefethen, L.~N. and Embree, M. (2020).
\newblock {\em {S}pectra and {P}seudospectra: {T}he {B}ehavior of {N}onnormal
  {M}atrices and {O}perators}.
\newblock Princeton University Press.

\bibitem[Tropp, 2012]{tropp2012user}
Tropp, J.~A. (2012).
\newblock User-friendly tail bounds for sums of random matrices.
\newblock Technical report.

\bibitem[Tuckerman, 2023]{tuckerman2023statistical}
Tuckerman, M.~E. (2023).
\newblock {\em Statistical mechanics: theory and molecular simulation}.
\newblock Oxford university press.

\bibitem[Williams et~al., 2015]{OWilliams2015}
Williams, M.~O., , Rowley, C.~W., and Kevrekidis, I.~G. (2015).
\newblock A kernel-based method for data-driven {K}oopman spectral analysis.
\newblock {\em Journal of Computational Dynamics}, 2(2):247--265.

\bibitem[Wu and No{\'{e}}, 2019]{Wu2019}
Wu, H. and No{\'{e}}, F. (2019).
\newblock {V}ariational {A}pproach for {L}earning {M}arkov {P}rocesses from
  {T}ime {S}eries {D}ata.
\newblock {\em Journal of Nonlinear Science}, 30(1):23--66.

\bibitem[Yeung et~al., 2019]{Yeung_2019_learning}
Yeung, E., Kundu, S., and Hodas, N. (2019).
\newblock Learning deep neural network representations for koopman operators of
  nonlinear dynamical systems.
\newblock In {\em 2019 American Control Conference (ACC)}, pages 4832--4839.
  IEEE.

\bibitem[Yosida, 2012]{Yosida_2012_functional}
Yosida, K. (2012).
\newblock {\em Functional analysis}, volume 123.
\newblock Springer Science \& Business Media.

\bibitem[Zabczyk, 2020]{zabczyk2020}
Zabczyk, J. (2020).
\newblock {\em Mathematical Control Theory: An Introduction}.
\newblock Systems \& Control: Foundations \& Applications. Springer
  International Publishing.

\bibitem[Zhang et~al., 2022]{ZHANG2022diffusionNN}
Zhang, W., Li, T., and Schütte, C. (2022).
\newblock Solving eigenvalue pdes of metastable diffusion processes using
  artificial neural networks.
\newblock {\em Journal of Computational Physics}, 465:111377.

\bibitem[Zhou, 2008]{zhou2008derivative}
Zhou, D.-X. (2008).
\newblock Derivative reproducing properties for kernel methods in learning
  theory.
\newblock {\em Journal of computational and Applied Mathematics},
  220(1-2):456--463.

\bibitem[Zwald and Blanchard, 2005]{zwald2005}
Zwald, L. and Blanchard, G. (2005).
\newblock {On the Convergence of Eigenspaces in Kernel Principal Component
  Analysis}.
\newblock In {\em Advances in Neural Information Processing Systems}.

\end{thebibliography}
\end{document}